\documentclass[11pt]{article}

\usepackage[utf8]{inputenc} 
\usepackage[T1]{fontenc}    
\usepackage[margin=1in]{geometry}
\usepackage{indentfirst}

\usepackage{amsmath,amssymb,amsfonts,amsthm,mathtools}
\usepackage{nicefrac}

\usepackage[numbers]{natbib}

\usepackage[table]{xcolor} 
\usepackage{booktabs}
\usepackage{array}
\usepackage{multirow}
\usepackage{makecell}
\usepackage{arydshln}
\usepackage{rotating}

\usepackage{graphicx}
\usepackage{tikz}
\usepackage{float}
\usepackage{placeins}
\usepackage[font=footnotesize,labelformat=parens]{subcaption}

\usepackage{algorithmicx}
\usepackage{algcompatible}
\usepackage[linesnumbered,ruled,vlined]{algorithm2e}

\usepackage{enumitem}
\usepackage{listings}

\usepackage{microtype}

\usepackage{url}
\usepackage{hyperref}

\definecolor{skyblue}{HTML}{66B3FF}
\definecolor{coreflowbg}{RGB}{232,236,252}

\theoremstyle{plain}
\newtheorem{theorem}{Theorem}[section]
\newtheorem{proposition}[theorem]{Proposition}

\theoremstyle{definition}

\newtheorem{assumption}[theorem]{Assumption}
\newtheorem{remark}[theorem]{Remark}

\title{CoreFlow: Low-Rank Matrix Generative Models}
\author{Dongze Wu, Linglingzhi Zhu, Yao Xie\thanks{H. Milton Stewart School of Industrial and Systems Engineering, Georgia Institute of Technology, Atlanta, GA 30332. \texttt{<dwu381@gatech.edu>} 
\texttt{<llzzhu@gatech.edu>}
\texttt{<yao.xie@isye.gatech.edu>}}}
\date{}

\begin{document}
\maketitle
\begin{abstract}
Learning matrix-valued distributions from high-dimensional and possibly incomplete training data is challenging: ambient-space generative modeling is computationally expensive and statistically fragile when the matrix dimension is large but the sample size is limited. We propose \textit{CoreFlow}, a geometry-preserving low-rank flow model that learns shared row/column subspaces across the matrix distribution, and then trains a continuous normalizing flow only on the induced low-dimensional core. CoreFlow is designed for settings where shared low-rank matrix geometry is present, especially in high-dimensional limited-sample regimes. This separates shared matrix geometry from sample-specific variation, preserves matrix structure, and substantially improves training efficiency. The same framework also handles incomplete training matrices through masked Riemannian updates and iterative completion. Across real and synthetic benchmarks, CoreFlow substantially improves spectral and moment-level generation quality in few-sample regimes while remaining competitive in data-rich settings, even under compression to 9\% of the ambient dimension and with up to 40\% missing training entries.
\end{abstract}

\section{Introduction}

Many modern scientific and engineered datasets are naturally \emph{matrix-valued}: spatiotemporal fields in geophysics and climate reanalysis \citep{harries2020applications,store2024era5}, sensor grids \citep{ran2016tensor,wang2022spatial}, and medical imaging \citep{graff2015compressive,suetens2017fundamentals} are often represented as $M\in\mathbb{R}^{m_1\times m_2}$ arrays whose two modes encode different physical axes (e.g., space$\times$time, sensor$\times$frequency).
A core modeling goal is to learn a \emph{generative distribution} over such matrices, enabling realistic synthesis, uncertainty-aware simulation, and downstream tasks such as augmentation, denoising, and imputation.

Diffusion models \citep{ho2020denoising,song2020denoising} and flow-based generative models \citep{chen2018neural,lipmanflow2022} have become dominant for high-fidelity synthesis. However, for matrix-valued data, the ambient dimension can be tens of thousands or more, while the number of training samples may be limited. Therefore, training a diffusion or flow model in the original dimension can be not only \emph{computationally expensive}, but also statistically \emph{significantly underdetermined}: the model must estimate a complex high-dimensional distribution from too few examples.


Many matrix distributions concentrate near approximately low-rank sets: dependencies across the two modes induce shared row and column subspaces, while sample variability is expressed in a much smaller core space \citep{gupta2018matrix,mak2017information,yuchi2023bayesian}. Therefore, the intrinsic dimension can be far smaller than \(m_1m_2\). This motivates learning the shared low-rank geometry and fitting the generator only in the induced low-dimensional coordinates. Concretely, we may model each matrix as \(M_i \approx U S_i V^\top\), where \(U\) and \(V\) are shared row/column subspaces, and \(S_i\) is a sample-specific low-dimensional core. Accordingly, we position our method for settings where shared low-rank matrix geometry is present.

Real-world matrix datasets may also be incomplete due to sensor dropouts, irregular sampling, or acquisition constraints. This makes shared-geometry recovery more challenging, and a practical low-rank generative method should remain stable even when training matrices are partially observed.

To address these challenges, we propose \textit{CoreFlow}, a two-stage geometry-preserving generator for matrix distributions with shared low-rank geometry. CoreFlow first learns shared row/column subspaces, and then learns a continuous normalizing flow (CNF) only on the induced low-dimensional core distribution. Our main \emph{contributions} are: (i) \emph{Geometry-preserving low-rank matrix generation.} CoreFlow learns a shared low-rank coordinate system for the matrix distribution and trains a CNF only in the induced core space. This substantially reduces the effective generative dimension while preserving matrix structure, yielding a strong quality–efficiency tradeoff, especially in high-dimensional few-sample settings. (ii) \emph{Robust structure learning from incomplete matrices.} We develop a masked \((U,V)\) subspace estimator with alternating completion, yielding stable subspace recovery and reliable core representations under substantial missingness.

\subsection{Related Works}

\textit{Ambient- and latent-space generative modeling.} Diffusion \citep{ho2020denoising,song2020denoising} and flow-based generative models \citep{chen2018neural,lipmanflow} are powerful tools for high-fidelity synthesis and have been widely applied in areas such as statistical sampling \citep{tian2024liouville,wuannealing2025,wu2025po}, time-series generation \citep{lin2024diffusion,wu2026doflow}, and image generation \citep{peebles2023scalable,li2025back}. Most are trained in the ambient space, which can be computationally expensive and statistically fragile in high dimensions with limited samples. A related line of work improves efficiency by generating in compressed or latent representations, as in latent diffusion models \citep{rombach2022high,peebles2023scalable,zheng2025diffusion}, which typically rely on a pre-trained deep VAE. Our setting differs in two key ways: we do not rely on a generic learned latent representation, and we do not pre-train a deep VAE on possibly limited and incomplete matrices. Instead, we learn shared row and column subspaces as the geometric coordinates of the matrix distribution, and generate only in the induced core space. Closest to our setting, \citet{guo2025accelerating} studies low-dimensional diffusion through compressed sensing, but does not exploit matrix geometry, and the resulting representation may distort matrix structure. 


\textit{Generative modeling with incomplete observations.} A second line of work studies generative learning from incomplete data. MissDiff \citep{ouyang2023missdiff} enables diffusion training by masking the loss on observed entries, while other methods \citep{richardson2020mcflow,tashiro2021csdi,zheng2022diffusion,zhang2024diffputer} focus on iterative imputation with diffusion or normalizing flows. These methods are related in spirit, but primarily target missing-data imputation or ambient-space generation. Besides, they typically model the full ambient distribution, which scales poorly to high-dimensional regimes with few samples.

\textit{Generative imputation and matrix completion.}
Matrix completion is related in that it also exploits low-dimensional structure, but it targets \emph{conditional recovery} of missing entries. Existing methods include nuclear-norm and optimization approaches \citep{candes2010power,candes2012exact,koltchinskii2011nuclear,davenport2016overview,chi2019nonconvex,gulow}, as well as probabilistic models with uncertainty quantification \citep{alquier2014bayesian,mak2017information,yuchi2023bayesian,choi2024inference}. A representative example is the Singular Matrix-variate Gaussian (SMG) \citep{gupta2018matrix,mak2018maximum,mak2017information}, which uses a Gaussian low-dimensional core for completion inference. By contrast, CoreFlow targets \emph{unconditional} matrix generation: it learns shared low-rank geometry as part of generation, rather than using low-rank structure for completion or imputation.

\section{Preliminaries}

\subsection{Neural ODE and continuous normalizing flow}
\label{preliminary:flow matching}

A continuous normalizing flow (CNF) models a sample trajectory in \( \mathbb{R}^d \) by the Neural ODE \citep{chen2018neural}. Given an initial condition \( x_0 = x(0) \) at \( t=0 \), the transformation to the output \( x_1 = x(1) \) at \( t=1 \) is governed by:
\begin{equation}
\frac{dx(t)}{dt} = v_{\theta}(x(t), t), \qquad t \in [0,1],
\label{eq:ode}
\end{equation}
where \(v_{\theta}:\mathbb{R}^d\times[0,1]\to\mathbb{R}^d\) is the velocity field parametrized by a neural network. It defines an invertible map between a data distribution \(p(\cdot,0)\) at \(t=0\) and a base distribution \(p(\cdot,1)\) at \(t=1\), typically \(q:=N(0,I)\). Sampling is performed by drawing $x_1\sim q(\cdot)$ and integrating (\ref{eq:ode}) backward.

We train the CNF using Flow Matching (FM) \citep{lipmanflow2022}, which learns \(v_\theta\) by regressing it to a prescribed target velocity field \(u(x,t)\):
\begin{equation*}
\mathcal{L}_{\mathrm{FM}}
=
\mathbb{E}_{t\sim\mathcal{U}[0,1],\,x\sim p(\cdot,t)}
\bigl\|v_{\theta}(x(t),t)-u(x,t)\bigr\|_2^2.
\label{eq:fm}
\end{equation*}
Since \(u(x,t)\) is generally intractable in closed form, we use Conditional Flow Matching (CFM), which conditions on a data sample \(x_0\sim p(\cdot,0)\) and a noise sample \(x_1\sim q\), and defines an analytic reference path between them. We adopt the standard linear interpolant \citep{lipmanflow2022}:
\begin{equation*}
\phi(x_0,x_1;t)=(1-t)x_0+t x_1,
\label{eq:linear_interp}
\end{equation*}
whose reference velocity is
\begin{equation*}
\frac{d\phi}{dt}=x_1-x_0.
\label{eq:ref_velocity}
\end{equation*}
CFM then trains \(v_\theta\) by matching this reference velocity:
\begin{equation*}
\mathcal{L}_{\mathrm{CFM}}
=
\mathbb{E}_{t\sim\mathcal{U}[0,1],\,x_0\sim p(\cdot,0),\,x_1\sim q}
\left\|
v_\theta\bigl(\phi(x_0,x_1;t),t\bigr) - (x_1-x_0)
\right\|_2^2.
\label{eq:cfm}
\end{equation*}
In our method, CNF training is carried out only in the learned low-dimensional core space, substantially reducing the generative dimension while preserving geometry-aware transport.


\subsection{Optimization on the Stiefel manifold}

Low-rank matrix methods often represent row and column subspaces by matrices with orthonormal columns. Since standard Euclidean updates do not preserve this constraint, we optimize on the Stiefel manifold \citep{absil2008optimization,edelman1998geometry}, $\mathrm{St}(m,R)=\{W\in\mathbb{R}^{m\times R}:W^\top W=I_R\}.$

Given a Euclidean gradient \(G\) at \(W\in\mathrm{St}(m,R)\), we project \(G\) onto the tangent space at \(W\), take a first-order step, and retract back to the manifold. We use QR retraction, which is simple and numerically stable. Algorithm~\ref{alg:stiefel-step} summarizes this procedure as \textsc{StiefelStep}\((W,G,\eta)\).


\section{Algorithm}

\begin{figure}[h!]
    \centering
\includegraphics[width=0.9\textwidth]{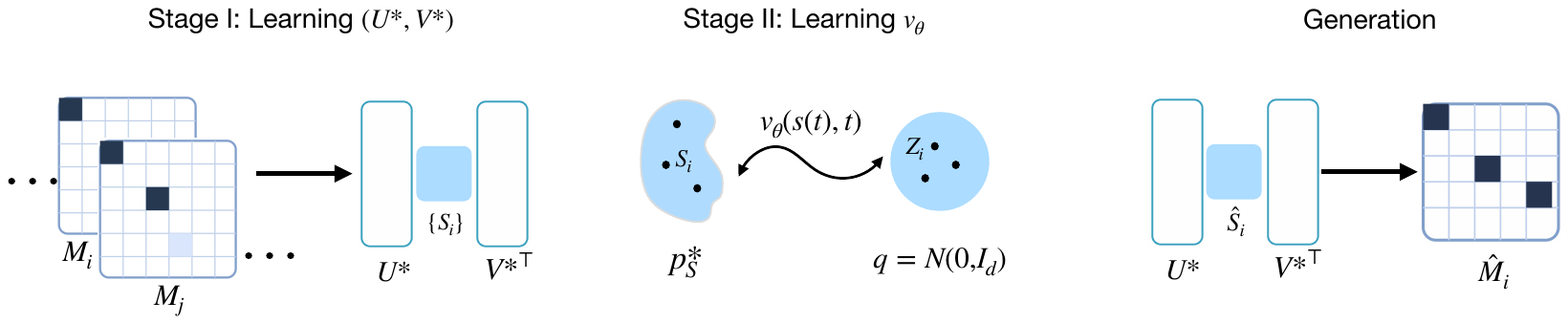}
    \caption{\small Overview of CoreFlow. \textit{Left:} learning shared row/column subspaces \((U^\ast, V^\ast)\) and constructing induced low-dimensional cores \(\{S_i\}\). \textit{Middle:} learning a continuous normalizing flow in the induced core space, matching the Stage-I cores \(\{S_i\}\sim p_S^\ast\) to a Gaussian base. \textit{Right:} generation starts from Gaussian noise, reverses the learned flow to obtain a generated core \(\hat S_i\), and decodes it with \((U^\ast, V^\ast)\) to produce a new matrix \(\hat M_i\).}
    \label{fig:illustration}
\end{figure}

\subsection{Settings and Goal}

Let \(M_i \in \mathbb{R}^{m_1 \times m_2}\) denote a matrix-valued sample drawn from an unknown data-generating distribution \(p_M\) (i.e., $M\sim p_M$). CoreFlow models each matrix through a shared low-rank parameterization
\[
M_i = U S_i V^\top,
\]
where \(U \in \mathrm{St}(m_1,R)\) and \(V \in \mathrm{St}(m_2,R)\) capture row and column subspaces shared across the matrix distribution, and \(S_i \in \mathbb{R}^{R \times R}\) is a sample-specific core matrix.

In the incomplete-data setting, we observe \(N\) i.i.d. training instances \(\{(M_i^{\mathrm{obs}},\Omega_i)\}_{i=1}^N\), where \(\Omega_i \subseteq [m_1]\times[m_2]\) indexes the observed entries and
\[
M_i^{\mathrm{obs}} = \mathcal{P}_{\Omega_i}(M_i),
\qquad
(\mathcal{P}_{\Omega}(M))_{jk} =
\begin{cases}
M_{jk}, & (j,k)\in\Omega,\\
0, & (j,k)\notin\Omega.
\end{cases}
\]
The complete-data setting is recovered when \(\Omega_i=[m_1]\times[m_2]\).


\textbf{Goal.} Our goal is to learn an unconditional generative model for complete matrices that matches \(p_M\) while avoiding ambient-space training in the full \(m_1m_2\)-dimensional space, which can be computationally costly and statistically fragile in high-dimensions. When training matrices are partially observed, we develop a Stage-I procedure that still recovers the shared matrix geometry. We first estimate \((U,V)\), and then learn a generative model only on the induced low-dimensional core \(S\).

\textbf{Notation.} Each matrix sample is denoted by $M_i\in\mathbb{R}^{m_1\times m_2}$.
For incomplete data, we observe $(M_i^{\mathrm{obs}},\Omega_i)$, where
$M_i^{\mathrm{obs}}=\mathcal{P}_{\Omega_i}(M_i)$.
Let $d=R^2$ denote the core-vector dimension.
For a core matrix $S\in\mathbb{R}^{R\times R}$, we write $s=\mathrm{vec}(S)\in\mathbb{R}^d$ for its vectorization; conversely, for any $s\in\mathbb{R}^d$, $\mathrm{mat}(s)\in\mathbb{R}^{R\times R}$ denotes its matricization, i.e., the inverse of $\mathrm{vec}(\cdot)$. We use \(p_M\) and \(\hat p_M\) to denote the true and the learned generated distribution of the ambient matrix \(M\), respectively.
The standard Gaussian base distribution in the core space is denoted by \(q=\mathcal{N}(0,I_d)\).


CoreFlow is trained in two stages. Stage~I learns \emph{shared} row and column subspaces
$(U,V)$ with orthonormal columns that explain the low-rank geometry
across the matrix distribution. Stage~II then learns a continuous normalizing
flow (CNF) only on the induced low-dimensional \emph{core} coordinates $S$, which
can greatly reduce the learnable dimensions.

\subsection{Stage I: learning shared matrix geometry on the Stiefel manifolds}
\label{subsec:stage1}

\paragraph{Complete matrices.}
In Stage I, we learn orthonormal factors $U\in\mathrm{St}(m_1,R)$ and $V\in\mathrm{St}(m_2,R)$ such that
every sample of $p_M$ can be well-approximated by the rank-$R$ model
$M_i= U S_i V^\top$.
Given $(U,V)$, the core matrix can be re-written as
$S_i\;:=\;U^\top M_i V\in\mathbb{R}^{R\times R}$,
so the reconstruction becomes $U\,S_i\,V^\top = UU^\top M_i VV^\top$.
Therefore, we can estimate $(U,V)$ by minimizing the reconstruction error:
\begin{equation}
\min_{U\in\mathrm{St}(m_1,R),\,V\in\mathrm{St}(m_2,R)}
\;\mathcal{L}_{\mathrm{rec}}(U,V)
:=\frac{1}{N}\sum_{i=1}^N \bigl\|M_i-UU^\top M_i VV^\top\bigr\|_F^2.
\label{eq:stage1_complete_revised}
\end{equation}
Because $U$ and $V$ must remain orthonormal during optimization, we use
Riemannian gradient steps on the Stiefel manifold with a QR retraction
(Alg.~\ref{alg:stiefel-step}).

\paragraph{Incomplete matrices.}
When entries are missing, directly evaluating $\|M_i-U S_i V^\top\|_F^2$ is impossible.
Therefore, we introduce a latent \emph{filled-in} matrix $\widetilde M_i$ for each training sample and
optimize a masked objective on observed entries:
\begin{equation}
\mathcal{L}_{\mathrm{miss}}(U,V;\{\widetilde M_i\})
:=\frac{1}{N}\sum_{i=1}^N
\left\|
\mathcal{P}_{\Omega_i}\!\left(
M_i^{\mathrm{obs}}-UU^\top\widetilde M_iVV^\top
\right)
\right\|_F^2.
\label{eq:missing_obj_revised}
\end{equation}
We then alternate:
(i) update $(U,V)$ using gradients of \eqref{eq:missing_obj_revised} (holding $\widetilde M_i$ fixed),
and (ii) update $\widetilde M_i$ by keeping observed entries fixed and imputing the missing ones
with the current low-rank reconstruction:
\begin{equation*}
\widetilde M_i \leftarrow
\mathcal{P}_{\Omega_i}(M_i^{\mathrm{obs}})
+\mathcal{P}_{\Omega_i^c}\!\left(UU^\top\widetilde M_iVV^\top\right).
\label{eq:fill_update_revised}
\end{equation*}
Intuitively, Step (ii) produces a completed matrix consistent with observations while staying on the
rank-$R$ manifold induced by $(U,V)$; Step (i) then refines the shared subspaces using only reliable
observed entries. The full procedure is summarized in Alg.~\ref{alg:stage1-missing} of Appendix.

\subsection{Stage II: geometry-preserving core-space flow matching}
\label{subsec:stage2}

We denote by $(U^*,V^*)$ the shared subspaces learned in Stage I. After Stage~I, $(U^*,V^*)$ are fixed and each (completed) training matrix is mapped to a
low-dimensional core:
\begin{equation*}
S \;:=\; {U^*}^\top M V^*\in\mathbb{R}^{R\times R},
\qquad
s \;:=\; \mathrm{vec}(S)\in\mathbb{R}^{d}.
\end{equation*}
Correspondingly, we denote by \(p_S^*\) the distribution of the induced core matrix
\(S\) after learning the subspaces \((U^*,V^*)\), and by
\(p_s^* := \mathrm{vec}_{\#}p_S^*\) the induced distribution of the core vector
\(s=\mathrm{vec}(S)\).

A continuous normalizing flow (CNF) defines a transformation between a data
distribution and a base distribution via a Neural ODE indexed by time $t\in[0,1]$.
It transports samples from the data distribution at $t{=}0$ (i.e., $s(0)\sim p_s^*$)
to the base distribution at $t{=}1$ (typically $\mathcal{N}(0,I_{d})$) according to
\begin{equation}
    \frac{ds(t)}{dt}=v_{\theta}(s(t),t),\quad t\in [0,1],
    \label{equation: our neural ode}
\end{equation}
where the trajectory is initialized at a data sample $s(0)=s$, and ends at
$s(1)=z$ with $z\sim \mathcal{N}(0,I_{d})$.

Flow matching trains a time-dependent velocity field $v_\theta(\cdot,t)$ on $\mathbb{R}^{d}$
by matching an analytic reference velocity along a simple coupling between
Gaussian noise and data cores, as introduced in Section \ref{preliminary:flow matching}. Specifically, we sample $s=s(0)\sim p_s^*$, $z=s(1)\sim\mathcal{N}(0,I_{d})$, $t\sim\mathrm{Unif}(0,1)$, and then define the straight-line reference path and its derivative:
\begin{equation*}
\phi(s,z;t) := (1-t)s + t z,
\qquad
\frac{d\phi}{dt} = z - s.
\end{equation*}
The flow-matching objective regresses the model velocity to this target:
\begin{equation*}
\mathcal{L}_{\mathrm{FM}}(\theta)
=
\mathbb{E}_{t\sim \mathcal{U}[0,1],\;s\sim p_s^*,\; z\sim\mathcal{N}(0,I_{d})}
\left\|
v_\theta\!\left(\phi(s,z;t),\,t\right) - (z-s)
\right\|_2^2.
\label{eq:fm-objective}
\end{equation*}

\paragraph{Generating a new matrix.} To generate a new matrix, we draw $z\sim\mathcal{N}(0,I_{d})$ and numerically solve the Neural ODE in \eqref{equation: our neural ode} backward from $t=1$ to $t=0$, producing a generated core vector $\hat s\sim \hat p_s$.
After that, we project the core vector $\hat s$ back to the ambient matrix space:
\begin{equation*}
\widehat M \;=\; U^*\,\mathrm{mat}(\hat s)\,{V^*}^\top.
\end{equation*}

\begin{algorithm}[t]
\caption{\small Two-Stage Training for \textsc{CoreFlow}}
\label{alg:coreflow}
\small
\begin{algorithmic}[1]
\REQUIRE Training set $\{M_i\}_{i=1}^N\subset\mathbb{R}^{m_1\times m_2}$ (fully observed);
rank $R$; iterative steps $T_{\mathrm{sub}},T_{\mathrm{flow}}$; learning rates $\eta_U,\eta_V,\eta_\theta$.
\ENSURE $U^*\in\mathbb{R}^{m_1\times R},V^*\in\mathbb{R}^{m_2\times R}$ (orthonormal columns), flow parameters $\theta^*$.
\STATE Initialize $U\in\mathrm{St}(m_1,R)$ and $V\in\mathrm{St}(m_2,R)$ via spectral initialization.
\vspace{0.15em}

\STATE \textbf{Stage I: learn shared subspaces $(U^*,V^*)$}
\FOR{$t=1$ to $T_{\mathrm{sub}}$}
  \STATE Sample mini-batch $\mathcal{B}\subset\{1,\dots,N\}$.
  \STATE $\mathcal{L}_{\mathrm{rec}} \gets \frac{1}{|\mathcal{B}|}\sum_{i\in\mathcal{B}}\|M_i-U U^\top M_i V V^\top\|_F^2$.
  \STATE Compute $G_U\gets\nabla_U\mathcal{L}_{\mathrm{rec}}$, $G_V\gets\nabla_V\mathcal{L}_{\mathrm{rec}}$.
  \STATE $U \gets \textsc{StiefelStep}(U,G_U,\eta_U)$;\quad
         $V \gets \textsc{StiefelStep}(V,G_V,\eta_V)$.
\ENDFOR
\STATE $U^*\gets U,\;V^*\gets V$.
\vspace{0.15em}

\STATE \textbf{Stage II: train CNF on core vectors under fixed $(U^*,V^*)$}
\FOR{$t=1$ to $T_{\mathrm{flow}}$}
  \STATE Sample mini-batch $\mathcal{B}\subset\{1,\dots,N\}$.
  \FORALL{$i\in\mathcal{B}$}
    \STATE $S_i \gets {U^*}^\top M_i V^*$;\quad $s_i\gets \mathrm{vec}(S_i)\in\mathbb{R}^{d}$.
    \STATE Sample $z_i\sim\mathcal{N}(0,I_{d})$, $t_i\sim\mathrm{Unif}(0,1)$;
    set $\phi(s_i,z_i;t_i) \gets (1-t_i)s_i+t_i z_i$.
  \ENDFOR
  \STATE $\mathcal{L}_{\mathrm{FM}} \gets \frac{1}{|\mathcal{B}|}\sum_{i\in\mathcal{B}}\|v_\theta(\phi(s_i,z_i;t_i),t_i)-(z_i-s_i)\|_2^2$.
  \STATE $\theta \gets \theta-\eta_\theta\nabla_\theta \mathcal{L}_{\mathrm{FM}}$.
\ENDFOR
\STATE $\theta^*\gets \theta$; \textbf{return} $(U^*,V^*,\theta^*)$.
\end{algorithmic}
\end{algorithm}


\begin{algorithm}[t]
\caption{\small \textsc{StiefelStep}$(W,G,\eta)$ with QR retraction (generic)}
\label{alg:stiefel-step}
\small
\begin{algorithmic}[1]
\REQUIRE $W\in\mathbb{R}^{m\times R}$ with $W^\top W=I_R$; Euclidean gradient $G$; step size $\eta$.
\STATE $\mathrm{Sym}(A)\triangleq \tfrac12(A+A^\top)$.
\STATE \textbf{Tangent projection:} $G^{\mathrm{R}} \gets G - W\,\mathrm{Sym}(W^\top G)$.
\STATE \textbf{Retract via QR:} $\widetilde W \gets W-\eta G^{\mathrm{R}}$; compute thin QR $\widetilde W=QR$; set $W\gets Q$.
\STATE \textbf{return} $W$.
\end{algorithmic}
\end{algorithm}



\begin{remark}[Optional patchification]
\label{remark:patching}
CoreFlow is most effective when the input matrices are approximately low-rank. For datasets without strong global low-rank structure but with repeated local patterns, we optionally apply a patchification step before Stage~I, inspired by patch-based vision methods \citep{buades2005non,dabov2007image,dosovitskiy2020image}. Rearranging the matrix into patches can better align local structure, increase row/column correlation, and reduce the effective rank. See details in Appendix~\ref{appendix:patching}.
\end{remark}

\section{Theoretical Properties}
\label{section:theory}

Section \ref{section:theory} analyzes the fully observed population model. Our aim is to isolate the geometry–generation decomposition of CoreFlow: ambient generation error separates into subspace error from empirical geometric recovery, core-generation error from empirical flow matching, and an irreducible approximation error. The incomplete-data Stage-I updates in Section \ref{subsec:stage1} and Appendix \ref{appendix:stage 1 incomplete} should be viewed as practical extensions, motivated by real applications and supported empirically in Section \ref{section:experiments}. Developing a statistical recovery theory under missing observations is left for future work.

\begin{assumption}[Approximate Shared Low-Rank Model]
\label{assumption:shared subspace}
The data-generating distribution $p_M$ produces matrices $M \in \mathbb{R}^{m_1 \times m_2}$ of the form
\begin{equation*}
    M = U_0 S_0 V_0^\top + E,
\end{equation*}
where (i) \textit{True Subspaces:} $U_0 \in \mathbb{R}^{m_1 \times R}$ and $V_0 \in \mathbb{R}^{m_2 \times R}$ are fixed unknown matrices with orthonormal columns.
(ii) \textit{Oracle Core Matrix:} $S_0 \in \mathbb{R}^{R \times R}$ is a random core matrix in the oracle coordinates $(U_0,V_0)$, with non-degenerate second moments, $\Sigma_L := \mathbb{E}[S_0S_0^\top] \succ 0$ and $\Sigma_R := \mathbb{E}[S_0^\top S_0] \succ 0$, and $\mathbb E\|S_0\|_F^2\le R C_S^2.$
(iii) \textit{Perturbation:} 
    $E \in \mathbb{R}^{m_1 \times m_2}$ is a random perturbation satisfying $\mathbb{E}[E \mid S_0] = 0$ and $\mathbb{E}\|E\|_2^2 \leq \tau^2$, where $\|\cdot\|_2$ denotes the spectral norm.    
\end{assumption}
\vspace{-0.5em}
Next, the proposition below states that the Stage I objective (\ref{eq:stage1_complete_revised}) has the correct global minima.

\begin{proposition}
Under Assumption \ref{assumption:shared subspace}, if the data is exactly low-rank, the global optimizers of the population reconstruction loss $\mathcal{L}_{\mathrm{rec}}(U,V)$ exactly recover the true subspaces up to orthogonal rotations:
 \begin{equation*}
\operatorname*{arg\,min}_{\substack{U^\top U = I,\ V^\top V = I}}\ \mathcal{L}_{\mathrm{rec}}(U,V)=\{(U_0R_1,\;V_0R_2): R_1\ \text{and } R_2\ \text{are orthonormal}\}.
    \end{equation*}
\label{proposition: global opt}
\end{proposition}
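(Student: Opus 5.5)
Since the data are exactly low-rank, we take $E=0$, so $M=U_0S_0V_0^\top$ almost surely. The population loss is $\mathcal{L}_{\mathrm{rec}}(U,V)=\mathbb{E}\|M-P_UMP_V\|_F^2$, with orthogonal projectors $P_U:=UU^\top$ and $P_V:=VV^\top$. The plan has two parts. First, show the loss is nonnegative and vanishes at every pair $(U_0R_1,V_0R_2)$. Second, show that any $(U,V)$ with zero loss must have $\mathrm{range}(U)=\mathrm{range}(U_0)$ and $\mathrm{range}(V)=\mathrm{range}(V_0)$. Both $U$ and $U_0$ have exactly $R$ orthonormal columns, so equal ranges give $U=U_0R_1$ with $R_1:=U_0^\top U$ orthogonal, and likewise for $V$.

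For sufficiency, $P_{U_0R_1}=U_0R_1R_1^\top U_0^\top=U_0U_0^\top$. Hence $P_UMP_V=U_0U_0^\top U_0S_0V_0^\top V_0V_0^\top=M$, so the loss is $0$. Because the loss is always $\ge 0$, these pairs are global minimizers, and the minimum value is $0$.

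For necessity, suppose $\mathcal{L}_{\mathrm{rec}}(U,V)=0$. Then $M=P_UMP_V$ almost surely. Writing $M-P_UMP_V=(I-P_U)M+P_UM(I-P_V)$ as a sum of two terms with orthogonal column spaces gives the Pythagorean split
\[
\|M-P_UMP_V\|_F^2=\|(I-P_U)M\|_F^2+\|P_UM(I-P_V)\|_F^2 .
\]
So $(I-P_U)M=0$ almost surely, and taking expectations,
\[
0=\mathbb{E}\|(I-P_U)U_0S_0\|_F^2=\operatorname{tr}\bigl((I-P_U)U_0\Sigma_LU_0^\top(I-P_U)\bigr).
\]
Since $\Sigma_L\succ0$, this forces $(I-P_U)U_0=0$, i.e.\ $\mathrm{range}(U_0)\subseteq\mathrm{range}(U)$, and equality follows by dimension count. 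Symmetrically, $M(I-P_V)=0$ almost surely follows by the same split with the roles of rows and columns exchanged. Using $\Sigma_R=\mathbb{E}[S_0^\top S_0]\succ0$ then gives $(I-P_V)V_0=0$, hence $\mathrm{range}(V)=\mathrm{range}(V_0)$.

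The main subtle step is this necessity argument, specifically turning ``zero loss in expectation'' into a pointwise subspace containment. The nondegeneracy of the second moments $\Sigma_L$ and $\Sigma_R$ in Assumption~\ref{assumption:shared subspace} is exactly what prevents $S_0$ from living in a smaller subspace, where the minimizer would not be unique. The trace identity above is the clean way to invoke it. Finiteness of the expectations follows from $\mathbb{E}\|S_0\|_F^2\le RC_S^2$.
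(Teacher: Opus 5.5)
Your proof is correct, and it reaches the paper's conclusion through a different decomposition.

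The paper rewrites the loss as $\mathbb{E}\|M\|_F^2-\mathcal{J}(U,V)$, where $\mathcal{J}(U,V)=\mathbb{E}\|P_UMP_V\|_F^2=\mathrm{tr}\bigl(P\,\mathbb{E}[S_0QS_0^\top]\bigr)$, with $P=U_0^\top P_UU_0$, $Q=V_0^\top P_VV_0$ and $0\preceq P,Q\preceq I_R$. It bounds $\mathcal{J}\le\mathrm{tr}(Q\Sigma_R)\le\mathrm{tr}(\Sigma_R)$. It then treats the equality case sequentially: $\Sigma_R\succ0$ forces $Q=I_R$. Only after substituting $Q=I_R$, so that $\mathbb{E}[S_0QS_0^\top]=\Sigma_L$, does $\Sigma_L\succ0$ force $P=I_R$.

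Here $\mathbb{E}\|M\|_F^2=\mathrm{tr}(\Sigma_R)$, so that upper bound is just $\mathcal{L}_{\mathrm{rec}}\ge0$ in disguise. This is why your shortcut loses nothing: the optimal value is $0$, hence $M=P_UMP_V$ almost surely.

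Your conditions $(I-P_U)U_0=0$ and $(I-P_V)V_0=0$ are exactly $P=I_R$ and $Q=I_R$, because $I_R-P$ is the Gram matrix of $(I-P_U)U_0$. What your route gains is that it decouples the row and column conditions, each using only its own second moment. It also checks the $\supseteq$ inclusion for all rotated pairs $(U_0R_1,V_0R_2)$ explicitly, whereas the paper checks only $(U_0,V_0)$.

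The Pythagorean split is in fact unnecessary. Multiplying $M=P_UMP_V$ on the left by $I-P_U$ gives $(I-P_U)M=0$ directly, and multiplying on the right by $I-P_V$ gives $M(I-P_V)=0$.

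The paper's energy form writes the objective through the principal-angle cosines, i.e. the eigenvalues of $P$ and $Q$. That is the natural starting point for the noisy case, where the optimal value is no longer zero. Your split also gives a quantitative bound, $\mathcal{L}_{\mathrm{rec}}(U,V)\ge\lambda_R(\Sigma_L)\,\|(I-P_U)U_0\|_F^2$.
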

\vspace{-1em}
Also, Proposition~\ref{proposition:tucker init} shows that the Tucker initialization is provably close to the true subspace.

\begin{proposition}[Tucker initialization]
\label{proposition:tucker init}
Suppose that Assumption \ref{assumption:shared subspace} holds under the exact low-rank setting.
Define the population and empirical left second moments by
$C_L := \mathbb{E}[MM^\top]$ and
$\widehat{C}_L := \frac{1}{N} \sum_{i=1}^N M_i M_i^\top$,
where $M_i \stackrel{\mathrm{i.i.d.}}{\sim} p_M$.
Let $U^{(0)}\in\mathbb{R}^{m_1\times R}$ be the matrix composed of the top-$R$ eigenvectors of $\widehat{C}_L$.
Suppose $\|\widehat{C}_L - C_L\|_2 < \lambda_R(\Sigma_L)$, then the principal-angle error satisfies:
\begin{equation*}
\sin\Theta(U^{(0)},U_0)
:= \big\|\big(I - U_0 U_0^\top\big)U^{(0)}\big\|_2\le
\frac{\|\widehat{C}_L - C_L\|_2}{\lambda_{R}(\Sigma_L) - \|\widehat{C}_L - C_L\|_2}.
\end{equation*}
\end{proposition}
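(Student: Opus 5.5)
Under the exact low-rank setting ($E=0$) we have $M=U_0S_0V_0^\top$. Since $V_0^\top V_0=I$, the population left second moment is
\[
C_L=\mathbb{E}[MM^\top]=U_0\,\mathbb{E}[S_0S_0^\top]\,U_0^\top=U_0\Sigma_LU_0^\top .
\]
This matrix is PSD with rank exactly $R$ because $\Sigma_L\succ 0$. Its top-$R$ eigenspace is $\mathrm{span}(U_0)$, with eigenvalues $\lambda_1(\Sigma_L)\ge\dots\ge\lambda_R(\Sigma_L)>0$, and all remaining eigenvalues vanish. So the eigengap between the $R$-th and $(R+1)$-th eigenvalues of $C_L$ equals $\lambda_R(\Sigma_L)$. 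With $\Delta:=\widehat C_L-C_L$, the plan is a Davis--Kahan type $\sin\Theta$ argument in which the gap is measured between the population eigenvalue $\lambda_R(\Sigma_L)$ on one side and the perturbed complementary spectrum on the other. That choice of gap gives exactly the denominator $\lambda_R(\Sigma_L)-\|\Delta\|_2$.

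\textbf{Step 1 (eigenvalue control).} Weyl's inequality gives $\lambda_R(\widehat C_L)\ge\lambda_R(\Sigma_L)-\|\Delta\|_2>0$ under the hypothesis. Let $\widehat\Lambda=\mathrm{diag}(\lambda_1(\widehat C_L),\dots,\lambda_R(\widehat C_L))$, so that $\widehat C_LU^{(0)}=U^{(0)}\widehat\Lambda$ and $\widehat\Lambda\succeq(\lambda_R(\Sigma_L)-\|\Delta\|_2)I$.

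\textbf{Step 2 (Sylvester-type identity).} Set $P_\perp:=I-U_0U_0^\top$ and use $P_\perp C_L=0$. Left-multiplying the eigen-equation by $P_\perp$ gives
\[
P_\perp U^{(0)}\widehat\Lambda=P_\perp\widehat C_LU^{(0)}=P_\perp(C_L+\Delta)U^{(0)}=P_\perp\Delta U^{(0)}.
\]
Taking spectral norms and using the lower bound on $\widehat\Lambda$, together with $\|P_\perp\|_2\le1$ and $\|U^{(0)}\|_2=1$, yields
\[
\|P_\perp U^{(0)}\|_2\,(\lambda_R(\Sigma_L)-\|\Delta\|_2)\le\|P_\perp U^{(0)}\widehat\Lambda\|_2\le\|\Delta\|_2 .
\]
Dividing through gives the claimed bound.

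\textbf{Main obstacle.} The delicate point is the inequality $\|X\widehat\Lambda\|_2\ge\lambda_{\min}(\widehat\Lambda)\|X\|_2$ for $X=P_\perp U^{(0)}$. I would justify it by writing $X=(X\widehat\Lambda)\widehat\Lambda^{-1}$ and bounding $\|X\|_2\le\|X\widehat\Lambda\|_2\,\|\widehat\Lambda^{-1}\|_2$, which is valid because $\widehat\Lambda$ is invertible by Step 1.

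One further point needs checking: if $\widehat C_L$ has a tie at its $R$-th eigenvalue, $U^{(0)}$ is not uniquely defined. The argument above still applies to any orthonormal choice of $R$ eigenvectors with the top eigenvalues, so the bound holds for every such choice. Beyond this, no sample-size or concentration argument is needed, since the statement is deterministic given the hypothesis on $\|\Delta\|_2$.
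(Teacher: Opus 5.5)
Your proposal is correct and follows essentially the same route as the paper. Both arguments get $\lambda_R(\widehat C_L)\ge\lambda_R(\Sigma_L)-\|\Delta\|_2>0$ from Weyl/Courant--Fischer, project the eigen-equation onto the orthogonal complement of $U_0$ (the paper uses $U_{0,\perp}^\top$ where you use $P_\perp$) with $C_L$ annihilated there, and then invert $\widehat\Lambda$ to get $\|\Delta\|_2/\lambda_R(\widehat C_L)$.
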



\begin{remark}
\label{remark:spectral_vs_iterative}
The same conclusion holds for the right subspace after replacing
\((C_L,\widehat{C}_L,U_0,U^{(0)},\Sigma_L)\) with
\((C_R,\widehat{C}_R,V_0,V^{(0)},\Sigma_R)\) in Proposition~\ref{proposition:tucker init}, where
\(C_R:=\mathbb{E}[M^\top M]\) and \(\widehat{C}_R:=\frac{1}{N}\sum_{i=1}^N M_i^\top M_i\).
The Tucker initializer is close to the true subspaces when the empirical second moments are accurate. Algorithm~\ref{alg:coreflow} then refines these subspaces through Stiefel-gradient updates. This refinement is particularly useful under missingness, where alternating completion improves the filled-in matrices and leads to more reliable subspace updates.
\end{remark}

We denote $c_d := (2\pi)^{-d/2}$ by the normalizing constant of $q := \mathcal{N}(0, I_d)$. Let $v$ and $\hat{v}$ be the target and learned velocity fields on $\mathbb{R}^d \times [0,1]$, with corresponding density paths $\rho_t$ and $\hat{\rho}_t$, $t\in [0,1]$, both initialized at $\rho_0 = \hat{\rho}_0 = p_s^*$. Note that \(p_s^*\) denotes the core-vector distribution induced by the Stage-I learned subspaces. The target field $v$ transports $p_s^*$ to $q$, i.e., $\rho_1 = q$. Let $\hat{T}: \mathbb{R}^d \to \mathbb{R}^d$ be the flow map induced by the learned ODE $\dot{x} = \hat{v}(x,t)$. We define the terminal density $\hat{\rho}_1 := \hat{T}_\# p_s^*$ and the generated core density $\hat p_s := \hat{T}^{-1}_\# q$.







\begin{assumption}
\label{asmp:fm_reg}
There exist constants $\varepsilon > 0$ and $C_1, C_2, L, L_v > 0$ 
such that for all $t \in [0,1]$: \textbf{(i)} \textit{Flow-matching error:} The learned velocity field satisfies the global $L^2$ error bound: $\int_0^1\int_{\mathbb{R}^d} \|v(x,t)-\hat v(x,t)\|^2\rho_t(x)dx dt\le \varepsilon^2.$
    \textbf{(ii)} \textit{Velocity regularity:} Both velocity fields are Lipschitz in space uniformly in time:
    $\|v(x,t) - v(y,t)\| \le L_v\|x - y\|,\ \|\hat{v}(x,t) - \hat{v}(y,t)\| \le L_v\|x - y\|, \forall\, t \in [0,1].$
    \textbf{(iii)} \textit{Density bounds:} Both density paths are strictly positive and bounded by Gaussian-like tails:
    $0 < \rho_t(x),\hat\rho_t(x) \le C_1 e^{-\|x\|^2/2}.$ \textbf{(iv)} \textit{Score regularity:} The score functions of both density paths exhibit at most linear growth:
    $\|\nabla\log\rho_t(x)\| \le L(1+\|x\|),\ \|\nabla\log\hat\rho_t(x)\| \le L(1+\|x\|).$
    \textbf{(v)} \textit{Density ratio integrability:} The density paths satisfy the bounded integral condition:  
    $\int_{\mathbb{R}^d} (1+\|x\|)^2 \frac{\rho_t(x)^3}{\hat\rho_t(x)^2}dx \le C_2.$
\end{assumption}

Following the flow-matching analysis of \citet{xu2026local}, we work under the same population-level regularity conditions and use their forward \(\chi^2\) control as the starting point for Proposition~\ref{prop:coreflow_fm_chi2}. Under these assumptions, Proposition~\ref{prop:coreflow_fm_chi2} is a conditional population guarantee rather than a finite-sample guarantee for the trained neural CNF. Our contribution is to turn this core-space control into a backward generation bound for the Stage-II core model, and then combine it with the Stage-I subspace recovery analysis to obtain an end-to-end ambient-space error decomposition for the full CoreFlow algorithm in Theorem~\ref{thm:end_to_end}.

\begin{proposition}[Core-space flow-matching and generation guarantee]
\label{prop:coreflow_fm_chi2}
Let Assumption~\ref{asmp:fm_reg} hold. Then there exists a constant $C_F>0$ depending on $(C_1,C_2,L,d)$ such that
\begin{equation*}
\chi^2(\hat\rho_1\|q) \le C_F\varepsilon,
\qquad
\chi^2(p_s^*\|\hat p_s) \le C_F\varepsilon.
\end{equation*}
\end{proposition}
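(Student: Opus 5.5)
The plan has two parts. First obtain the forward bound $\chi^2(\hat\rho_1\|q)\le C_F\varepsilon$ by following the flow-matching analysis of \citet{xu2026local}. Then transfer it to the backward statement $\chi^2(p_s^*\|\hat p_s)\le C_F\varepsilon$. The transfer uses invariance of $f$-divergences under the bijection $\hat T$.

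\textbf{Forward bound.} Write $g_t := \rho_t/\hat\rho_t$ and study $\frac{d}{dt}\chi^2(\rho_t\|\hat\rho_t) = \frac{d}{dt}\int \rho_t^2/\hat\rho_t\,dx$.

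1. Substitute the continuity equations $\partial_t\rho_t = -\nabla\cdot(\rho_t v)$ and $\partial_t\hat\rho_t = -\nabla\cdot(\hat\rho_t \hat v)$. After integrating by parts, with boundary terms vanishing by the Gaussian tails in Assumption~\ref{asmp:fm_reg}(iii) and the linear growth in (ii) and (iv), one gets
\begin{equation*}
\frac{d}{dt}\int \frac{\rho_t^2}{\hat\rho_t}\,dx = \int \rho_t\, g_t\, \nabla\log(\rho_t/\hat\rho_t)\cdot (v-\hat v)\,dx
\end{equation*}
up to rearrangement. Equivalently, the integrand is $2 g_t \nabla g_t\cdot(v-\hat v)\hat\rho_t$ minus a $\hat v$-dependent piece.

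2. Apply Cauchy--Schwarz. The factor $\|v-\hat v\|$ is weighted by $\rho_t$, so assumption (i) controls it. The remaining factor is of the form
\begin{equation*}
\int \|\nabla\log\rho_t-\nabla\log\hat\rho_t\|^2\,\frac{\rho_t^3}{\hat\rho_t^2}\,dx .
\end{equation*}
By the linear score growth (iv) and the density-ratio integrability (v), this is at most $4L^2C_2$.

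3. Integrate in $t$ over $[0,1]$. This gives $\chi^2(\rho_1\|\hat\rho_1)\le C\bigl(\int_0^1\!\int\|v-\hat v\|^2\rho_t\bigr)^{1/2}\le C\varepsilon$. Since $\rho_1=q$, this is a bound on $\chi^2(q\|\hat\rho_1)$.

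4. Convert to the direction stated in the proposition, $\chi^2(\hat\rho_1\|q)$. Here the bound $\hat\rho_1\le C_1e^{-\|x\|^2/2} = C_1 c_d^{-1} q$ lets one compare the two $\chi^2$ directions through bounded density ratios. The constant $c_d$ enters at this point, and $C_F$ depends on $(C_1,C_2,L,d)$. If the direct comparison is lossy, I would instead run the same derivative computation with the roles of $\rho_t$ and $\hat\rho_t$ swapped. Using (iii) to bound $\hat\rho_t/\rho_t$ weights, this again yields order $\varepsilon$.

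\textbf{Backward bound.} The map $\hat T$ is a diffeomorphism because $\hat v$ is Lipschitz (ii), so $\hat T^{-1}$ is well defined. By definition $\hat p_s = \hat T^{-1}_\# q$ and $p_s^* = \hat T^{-1}_\#\hat\rho_1$. $f$-divergences are invariant under measurable bijections: change variables $y=\hat T(x)$, and the Jacobians cancel in the density ratio. Hence
\begin{equation*}
\chi^2(p_s^*\|\hat p_s)=\chi^2(\hat\rho_1\|q)\le C_F\varepsilon .
\end{equation*}

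\textbf{Main obstacle.} I expect the forward step to be the hardest. Justifying the integration by parts and the vanishing boundary terms needs care. The harder part is keeping the direction of $\chi^2$ consistent with the available weight $\rho_t^3/\hat\rho_t^2$ in (v). I would try first to mirror the estimate of \citet{xu2026local} verbatim, since the assumptions are tailored to it, and cite it for the differential inequality. The genuinely new content is then only the pushforward-invariance step.
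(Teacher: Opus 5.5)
Your proposal is correct and follows the paper's proof essentially step for step. The paper also differentiates $F(t)=\chi^2(\rho_t\|\hat\rho_t)$ and applies Cauchy--Schwarz with (i), (iv), (v) to get $F(1)\le 4L\sqrt{C_2}\,\varepsilon$. It then converts to $\chi^2(\hat\rho_1\|q)$ via $\hat\rho_1/q\le C_1/c_d$, phrased through a reverse triangle inequality in $L^2(q)$ that reduces to your direct comparison since $\rho_1=q$, and finishes with pushforward invariance under $\hat T$. That comparison is not lossy, so your fallback is unnecessary, and your displayed derivative is missing only a factor of $2$, which changes nothing but the constant.
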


\begin{theorem}[End-to-end geometry--generation guarantee]
\label{thm:end_to_end}
Let Assumption~\ref{assumption:shared subspace} and Assumption~\ref{asmp:fm_reg} hold.
Let $(U^*, V^*)$ be the learned subspaces.
Define the generated ambient distribution $\hat{p}_M$ by sampling $\hat{s} \sim \hat p_s$ and forming $\hat{M} = U^*\mathrm{mat}(\hat s)V^{*\top}$.
Then the Wasserstein-$2$ distance between the true data distribution $p_M$ and the generated distribution satisfies:
\begin{equation*}
\small
W_2(\hat p_M,p_M)
\le
\underbrace{\sqrt{R}C_S\!\left(\sin\Theta(U^*,U_0)+\sin\Theta(V^*,V_0)\right)}_{\textnormal{(a)}}
+
\underbrace{e^{L_v}\sqrt{2C_F}\,\varepsilon^{1/2}}_{\textnormal{(b)}}
+
\underbrace{\sqrt{\min(m_1,m_2)}\,\tau}_{\textnormal{(c)}}.
\end{equation*}
Here, \textnormal{(a)} is the geometric error from empirical subspace recovery mismatch, \textnormal{(b)} is the generative error from core-space flow matching, and \textnormal{(c)} is the irreducible approximation error from data noise.
\end{theorem}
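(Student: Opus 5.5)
The bound follows from the triangle inequality for $W_2$ through two intermediate distributions. Let $M=U_0S_0V_0^\top+E\sim p_M$. Define the \emph{oracle-low-rank} law $p_M^{(0)}$ as the law of $M_0:=U_0S_0V_0^\top$, and the \emph{projected} law $p_M^{*}$ as the law of $U^*S^*V^{*\top}$ with $S^*:=U^{*\top}MV^*$, so that $\mathrm{vec}(S^*)\sim p_s^*$. We would then write
\[
W_2(\hat p_M,p_M)\le W_2(\hat p_M,p_M^{*})+W_2(p_M^{*},p_M^{(0)})+W_2(p_M^{(0)},p_M)
\]
and match the three terms to (b), (a), and (c). Each term will be bounded by an explicit coupling. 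For the noise term, couple $M$ with $M_0$ through the same draw, giving $W_2^2\le\mathbb E\|E\|_F^2\le\min(m_1,m_2)\,\mathbb E\|E\|_2^2\le\min(m_1,m_2)\tau^2$, since $\|A\|_F^2\le\operatorname{rank}(A)\|A\|_2^2$. This yields (c).

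For the generation term (b), note that the decoder $s\mapsto U^*\mathrm{mat}(s)V^{*\top}$ is an isometry from $\mathbb R^d$ into the Frobenius space, because $U^*,V^*$ have orthonormal columns. Hence $W_2(\hat p_M,p_M^*)=W_2(\hat p_s,p_s^*)$. Both distributions are pushforwards of flows started from $q$ at $t=1$ and run backward. Specifically, $\hat p_s=\hat T^{-1}_\#q$, and $p_s^*=T^{-1}_\#q$, since $v$ transports $p_s^*$ to $q$. I would couple them by taking the same $z\sim q$ and integrating both ODEs backward. A Grönwall argument with the Lipschitz constant $L_v$ then gives
\[
\|T^{-1}z-\hat T^{-1}z\|\le e^{L_v}\int_0^1\|v(x_t,t)-\hat v(x_t,t)\|\,dt,
\]
where $x_t$ is the true backward trajectory, whose time marginal is $\rho_t$. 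Squaring, applying Cauchy--Schwarz in $t$, and taking expectation over $z$ yields $W_2\le e^{L_v}\varepsilon$. That is even stronger than needed.

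If we instead want to land exactly on the stated form $e^{L_v}\sqrt{2C_F}\varepsilon^{1/2}$, the intended route presumably passes through Proposition~\ref{prop:coreflow_fm_chi2}. One option is a transport-type inequality $W_2(\hat\rho_1,q)^2\le 2\,\mathrm{KL}(\hat\rho_1\|q)\le 2\chi^2(\hat\rho_1\|q)\le 2C_F\varepsilon$, using Talagrand's $T_2$ inequality for the standard Gaussian together with $\mathrm{KL}\le\log(1+\chi^2)\le\chi^2$. This is followed by pulling back through $\hat T^{-1}$, which is $e^{L_v}$-Lipschitz by Grönwall. The pullback needs care: $\hat T^{-1}_\#\hat\rho_1=p_s^*$ and $\hat T^{-1}_\#q=\hat p_s$, so $W_2(p_s^*,\hat p_s)\le e^{L_v}W_2(\hat\rho_1,q)$. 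This is exactly (b), and it is the route I would write up.

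For the geometric term (a), couple through the same $M$: compare $U^*U^{*\top}MV^*V^{*\top}$ with $U_0U_0^\top M_0V_0V_0^\top=M_0$. Here there is a subtlety about whether the projection of $E$ should be counted under (c). I would handle it by comparing $P_{U^*}MP_{V^*}$ with $P_{U_0}MP_{V_0}$ in the noiseless part and absorbing the projected noise into (c). Since projections are contractions, $\|P_{U^*}EP_{V^*}\|_F\le\|E\|_F$, so the triangle split becomes $P_{U^*}M_0P_{V^*}$ versus $M_0$, and the projected noise goes with (c). Then
\[
M_0-P_{U^*}M_0P_{V^*}=(I-P_{U^*})M_0+P_{U^*}M_0(I-P_{V^*}),
\]
with $\|(I-P_{U^*})U_0S_0\|_F\le\|(I-P_{U^*})U_0\|_2\|S_0\|_F$ and the analogous bound on the right. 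The identity $\|(I-P_{U^*})U_0\|_2=\|(I-P_{U_0})U^*\|_2=\sin\Theta$ holds for equal-dimensional subspaces. Taking $L^2$ norms and using $\mathbb E\|S_0\|_F^2\le RC_S^2$ gives (a).

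The main obstacle is bookkeeping the noise so that it is counted only once. With the three-point triangle above, the noise appears both in $W_2(p_M^{(0)},p_M)$ and inside $p_M^*$. I would fix this by defining the middle distribution to be the law of $P_{U^*}MP_{V^*}$ and bounding $\|P_{U^*}MP_{V^*}-M\|\le\|P_{U^*}M_0P_{V^*}-M_0\|+\|E-P_{U^*}EP_{V^*}\|$ in a two-term split. In that split, $\|E-P_{U^*}EP_{V^*}\|_F\le\|E\|_F$ holds because $E\mapsto P_{U^*}EP_{V^*}$ is an orthogonal projection in Frobenius norm. This gives $\sqrt{\min(m_1,m_2)}\,\tau$ via Minkowski in $L^2$.
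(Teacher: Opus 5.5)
Your proposal is correct and is essentially the paper's proof. Your $W_2$ triangle inequality through the law of $P_{U^*}MP_{V^*}$ does the job of the paper's glued coupling and yields the same three pieces, $U_0S_0V_0^\top-P_{U^*}U_0S_0V_0^\top P_{V^*}$, $U^*(S-\hat S)V^{*\top}$ and $E-P_{U^*}EP_{V^*}$, with identical bounds for (a) and (c); and your step for (b), Gaussian Talagrand on $(\hat\rho_1,q)$ pulled back through the $e^{L_v}$-Lipschitz map $\hat T^{-1}$, is exactly the argument behind the paper's $T_2(e^{2L_v})$ claim for $\hat p_s$, since $\mathrm{KL}(\hat\rho_1\|q)=\mathrm{KL}(p_s^*\|\hat p_s)$ because $\hat T$ is a bijection.

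One correction to your aside: the direct Gr\"onwall coupling is valid and gives $W_2(p_s^*,\hat p_s)\le e^{L_v}\varepsilon$ using only Assumption~\ref{asmp:fm_reg}(i)--(ii), but it beats term (b) only when $\varepsilon\le 2C_F$, so it is not uniformly ``stronger than needed''; what is justified is the minimum of the two bounds.
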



\begin{remark}[Error balancing and  bottlenecks]
\label{remark:balance}
The core-flow error term is the only term that does not scale with the ambient matrix dimension $m_1m_2$, though its constant $C_F=C_F(d)$ depend on the core dimension $d$. For efficient training, it suffices to reduce the flow error until it matches the dominant geometric or noise error. From Proposition~\ref{proposition:tucker init}, combined with matrix Bernstein bounds \citep{tropp2015introduction}, the spectral estimator $(U^*,V^*)$ satisfies with high probability: \[\sin \Theta(U^*, U_0)+\sin \Theta(V^*, V_0) \lesssim \frac{C_S^2 R(\sqrt{m_1}+\sqrt{m_2})}{\sqrt{N}\min (\lambda_R(\Sigma_L), \lambda_R(\Sigma_R))}.\] Balancing the flow term against the other two yields the sufficient flow precision threshold: $\varepsilon \lesssim \max(m_1,m_2)(N^{-1}+\tau^2)/C_F(d)$. Thus the flow precision is bottlenecked by the core dimension, statistical subspace recovery, and irreducible approximation noise.
\end{remark}

\section{Experiments}
\label{section:experiments}

\textbf{Goals.}
Our experiments are designed to demonstrate that CoreFlow improves generative quality while training in a much lower-dimensional space than the original matrices, especially in high-dimensional and limited-sample settings. We additionally evaluate training matrices with substantial missing entries as robustness tests of the same framework.


\textbf{Datasets.} We consider both real and synthetic matrix-generation benchmarks. For real data, we study two sharply different regimes. Solar and Solar 2 are pseudo-color solar-flare datasets whose samples are $200\times 200$ matrices, but only 300 training matrices are available, making ambient-space generative training particularly challenging. We also evaluate on LSPF, an $80\times 80$ large-scale precipitation fraction field from ERA5, with 8760 training samples, representing a data-rich regime.

To complement the real datasets, where the underlying low-rank structure is unknown, we use four synthetic cases: \textit{Blobs} (Figure~\ref{fig:case blobs}), \textit{Bands} (Figure~\ref{fig:case bands}), \textit{Waves} (Figure~\ref{fig:case waves}), and \textit{Crosshatch} (Figure~\ref{fig:case crosshatch}), each consisting of $200\times 200$ matrices with $N=1000$ training samples. These simulations allow controlled evaluation of both distribution matching and subspace recovery. On all datasets, we assess robustness to incomplete observations by masking training entries uniformly at random with $p_{\mathrm{miss}}\in\{0,20,40\}\%$, while always evaluating generation against the complete matrices. The neural network architecture and hyperparameter settings are provided in Appendix~\ref{appendix:NN details}.

\textbf{Baselines.}
We compare against two baseline families. \textbf{(i)} \emph{Ambient-space generative models with missingness handling}: \emph{MissDiff} \citep{ouyang2023missdiff} and \emph{MissFlow}, which train diffusion or flow directly in the original \(m_1\times m_2\) space using masked losses, but still operate in the full ambient dimension. \textbf{(ii)} \emph{Low-dimensional generative models without shared matrix geometry}: \emph{CSDM} \citep{guo2025accelerating}, which uses a fixed compression operator and may distort matrix structure, and \emph{SMG-Core}, a baseline adapted from the matrix completion method BayeSMG \citep{yuchi2023bayesian}, where we fix the Stage~I subspaces learned by CoreFlow and fit the core distribution using BayeSMG variance priors; see Appendix~\ref{sec:baseline_gaussiancore_bayes} for details. We also report a PCA-Flow ablation in Appendix~\ref{appendix:ablation PCA}, replacing Stage I with flattened PCA while keeping the same CNF; since PCA requires complete vectors, it is evaluated only at $p_{\mathrm{miss}}=0$.

Notably, VAE-based latent diffusion \citep{rombach2022high,peebles2023scalable} is not well suited here, as it requires pre-training a deep VAE on the same limited and possibly incomplete matrices, introducing a substantially heavier pipeline that is often impossible in the few-sample regime.

\textbf{Metrics.} We quantify distributional similarity between generated and true matrices using: (i) entrywise moment discrepancies (mean and std), (ii) Frobenius-norm mean and std shifts,
(iii) average singular-value discrepancy, and (iv) maximum mean discrepancy (MMD).
For Simulation we also evaluate subspace recovery via principal angles. See formal definitions in Appendix~\ref{appendix:metrics}.

\subsection{Main results}
To keep the main text focused, we present representative metrics here and defer the remaining quantitative results to Appendix~\ref{appendix:additional results} (Tables~\ref{tab:real frob 1}--\ref{tab:simulation all UV}), where the same overall trends continue to hold.

\begin{table}[h!]
\centering
\caption{
\small Comparison with missing-data baselines on real datasets, measured by average singular-value discrepancy
(lower is better). Columns report the missing-entry rate $p_{\mathrm{miss}}$. Entries are $\times 10^{-2}$ mean $\pm$ standard deviation.
CoreFlow uses $\rho=36\%$ for Solar/Solar 2 and $\rho=49\%$ for LSPF, while ambient-space baselines use
$\rho=100\%$. Solar/Solar 2 are few-sample settings ($n=300$), and LSPF is data-rich ($n=8760$).
}
\label{tab:real_missingness}
\renewcommand{\arraystretch}{1.12}
\setlength{\tabcolsep}{4pt}

\scalebox{0.62}{
\begin{tabular}{l ccc ccc ccc}
\toprule
& \multicolumn{3}{c}{\textbf{Solar}}
& \multicolumn{3}{c}{\textbf{Solar 2}}
& \multicolumn{3}{c}{\textbf{LSPF}} \\
\cmidrule(lr){2-4} \cmidrule(lr){5-7} \cmidrule(lr){8-10}
$\boldsymbol{p_{\mathrm{miss}}}$
& \textbf{0\%} & \textbf{20\%} & \textbf{40\%}
& \textbf{0\%} & \textbf{20\%} & \textbf{40\%}
& \textbf{0\%} & \textbf{20\%} & \textbf{40\%} \\
\midrule
MissDiff ($\rho=100\%$)
& $9.57_{\pm 0.29}$ & $48.2_{\pm 0.31}$ & $89.3_{\pm 0.29}$
& $3.75_{\pm 0.12}$ & $60.3_{\pm 0.15}$ & $98.4_{\pm 0.25}$
& $\mathbf{2.70_{\pm 0.20}}$ & $36.2_{\pm 0.35}$ & $48.7_{\pm 0.38}$ \\

MissFlow ($\rho=100\%$)
& $7.63_{\pm 0.32}$ & $37.3_{\pm 0.20}$ & $75.7_{\pm 0.17}$
& $2.05_{\pm 0.11}$ & $41.7_{\pm 0.12}$ & $86.2_{\pm 0.17}$
& $2.71_{\pm 0.22}$ & $33.9_{\pm 0.42}$ & $45.2_{\pm 0.31}$ \\

\rowcolor{coreflowbg}
\textbf{CoreFlow} ($\rho=36\%/49\%$)
& $\mathbf{1.27_{\pm 0.49}}$ & $\mathbf{1.57_{\pm 0.04}}$ & $\mathbf{2.76_{\pm 0.05}}$
& $\mathbf{0.77_{\pm 0.25}}$ & $\mathbf{1.67_{\pm 0.18}}$ & $\mathbf{1.99_{\pm 0.11}}$
& $2.88_{\pm 0.19}$ & $\mathbf{3.73_{\pm 0.19}}$ & $\mathbf{5.78_{\pm 0.10}}$ \\
\bottomrule
\end{tabular}
}
\label{tab:real data sv 1}
\end{table}

\begin{table}[h!]
\centering
\caption{
\small Comparison with low-dimensional baselines on the Solar datasets, measured by average singular-value discrepancy (lower is better).
Columns report the missing-entry rate $p_{\mathrm{miss}}$.
Entries are shown in $\times 10^{-2}$ as mean $\pm$ standard deviation.
Parentheses indicate the learned-dimension ratio $\rho$.
}
\label{tab:solar_lowdim}
\renewcommand{\arraystretch}{1.12}
\setlength{\tabcolsep}{5pt}

\scalebox{0.61}{
\begin{tabular}{l ccc ccc}
\toprule
& \multicolumn{3}{c}{\textbf{Solar}}
& \multicolumn{3}{c}{\textbf{Solar 2}} \\
\cmidrule(lr){2-4} \cmidrule(lr){5-7}
$\boldsymbol{p_{\mathrm{miss}}}$
& \textbf{0\%} & \textbf{20\%} & \textbf{40\%}
& \textbf{0\%} & \textbf{20\%} & \textbf{40\%} \\
\midrule
CSDM ($\rho=9\%$)
& $74.2_{\pm 1.1}$ & $79.1_{\pm 1.0}$ & $83.4_{\pm 0.90}$
& $75.5_{\pm 1.8}$ & $80.4_{\pm 1.4}$ & $85.2_{\pm 1.1}$ \\

CSDM ($\rho=36\%$)
& $51.5_{\pm 1.2}$ & $61.8_{\pm 1.2}$ & $71.5_{\pm 1.0}$
& $54.0_{\pm 1.9}$ & $64.3_{\pm 0.15}$ & $73.5_{\pm 1.2}$ \\

SMG-Core ($\rho=9\%$)
& $3.21_{\pm 0.90}$ & $4.95_{\pm 1.1}$ & $5.01_{\pm 1.2}$
& $2.51_{\pm 0.98}$ & $4.50_{\pm 1.0}$ & $8.43_{\pm 1.3}$ \\

\rowcolor{coreflowbg}
\textbf{CoreFlow} ($\rho=9\%$)
& $\mathbf{0.87_{\pm 0.10}}$ & $\mathbf{0.93_{\pm 0.07}}$ & $\mathbf{1.01_{\pm 0.10}}$
& $\mathbf{0.56_{\pm 0.16}}$ & $\mathbf{0.64_{\pm 0.12}}$ & $\mathbf{0.79_{\pm 0.07}}$ \\
\bottomrule
\end{tabular}
}
\label{tab:solar sv}
\end{table}

\paragraph{Solar Flare ($200{\times}200$, few-sample).} Solar Flare is our most challenging real-data setting, with $40{,}000$ ambient dimensions but only $300$ training matrices.
Figures~\ref{fig:cross method solar no missing} and~\ref{fig:cross_method_comparison} show visual comparisons, while Tables~\ref{tab:real data sv 1}, \ref{tab:real frob 1}, and~\ref{tab:real entrywise 1} show that CoreFlow, using $\rho=36\%$ for Solar and $\rho=49\%$ for LSPF, consistently outperforms full-space MissDiff and MissFlow. Compared to \textit{low-dimensional baselines}, CoreFlow achieves substantially better metrics than both CSDM and SMG-Core on Solar (Tables~\ref{tab:solar sv}, \ref{tab:solar frob}, and \ref{tab:solar entrywise}). CSDM uses a fixed compression operator that is not geometry-preserving, while SMG-Core models the core with a Gaussian posterior that may underestimate the expressiveness.




\textbf{LSPF ($80{\times}80$, data-rich).} On LSPF, where training data are abundant, the gap naturally narrows. Nevertheless, Table~\ref{tab:real data sv 1} shows that CoreFlow remains competitive while operating in only $49\%$ of the ambient dimension, compared with ambient-space baselines trained at full dimension. As missingness increases, its spectral fidelity also degrades more gracefully, with the same pattern reflected in the Frobenius and entry-wise results in Tables~\ref{tab:real frob 1} and \ref{tab:real entrywise 1}.

\begin{table}[t]
\centering
\caption{
\small Comparison with low-dimensional generative baselines on simulation datasets, measured by average singular-value discrepancy (lower is better).
Columns report the missing-entry rate $p_{\mathrm{miss}}$.
Entries are shown in $\times 10^{-2}$ as mean $\pm$ standard deviation.
Parentheses indicate the learned-dimension ratio $\rho$.
}
\label{tab:simulation_lowdim}
\renewcommand{\arraystretch}{1.08}
\setlength{\tabcolsep}{3pt}

\scalebox{0.58}{
\begin{tabular}{l ccc ccc ccc ccc}
\toprule
& \multicolumn{3}{c}{\textbf{Blobs}}
& \multicolumn{3}{c}{\textbf{Bands}}
& \multicolumn{3}{c}{\textbf{Waves}}
& \multicolumn{3}{c}{\textbf{Crosshatch}} \\
\cmidrule(lr){2-4} \cmidrule(lr){5-7} \cmidrule(lr){8-10} \cmidrule(lr){11-13}
$\boldsymbol{p_{\mathrm{miss}}}$
& \textbf{0\%} & \textbf{20\%} & \textbf{40\%}
& \textbf{0\%} & \textbf{20\%} & \textbf{40\%}
& \textbf{0\%} & \textbf{20\%} & \textbf{40\%}
& \textbf{0\%} & \textbf{20\%} & \textbf{40\%} \\
\midrule
CSDM ($\rho=1.44\%$)
& $409_{\pm 7.4}$ & $484_{\pm 7.9}$ & $559_{\pm 9.1}$
& $392_{\pm 7.2}$ & $461_{\pm 8.3}$ & $529_{\pm 9.6}$
& $89.4_{\pm 3.2}$ & $91.4_{\pm 2.5}$ & $93.5_{\pm 1.9}$
& $88.7_{\pm 2.8}$ & $90.8_{\pm 3.2}$ & $92.9_{\pm 4.8}$ \\

CSDM ($\rho=64\%$)
& $197_{\pm 5.4}$ & $230_{\pm 6.0}$ & $261_{\pm 6.3}$
& $196_{\pm 5.3}$ & $226_{\pm 6.1}$ & $254_{\pm 6.5}$
& $43.3_{\pm 1.6}$ & $57.8_{\pm 1.4}$ & $69.6_{\pm 1.7}$
& $41.7_{\pm 1.4}$ & $56.1_{\pm 1.4}$ & $67.8_{\pm 1.6}$ \\

SMG-Core ($\rho=1.44\%$)
& $195_{\pm 4.0}$ & $219_{\pm 5.1}$ & $254_{\pm 5.7}$
& $124_{\pm 3.3}$ & $179_{\pm 4.0}$ & $201_{\pm 4.9}$
& $18.9_{\pm 1.0}$ & $23.2_{\pm 1.2}$ & $27.7_{\pm 1.4}$
& $19.5_{\pm 0.90}$ & $24.1_{\pm 1.1}$ & $26.8_{\pm 1.3}$ \\

SMG-Core ($\rho=64\%$)
& $20.7_{\pm 1.2}$ & $28.3_{\pm 1.8}$ & $34.8_{\pm 2.3}$
& $16.8_{\pm 1.3}$ & $19.7_{\pm 1.7}$ & $24.1_{\pm 2.5}$
& $6.20_{\pm 0.50}$ & $10.8_{\pm 1.1}$ & $15.6_{\pm 1.2}$
& $8.12_{\pm 0.70}$ & $12.9_{\pm 1.2}$ & $17.8_{\pm 1.7}$ \\

\rowcolor{coreflowbg}
\textbf{CoreFlow} ($\rho=1.44\%$)
& $\mathbf{0.43_{\pm 0.12}}$ & $\mathbf{0.63_{\pm 0.14}}$ & $\mathbf{1.28_{\pm 0.20}}$
& $\mathbf{4.45_{\pm 0.74}}$ & $\mathbf{8.87_{\pm 0.68}}$ & $\mathbf{9.09_{\pm 0.32}}$
& $\mathbf{3.47_{\pm 0.17}}$ & $\mathbf{5.25_{\pm 0.07}}$ & $\mathbf{5.71_{\pm 0.31}}$
& $\mathbf{3.23_{\pm 0.50}}$ & $\mathbf{4.33_{\pm 0.21}}$ & $\mathbf{4.70_{\pm 0.26}}$ \\
\bottomrule
\end{tabular}
}
\label{tab:simulation data sv}
\end{table}

\textbf{Simulation ($200\times 200$, extreme compression).}
As shown in Table~\ref{tab:simulation data sv}, the simulation study considers an even harsher compression setting, where CoreFlow is trained in only $1.44\%$ of the ambient dimension. Even under this extreme reduction, it preserves the matrix spectrum well across all datasets, while both CSDM and SMG-Core degrade noticeably at the same dimensionality. CSDM and SMG-Core improve only when allowed a much larger compressed space.





\begin{figure}[t]
  \centering
  \captionsetup[subfigure]{font=footnotesize, justification=centering, skip=2pt}

  \begin{subfigure}[t]{0.48\linewidth}
    \centering
    \includegraphics[width=\linewidth]{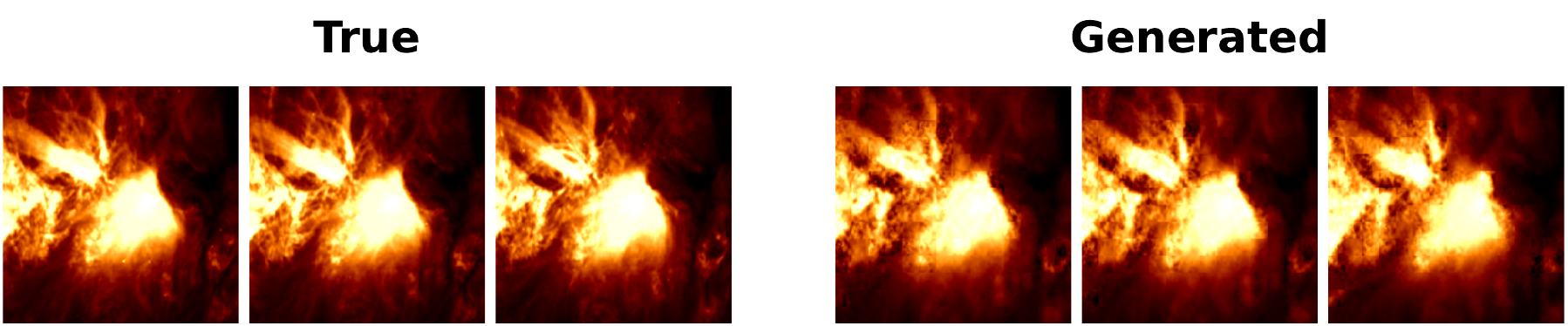}
    \caption{Solar (9\% dim., 40\% missing)}
  \end{subfigure}
  \hfill
  \begin{subfigure}[t]{0.48\linewidth}
    \centering
    \includegraphics[width=\linewidth]{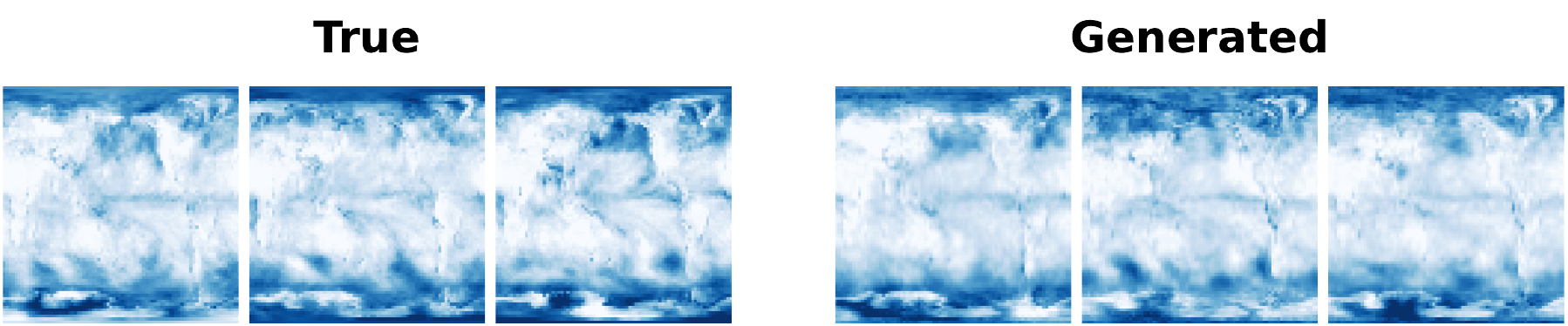}
    \caption{LSPF (49\% dim., 0\% missing)}
  \end{subfigure}

  \vspace{-0.3mm}

  \begin{subfigure}[t]{0.48\linewidth}
    \centering
    \includegraphics[width=\linewidth]{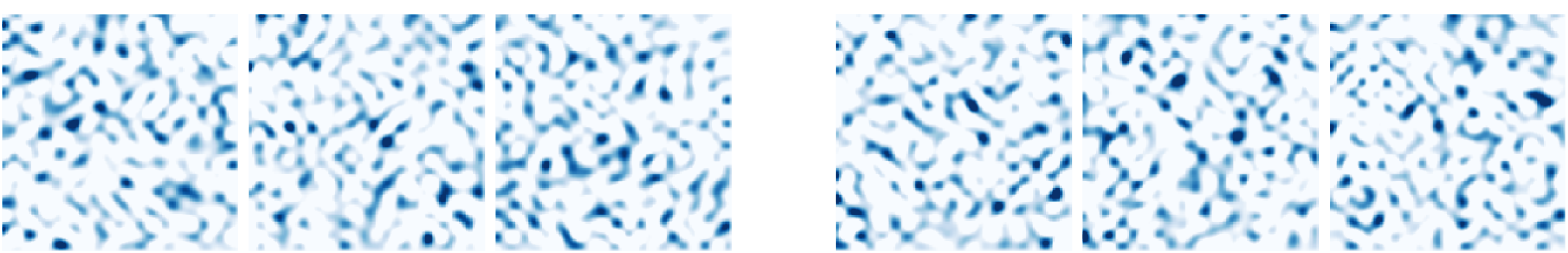}
    \caption{Blobs (1.44\% dim., 40\% missing)}
  \end{subfigure}
  \hfill
  \begin{subfigure}[t]{0.48\linewidth}
    \centering
    \includegraphics[width=\linewidth]{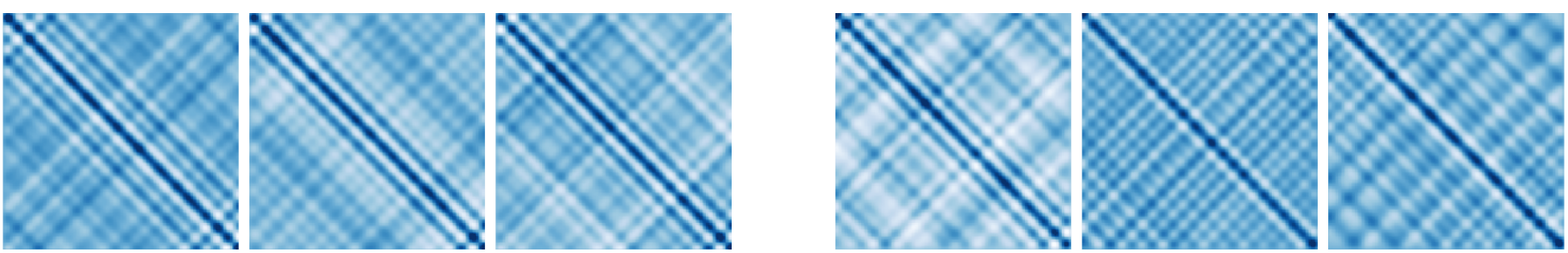}
    \caption{Crosshatch (1.44\% dim., 40\% missing)}
  \end{subfigure}

  \caption{\small True samples (left) and CoreFlow-generated samples (right) on real and synthetic benchmarks. The close visual agreement shows that CoreFlow captures the matrix distributions well even under aggressive dimensionality reduction and substantial training missingness.}
  \label{fig:true_vs_gen_all}
\end{figure}

\begin{figure}[H]
  \centering
  \begin{minipage}[t]{0.43\linewidth}
    \textbf{Computational efficiency.} Figure~\ref{fig:solar_efficiency} shows that CoreFlow achieves the best tradeoff between \emph{generative-model cost} and generation quality by learning only in the geometry-aware core space. On Solar, it uses just 9\% of the ambient dimension (about 11$\times$ lower cost proxy than ambient-space models) while still achieving the best spectral fidelity. On LSPF, it reduces the training dimension to 49\% (2.1$\times$ lower proxy cost) without sacrificing robustness. Full training times, including Stage~I for CoreFlow and converged training times for the baselines, are reported in Appendix~\ref{appendix:NN details}.
  \end{minipage}\hfill
  \begin{minipage}[t]{0.54\linewidth}
    \vspace{-1.0em}
    \centering
    \includegraphics[width=\linewidth]{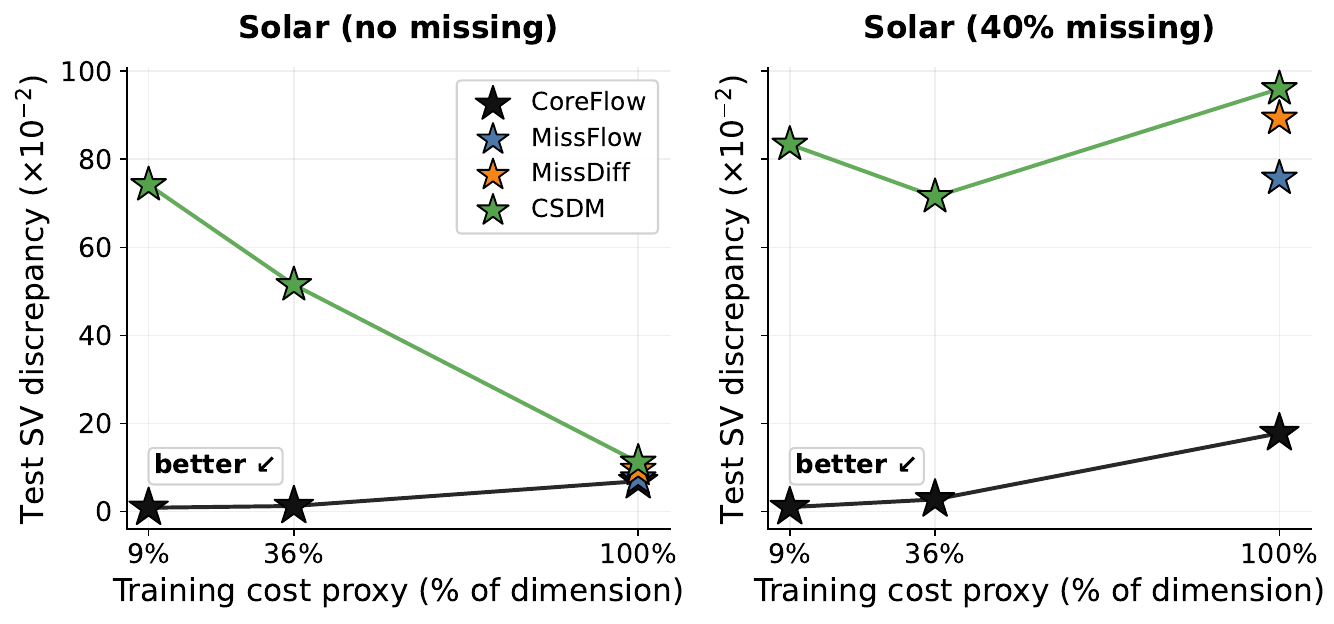}
    \caption{\small Performance--efficiency tradeoff on Solar among generative models. The horizontal axis is the learned-dimension ratio as a proxy for \emph{generative-model cost}, and the vertical axis is test singular-value discrepancy (lower is better). Full training times are reported in Appendix~\ref{appendix:NN details}.}
    \label{fig:solar_efficiency}
  \end{minipage}
\end{figure}
\textbf{Robustness to missing entries.}
Across all benchmarks, CoreFlow degrades only mildly as \(p_{\mathrm{miss}}\) increases, and substantially less than the baselines. This is because Stage~I learns \((U,V)\) with masked reconstruction and iterative completion (Alg.~\ref{alg:stage1-missing}). Table~\ref{tab:simulation all UV} further confirms reliable subspace recovery under missingness in simulations.

\subsection{Discussion}
\textit{Structure-aware Stage I matters.} CoreFlow learns shared row/column subspaces and models only the induced core distribution, reducing dimension while preserving matrix geometry. This explains its robustness under aggressive compression and missingness, where structure-agnostic compressors such as CSDM degrade. The PCA-Flow ablation in Appendix~\ref{appendix:ablation PCA} further shows that simply replacing Stage I with flattened PCA can be competitive on complete data, but is weaker overall and cannot handle missing entries. This highlights the benefit of our matrix-structured \(USV^\top\) representation.

\textit{CoreFlow enables few-sample high-dimensional generation.}
On Solar Flare, learning a \(40{,}000\)-dimensional distribution from \(300\) samples is highly underdetermined. By recovering shared low-rank geometry and learning only the core distribution, we make the generative task more stable. On data-rich LSPF, the gap narrows, but CoreFlow maintains a favorable efficiency--robustness tradeoff.

\section{Conclusions}
We presented CoreFlow, a geometry-preserving low-rank generative model for matrix generation that first learns shared row and column subspaces, and then fits a continuous normalizing flow only in the induced core space. This preserves matrix structure while substantially reducing the effective generative dimension. The PCA-Flow ablation further shows that low-dimensional flow modeling is useful, but that the matrix-structure-aware representation is crucial for geometry-preserving generation and for naturally handling incomplete training matrices. Theoretically, we establish an explicit separation between geometric error from subspace recovery and generative error from core-space modeling. Empirically, CoreFlow improves generation quality over ambient-space and geometry-agnostic baselines, especially in high-dimensional few-sample regimes, while remaining robust under substantial compression and missingness. A limitation is that CoreFlow is most effective when shared low-rank matrix geometry is present. While patchification can help reduce the effective rank, extending the method to more heterogeneous matrix distributions remains an important direction.

\section*{Acknowledgment}
This work is partially supported by NSF  CMMI-2112533, Emory Healthcare A1230749, and the Coca-Cola Foundation.

\bibliographystyle{plainnat}
\bibliography{reference}

\clearpage
\appendix
\thispagestyle{empty}

\section{Proofs}

\begin{proof}[Proof of Proposition \ref{proposition: global opt}]
Let $U\in\mathbb{R}^{m_1\times R}$ and $V\in\mathbb{R}^{m_2\times R}$ satisfy $U^\top U=I_R$ and $V^\top V=I_R$. Define the orthogonal projectors
\begin{equation*}
P_U := UU^\top,\qquad P_V := VV^\top.
\end{equation*}
Note that $P_U^\top P_U = P_U$ and $P_V^\top P_V = P_V$. The population loss becomes
\begin{equation*}
\mathcal{L}(U,V) = \mathbb{E}\|M-U(U^\top M V)V^\top\|_F^2
=\mathbb{E}\|M-P_U M P_V\|_F^2.
\end{equation*}
Expanding the Frobenius norm, we have
\begin{equation}
\|M-P_U M P_V\|_F^2
= \|M\|_F^2 - 2\langle M,\,P_U M P_V\rangle + \|P_U M P_V\|_F^2.
\label{eq:intermediate_loss}
\end{equation}
Using the cyclic property of the trace and the idempotence of projectors, the inner product term simplifies to
\begin{equation}
\begin{aligned}
\langle M,\,P_U M P_V\rangle &= \mathrm{tr}\big(M^\top P_U M P_V\big) \\
&= \mathrm{tr}\big(M^\top P_U^\top P_U M P_V^\top P_V\big) \\
&= \mathrm{tr}\big((P_U M P_V)^\top (P_U M P_V)\big) = \|P_U M P_V\|_F^2.
\end{aligned}
\label{eq:inner_product_M}
\end{equation}
Substituting \eqref{eq:inner_product_M} into \eqref{eq:intermediate_loss}, we obtain
\begin{equation*}
    \mathcal{L}(U,V)=\mathbb{E}\|M\|_F^2-\mathbb{E}\|P_U M P_V\|_F^2.
\end{equation*}
Since $\mathbb{E}\|M\|_F^2$ is independent of $(U,V)$, minimizing $\mathcal{L}(U,V)$ is equivalent to maximizing
\begin{equation*}
\mathcal{J}(U,V) := \mathbb{E}\|P_U M P_V\|_F^2.
\end{equation*}
Define matrices $A := U^\top U_0 \in\mathbb{R}^{R\times R}$ and $B := V^\top V_0 \in\mathbb{R}^{R\times R}$. Since $M = U_0 S_0 V_0^\top$ (under $\tau=0$), we have
\begin{equation*}
\mathcal{J}(U,V)
= \mathbb{E}\|P_U U_0 S_0 V_0^\top P_V\|_F^2
= \mathbb{E}\|U A S_0 B^\top V^\top\|_F^2
= \mathbb{E}\|A S_0 B^\top\|_F^2,
\end{equation*}
where the last equality holds due to the orthogonal invariance of the Frobenius norm. Further defining $P := A^\top A = U_0^\top P_U U_0$ and $Q := B^\top B = V_0^\top P_V V_0$, we can rewrite the objective as:
\begin{equation*}
\begin{aligned}
\mathcal{J}(U,V)
&= \mathbb{E}\,\mathrm{tr}(A S_0 B^\top B S_0^\top A^\top) 
= \mathbb{E}\,\mathrm{tr}(PS_0 Q S_0^\top) = \mathrm{tr}(P\,\mathbb{E}\,[S_0 Q S_0^\top]).
\end{aligned}
\end{equation*}
It is immediate that $P \succeq 0$ and $Q \succeq 0$. Furthermore, for any vector $x \in \mathbb{R}^R$, since multiplication by $U^\top$ (where $U$ has orthonormal columns) cannot increase the $\ell_2$-norm of a vector, we know that
\begin{equation*}
x^\top P x
= x^\top U_0^\top U U^\top U_0 x
= \|U^\top (U_0 x)\|_2^2
\le \|U_0 x\|_2^2
= \|x\|_2^2.
\end{equation*}
The identical logic applies to $Q$. Therefore, the eigenvalues of $P$ and $Q$ are bounded:
\begin{equation*}
0 \preceq P \preceq I_R, \qquad 0 \preceq Q \preceq I_R.
\end{equation*}
Using $P \preceq I_R$ and the fact that $\mathbb{E}[S_0QS_0^\top] \succeq 0$ it follows that
\begin{equation}
\mathcal{J}(U,V)
= \mathrm{tr}(P\,\mathbb{E}[S_0QS_0^\top])
\le \mathrm{tr}(\mathbb{E}[S_0QS_0^\top]).
\label{eq:upper_bound_1}
\end{equation}
Using the cyclicity of the trace and recall that $\Sigma_R = \mathbb{E}[S_0^\top S_0]$, we have
\begin{equation*}
\mathrm{tr}(\mathbb{E}[S_0QS_0^\top])
= \mathrm{tr}(Q\,\mathbb{E}[S_0^\top S_0])
= \mathrm{tr}(Q\Sigma_R).
\end{equation*}
Similarly, since $Q \preceq I_R$ and $\Sigma_R \succ 0$ (by Assumption \ref{assumption:shared subspace}), we further bound this by
\begin{equation}
\mathrm{tr}(Q\Sigma_R) \le \mathrm{tr}(\Sigma_R)
= \mathbb{E}\,\mathrm{tr}(S_0^\top S_0)
= \mathbb{E}\|S_0\|_F^2.
\label{eq:upper_bound_2}
\end{equation}
Combining \eqref{eq:upper_bound_1} and \eqref{eq:upper_bound_2} yields the global upper bound $\mathcal{J}(U,V)\le \mathbb{E}\|S_0\|_F^2$.

This bound is tight and strictly achievable. If we set $(U,V)=(U_0,V_0)$, then $P_{U_0}MP_{V_0} = U_0 U_0^\top M V_0 V_0^\top = M$. Thus
\begin{equation*}
\mathcal{J}(U_0,V_0) = \mathbb{E}\|P_{U_0} M P_{V_0}\|_F^2 = \mathbb{E}\|M\|_F^2 = \mathbb{E}\|U_0S_0V_0^\top\|_F^2 = \mathbb{E}\|S_0\|_F^2.
\end{equation*}
If a pair $(U,V)$ is a global optimizer, it must achieve the maximal upper bound. Consequently, the inequalities in \eqref{eq:upper_bound_1} and \eqref{eq:upper_bound_2} must hold as equalities, meaning:
\begin{equation*}
\mathrm{tr}((I_R - Q)\Sigma_R) = 0 \quad \text{and} \quad \mathrm{tr}((I_R - P)\mathbb{E}[S_0QS_0^\top]) = 0.
\end{equation*}
Since $\Sigma_R \succ 0$ and $I_R - Q \succeq 0$, the trace of their product is zero if and only if $I_R - Q = 0$, implying $Q = I_R$. Consequently, $\mathbb{E}[S_0QS_0^\top] = \mathbb{E}[S_0S_0^\top] = \Sigma_L \succ 0$. By the same logic, $I_R - P \succeq 0$ combined with $\Sigma_L \succ 0$ implies $P = I_R$. 

The conditions $P = I_R$ and $Q = I_R$ explicitly translate to
\begin{equation*}
    U_0^\top P_U U_0 = I_R \quad \text{and} \quad V_0^\top P_V V_0 = I_R.
\end{equation*}
Focusing on $U$, we can rewrite $U_0^\top P_U U_0 = I_R$ as $U_0^\top (I_{m_1} - P_U) U_0 = 0$. Since $I_{m_1} - P_U$ is a positive semi-definite projection matrix, this requires $(I_{m_1} - P_U)U_0 = 0$, or equivalently, $P_U U_0 = U_0$. 
This means $U (U^\top U_0) = U_0$, which strictly implies that $\text{span}(U_0) \subseteq \text{span}(U)$. Since both $U$ and $U_0$ have exactly rank $R$, their column spaces must be identical: $\text{span}(U_0) = \text{span}(U)$. By symmetric reasoning, $\text{span}(V_0) = \text{span}(V)$. Thus, the global optimizers are equivalent to the true subspaces up to orthogonal rotations. 
\end{proof}

\begin{proof}[Proof of Proposition \ref{proposition:tucker init}]
Under the exact low-rank setting ($\tau=0$) of Assumption \ref{assumption:shared subspace}, we have
\begin{equation*}
MM^\top = U_0 S_0 V_0^\top V_0 S_0^\top U_0^\top = U_0(S_0S_0^\top)U_0^\top.
\end{equation*}
Taking the expectation gives the population left second moment
\begin{equation*}
C_L = \mathbb{E}[MM^\top] = U_0 \mathbb{E}[S_0S_0^\top] U_0^\top = U_0\Sigma_L U_0^\top.
\end{equation*}
Since $\Sigma_L \succ 0$, $C_L$ is positive semi-definite with exactly rank $R$. Furthermore, $\Sigma_L$ and $C_L$ share the same non-zero eigenvalues. For any eigen-pair $(\mu,q)$ of $\Sigma_L$ such that $\Sigma_L q = \mu q$, we can define $u = U_0 q$. It follows that
\begin{equation*}
    C_L u = U_0\Sigma_L U_0^\top (U_0 q) = U_0\Sigma_L q = U_0 (\mu q) = \mu (U_0 q) = \mu u.
\end{equation*}
Therefore, the $R$-th largest eigenvalue of $C_L$ perfectly matches that of $\Sigma_L$
\begin{equation}    
\lambda_R(C_L) = \lambda_{R}(\Sigma_L).
\label{eq:proof_lambda_R_equal}
\end{equation}

Let $\Delta_L := \widehat{C}_L - C_L$ denote the empirical estimation error.
By the Courant–Fischer min-max principle \citep{horn2012matrix}, we know
\begin{equation*}
\lambda_k(\widehat{C}_L)
=\max_{\dim(\mathcal{S})=k}\;\min_{\substack{x\in\mathcal{S}\\ \|x\|_2=1}} x^\top \widehat{C}_L x
=\max_{\dim(\mathcal{S})=k}\;\min_{\substack{x\in\mathcal{S}\\ \|x\|_2=1}} x^\top (C_L + \Delta_L)x.
\end{equation*}
Since $x^\top \Delta_L x \ge \lambda_{\min}(\Delta_L)$ for all unit vectors $x$, we have
\begin{equation}
\lambda_k(\widehat{C}_L) \ge
\max_{\dim(\mathcal{S})=k}\;\min_{\substack{x\in\mathcal{S}\\ \|x\|_2=1}}
\big(x^\top C_L x + \lambda_{\min}(\Delta_L)\big)
= \lambda_k(C_L) + \lambda_{\min}(\Delta_L).
\label{eq:proof_lambda_ge}
\end{equation}
Similarly, bounding from above using $x^\top \Delta_L x \le \lambda_{\max}(\Delta_L)$ yields
\begin{equation*}
    \lambda_k(\widehat{C}_L) \le \lambda_k(C_L) + \lambda_{\max}(\Delta_L).
\end{equation*}
Since both $C_L$ and $\widehat{C}_L$ are symmetric, the perturbation $\Delta_L$ is symmetric, which directly implies
\begin{equation}
    \|\Delta_L\|_2 = \sqrt{\lambda_{\max}(\Delta_L^\top \Delta_L)} = \sqrt{\max_{i} \lambda_i^2(\Delta_L)} \ge -\lambda_{\min}(\Delta_L).
    \label{eq:proof_lambda_hat_C_L}
\end{equation}
By combining \eqref{eq:proof_lambda_R_equal}, \eqref{eq:proof_lambda_ge}, and \eqref{eq:proof_lambda_hat_C_L}, we establish a lower bound on the $R$-th eigenvalue of the empirical matrix
\begin{equation}
\lambda_R(\widehat{C}_L) \ge \lambda_R(C_L) - \|\Delta_L\|_2
= \lambda_{R}(\Sigma_L) - \|\Delta_L\|_2.
\label{eq:proof_step_2_lower_bound}
\end{equation}

Recall that $U^{(0)}$ is the matrix of the top-$R$ eigenvectors of $\widehat{C}_L$ and $\Lambda^{(0)}\in\mathbb{R}^{R\times R}$ is their corresponding diagonal eigenvalue matrix satisfying:
\begin{equation*}
\widehat{C}_L U^{(0)} = U^{(0)}\Lambda^{(0)},\qquad U^{(0)\top}U^{(0)} = I_R.
\end{equation*}
Let $U_{0,\perp}\in\mathbb{R}^{m_1\times(m_1-R)}$ denote an orthonormal complement of $U_0$. Left-multiplying the eigen-equation by $U_{0,\perp}^\top$ gives
\begin{equation}
    U_{0,\perp}^\top \widehat{C}_L U^{(0)} = U_{0,\perp}^\top U^{(0)}\Lambda^{(0)}.
    \label{eq:proof_U_0_hat_C_L_U_equal}
\end{equation}
Alternatively, expanding $\widehat{C}_L = C_L + \Delta_L$ and noting that $U_{0,\perp}^\top C_L = 0$ (since $C_L = U_0\Sigma_L U_0^\top$), we obtain
\begin{equation}
U_{0,\perp}^\top \widehat{C}_L U^{(0)}
= U_{0,\perp}^\top (C_L + \Delta_L)U^{(0)}
= U_{0,\perp}^\top C_L U^{(0)} + U_{0,\perp}^\top \Delta_L U^{(0)}
= U_{0,\perp}^\top \Delta_L U^{(0)}.
\label{eq:proof_U_0_hat_C_L_U_equal_2}
\end{equation}
Equating \eqref{eq:proof_U_0_hat_C_L_U_equal} and \eqref{eq:proof_U_0_hat_C_L_U_equal_2} yields
\begin{equation*}
U_{0,\perp}^\top \Delta_L U^{(0)} = U_{0,\perp}^\top U^{(0)}\Lambda^{(0)}.
\end{equation*}
Since $\|\widehat{C}_L - C_L\|_2 < \lambda_R(\Sigma_L)$, equation \eqref{eq:proof_step_2_lower_bound} ensures that $\lambda_R(\widehat{C}_L) > 0$. Therefore, $\Lambda^{(0)}$ is invertible, and we have 
\begin{equation*}
U_{0,\perp}^\top U^{(0)} = U_{0,\perp}^\top \Delta_L U^{(0)}\big(\Lambda^{(0)}\big)^{-1}.
\end{equation*}
This together with the facts that $\|U_{0,\perp}^\top\|_2 = 1$ and $\|U^{(0)}\|_2 = 1$ implies that
\begin{equation*}
\|U_{0,\perp}^\top U^{(0)}\|_2
\le \|\Delta_L\|_2 \cdot \|(\Lambda^{(0)})^{-1}\|_2
= \frac{\|\Delta_L\|_2}{\lambda_R(\widehat{C}_L)}\le
\frac{\|\Delta_L\|_2}{\lambda_{R}(\Sigma_L) - \|\Delta_L\|_2},
\end{equation*}
where the last inequality is from the lower bound \eqref{eq:proof_step_2_lower_bound}.
Finally, by the definition of principal angles, we have $\|U_{0,\perp}^\top U^{(0)}\|_2 = \|(I - U_0 U_0^\top)U^{(0)}\|_2 = \sin\Theta(U^{(0)},U_0)$, which concludes the proof. 
\end{proof}

\begin{proof}[Proof of Proposition \ref{prop:coreflow_fm_chi2}]
By the conditional-to-marginal equivalence of flow matching~\citep[Theorem~2]{lipmanflow}, the population minimizer of Algorithm~\ref{alg:coreflow}'s CFM loss for the linear interpolant \(x_t=(1-t)s+tz\), \(s\sim p_s^\ast\), \(z\sim q\), is the marginal velocity
$v(x,t)=\mathbb{E}[z-s\mid x_t=x].$
This velocity induces the density path \(\rho_t\) that satisfies the continuity equation and transports \(\rho_0=p_s^\ast\) to \(\rho_1=q\). Therefore, Assumption~\ref{asmp:fm_reg}(i) is a population approximation condition for \(\hat v\) relative to this marginal velocity \(v\).

Define the time-dependent divergences
\begin{equation*}
G(t):=\chi^2(\rho_t\|q)=\left\|\frac{\rho_t}{q}-1\right\|_{L^2(q)}^2,\qquad
\hat G(t):=\chi^2(\hat\rho_t\|q)=\left\|\frac{\hat\rho_t}{q}-1\right\|_{L^2(q)}^2.
\end{equation*}
Since the target vector field $v$ is valid, we have $\rho_1=q$, which implies $G(1)=0$.
By the reverse triangle inequality in the normed space $L^2(q)$,
\begin{equation*}
\left|\sqrt{\hat G(1)}-\sqrt{G(1)}\right|
\le \left\|\left(\frac{\hat\rho_1}{q}-1\right)-\left(\frac{\rho_1}{q}-1\right)\right\|_{L^2(q)}
= \left\|\frac{\hat\rho_1-\rho_1}{q}\right\|_{L^2(q)}.
\end{equation*}
Expanding the squared norm yields
\begin{equation*}
\left\|\frac{\hat\rho_1-\rho_1}{q}\right\|_{L^2(q)}^2
=\int_{\mathbb{R}^d}\Big(\frac{\hat\rho_1-\rho_1}{q}\Big)^2 q\,dx
=\int_{\mathbb{R}^d}\frac{(\hat\rho_1-\rho_1)^2}{q}\,dx
=\int_{\mathbb{R}^d}\frac{(\hat\rho_1-\rho_1)^2}{\hat\rho_1}\cdot\frac{\hat\rho_1}{q}\,dx.
\end{equation*}
By Assumption~\ref{asmp:fm_reg} (iii), both density paths satisfy the upper bound $\hat\rho_t(x) \le C_1 e^{-\|x\|^2/2}$. Consequently, the density ratio is bounded by
\begin{equation*}
\frac{\hat\rho_t(x)}{q(x)} \le \frac{C_1 e^{-\|x\|^2/2}}{c_d\, e^{-\|x\|^2/2}}=\frac{C_1}{c_d},
\qquad \forall\,t\in[0,1],\; x\in\mathbb{R}^d.
\label{eq:density_ratio_bound}
\end{equation*}
Next, we define the auxiliary function
\begin{equation*}
F(t):=\int_{\mathbb{R}^d}\frac{(\hat\rho_t-\rho_t)^2}{\hat\rho_t}\,dx 
=\int_{\mathbb{R}^d}\left(\frac{\rho_t}{\hat\rho_t}-1\right)^2 \hat\rho_t\,dx \ge 0.
\end{equation*}
Combining these observations, we obtain
\begin{equation}
\label{eq:chi2_to_F}
\sqrt{\hat G(1)}
\le
\Biggl(\int_{\mathbb{R}^d}\frac{(\hat\rho_1-\rho_1)^2}{\hat\rho_1}\cdot\frac{\hat\rho_1}{q}\,dx\Biggr)^{1/2}
\le
\left(\frac{C_1}{c_d}\right)^{1/2}\sqrt{F(1)}.
\end{equation}
 
Since $\rho_0=\hat\rho_0=p$, we initially have $F(0)=0$. Recall that the Neural ODE induces continuity equations governing both density paths
\begin{equation}
    \partial_t \rho_t+\nabla\!\cdot(\rho_t\, v)=0, \qquad
    \partial_t \hat\rho_t+\nabla\!\cdot(\hat\rho_t\, \hat v)=0.
    \label{equation:continuity}
\end{equation}
Differentiating $F(t)$ with respect to time yields
\begin{equation}
\begin{aligned}
\frac{d}{dt}F(t)
&= \int_{\mathbb{R}^d}\left(\frac{\rho_t}{\hat\rho_t}-1\right)^2\,\partial_t \hat\rho_t
+2\left(\frac{\rho_t}{\hat\rho_t}-1\right)\left(\partial_t\rho_t-\frac{\rho_t}{\hat\rho_t}\,\partial_t\hat\rho_t\right)dx
\\
&= \int_{\mathbb{R}^d} 2\,\partial_t\rho_t\left(\frac{\rho_t}{\hat\rho_t}-1\right)
-\partial_t\hat\rho_t\left(\left(\frac{\rho_t}{\hat\rho_t}\right)^2-1\right)dx
\\
&= \int_{\mathbb{R}^d} -2\,\nabla\!\cdot\!(\rho_t\,v)\left(\frac{\rho_t}{\hat\rho_t}-1\right)
\;+\;\nabla\!\cdot\!\left(\hat\rho_t\,\hat v\right)\left(\left(\frac{\rho_t}{\hat\rho_t}\right)^2-1\right)dx
\\
&= \int_{\mathbb{R}^d} 2(\rho_t\,v)\cdot\nabla\left(\frac{\rho_t}{\hat\rho_t}\right)
-(\hat\rho_t\,\hat v)\cdot \frac{2\rho_t}{\hat\rho_t}\,\nabla\left(\frac{\rho_t}{\hat\rho_t}\right)dx
\\
&= 2\int_{\mathbb{R}^d} \rho_t(v-\hat v)\cdot\nabla\left(\frac{\rho_t}{\hat\rho_t}\right)dx
\\
&= 2\int_{\mathbb{R}^d} (v-\hat v)\cdot\frac{\rho_t^2}{\hat\rho_t}
(\nabla\log\rho_t-\nabla\log\hat\rho_t)dx,
\end{aligned}
\label{equation:F(t) derivative}
\end{equation}
where the third equality substitutes the continuity equations~\eqref{equation:continuity}, and the fourth equality follows from integration by parts. Note that the boundary terms strictly vanish due to the Gaussian tail decay specified in Assumption~\ref{asmp:fm_reg}(iii). Integrating~\eqref{equation:F(t) derivative} over $t\in[0,1]$ and applying $F(0)=0$ gives
\begin{equation}
F(1)=2\int_0^1\!\!\int_{\mathbb{R}^d} \bigl(v-\hat v\bigr)\cdot\frac{\rho_t^2}{\hat\rho_t}\,
\bigl(\nabla\log\rho_t-\nabla\log\hat\rho_t\bigr)dx dt.
\label{equation:F(1)}
\end{equation}
By factoring the integrand as $(\|v-\hat v\|\sqrt{\rho_t})\cdot(\|\nabla\log\rho_t-\nabla\log\hat\rho_t\|\rho_t^{3/2}/\hat\rho_t)$ and applying the Cauchy--Schwarz inequality in $L^2(dxdt)$, we can bound~\eqref{equation:F(1)} as follows
\begin{equation*}
\label{eq:CS_bound}
\frac12 F(1)
\le
\left(\int_0^1\!\!\int_{\mathbb{R}^d} \|v-\hat v\|^2\,\rho_t dx dt\right)^{1/2}
\cdot\left(\int_0^1\!\!\int_{\mathbb{R}^d} \|\nabla\log\rho_t-\nabla\log\hat\rho_t\|^2
\frac{\rho_t^3}{\hat\rho_t^2}dx dt\right)^{1/2}.
\end{equation*}
For the first factor, Assumption~\ref{asmp:fm_reg} (i) directly implies that the square root is bounded by $\varepsilon$. For the second factor, the score-difference bound in Assumption~\ref{asmp:fm_reg} (iv) provides
\begin{equation*}
\|\nabla\log\rho_t-\nabla\log\hat\rho_t\|
\le \|\nabla\log\rho_t\|+\|\nabla\log\hat\rho_t\|
\le 2L(1+\|x\|).
\end{equation*}
Consequently, for any $t \in [0,1]$, we have
\begin{equation*}
\int_{\mathbb{R}^d} \|\nabla\log\rho_t-\nabla\log\hat\rho_t\|^2\frac{\rho_t^3}{\hat\rho_t^2}\,dx
\le (2L)^2 \int_{\mathbb{R}^d} (1+\|x\|)^2\frac{\rho_t^3}{\hat\rho_t^2}\,dx
\le (2L)^2 C_2,
\end{equation*}
where the final inequality leverages the moment bound from Assumption~\ref{asmp:fm_reg} (v). Combining both factors, we conclude
\begin{equation}
\label{eq:F1_bound}
F(1)\le 4L\sqrt{C_2}\varepsilon =: C_3\varepsilon.
\end{equation}
 
Plugging \eqref{eq:F1_bound} into \eqref{eq:chi2_to_F} and recalling $G(1)=0$, we obtain
the forward divergence bound
\begin{equation*}
\label{eq:chi2_forward}
\chi^2(\hat\rho_1\|q)=\hat G(1)\le \frac{C_1}{c_d}\cdot C_3\,\varepsilon = C_F\,\varepsilon,
\end{equation*}
where $C_F := (C_1/c_d)\,C_3 = 4L\sqrt{C_2}\cdot C_1/c_d$.

By definition, $\hat p_s=\hat T^{-1}_\# q$, which yields $\hat T_\# \hat p_s=q$. Furthermore, $\hat\rho_1=\hat T_\# p$. We now prove the pushforward identity $\chi^2(p_s^*\|\hat p_s)=\chi^2(\hat\rho_1\|q)$. Since the estimated vector field $\hat v$ is $L_v$-Lipschitz (Assumption~\ref{asmp:fm_reg} (ii)), the resulting flow map $\hat T:\mathbb{R}^d\to\mathbb{R}^d$ is a valid $C^1$-diffeomorphism with a strictly positive Jacobian $J(x):=|\det\nabla\hat T(x)|>0$ everywhere. For $y=\hat T(x)$, the pushforward densities satisfy
\begin{equation*}
(\hat T_\# p_s^*)(y)=\frac{p_s^*(x)}{J(x)},\qquad (\hat T_\# \hat p_s)(y)=\frac{\hat p_s(x)}{J(x)},
\end{equation*}
ensuring that the density ratio is preserved: $(\hat T_\# p_s^*)(y)/(\hat T_\# \hat p_s)(y)=p_s^*(x)/\hat p_s(x)$. Applying the change of variables $y=\hat T(x)$ (where $dy=J(x)\,dx$), we evaluate
\begin{equation*}
\begin{aligned}
\chi^2(\hat T_\# p_s^* \,\|\, \hat T_\# \hat p_s)
&=\int_{\mathbb{R}^d}\left(\frac{(\hat T_\# p_s^*)(y)}{(\hat T_\# \hat p_s)(y)}-1\right)^2(\hat T_\# \hat p_s)(y)dy\\
&=\int_{\mathbb{R}^d}\left(\frac{p_s^*(x)}{\hat p_s(x)}-1\right)^2\frac{\hat p_s(x)}{J(x)}J(x)dx=\chi^2(p_s^*\|\hat p_s).
\end{aligned}
\end{equation*}
Therefore, we conclude the proof with
\begin{equation*}
\chi^2(p_s^*\|\hat p_s)=\chi^2(\hat T_\# p_s^* \,\|\, \hat T_\# \hat p_s)=\chi^2(\hat\rho_1\|q)\le C_F\varepsilon.
\end{equation*}
\end{proof}

\begin{proof}[Proof of Theorem~\ref{thm:end_to_end}]

Condition on the fixed Stage-I output \((U^*,V^*)\), and define
\[
P_U := U^*U^{*\top},\qquad P_V := V^*V^{*\top}.
\]
For \(M\sim p_M\), let
\[
S := U^{*\top}MV^*,\qquad s:=\mathrm{vec}(S),
\]
so that \(s\sim p_s^*\). Let \(\hat S:=\mathrm{mat}(\hat s)\), where
\(\hat s\sim \hat p_s\), and define
\[
\hat M := U^*\hat S V^{*\top}.
\]
We couple \((s,\hat s)\) via the optimal \(W_2\)-coupling between
\(p_s^*\) and \(\hat p_s\), so that
\[
\mathbb{E}\|S-\hat S\|_F^2
=
\mathbb{E}\|s-\hat s\|_2^2
=
W_2^2(p_s^*,\hat p_s).
\]
Given \(S\), draw \(M\) from its true conditional distribution under the
constraint \(S=U^{*\top}MV^*\). This gives a valid joint distribution over
\((M,\hat M)\) with marginals \(p_M\) and \(\hat p_M\). Therefore,
\begin{equation*}
W_2^2(\hat p_M,p_M)
\le
\mathbb{E}\|M-\hat M\|_F^2.
\end{equation*}

Using \(S=U^{*\top}MV^*\), we have
\[
U^*SV^{*\top}
=
U^*U^{*\top}MV^*V^{*\top}
=
P_UMP_V.
\]
To avoid overloading notation, write the oracle low-rank representation in
Assumption~\ref{assumption:shared subspace} as
\[
M=U_0S_0V_0^\top+E,
\]
where \(S_0\) is the oracle core and \(\|S_0\|_2\le C_S\). Adding and subtracting
the intermediate term \(U^*SV^{*\top}=P_UMP_V\) gives
\begin{equation*}
\begin{aligned}
M-\hat M
&=
\bigl(M-U^*SV^{*\top}\bigr)
+
\bigl(U^*SV^{*\top}-\hat M\bigr)
\\&= U_0S_0V_0^\top+E-P_U(U_0S_0V_0^\top+E)P_V + U^*SV^{*\top}- U^*\hat S V^*
\\
&=
\underbrace{
\bigl(U_0S_0V_0^\top-P_UU_0S_0V_0^\top P_V\bigr)
}_{\displaystyle =:\,\Delta_{\mathrm{sub}}}
+
\underbrace{
U^*(S-\hat S)V^{*\top}
}_{\displaystyle =:\,\Delta_{\mathrm{core}}}
+
\underbrace{
\bigl(E-P_UEP_V\bigr)
}_{\displaystyle =:\,\Delta_{\mathrm{noise}}}.
\end{aligned}
\end{equation*}
Applying Minkowski's inequality in \(L^2\) over the joint distribution of
\((M,S,\hat S)\),
\begin{equation}
\label{eq:minkowski}
\bigl(\mathbb{E}\|M-\hat M\|_F^2\bigr)^{1/2}
\le
\bigl(\mathbb{E}\|\Delta_{\mathrm{sub}}\|_F^2\bigr)^{1/2}
+
\bigl(\mathbb{E}\|\Delta_{\mathrm{core}}\|_F^2\bigr)^{1/2}
+
\bigl(\mathbb{E}\|\Delta_{\mathrm{noise}}\|_F^2\bigr)^{1/2}.
\end{equation}

We first bound the subspace mismatch term. Since
\[
\Delta_{\mathrm{sub}}
=
U_0S_0V_0^\top-P_UU_0S_0V_0^\top P_V,
\]
we decompose
\begin{equation*}
\Delta_{\mathrm{sub}}
=
(I-P_U)U_0S_0V_0^\top
+
P_UU_0S_0V_0^\top(I-P_V).
\end{equation*}
By the triangle inequality and mixed-norm submultiplicativity, and the fact that $\|V_0\|_2=1$ and $\|P_UU_0\|_2\le 1$,
\begin{equation*}
\begin{aligned}
\|\Delta_{\mathrm{sub}}\|_F
&\le
\|(I-P_U)U_0S_0V_0^\top\|_F
+
\|P_UU_0S_0V_0^\top(I-P_V)\|_F \\
&\le
\|(I-P_U)U_0\|_2\|S_0\|_F\|V_0\|_2
+
\|P_UU_0\|_2\|S_0\|_F\|V_0^\top(I-P_V)\|_2 \\
&\le
\bigl(
\sin\Theta(U^*,U_0)+\sin\Theta(V^*,V_0)
\bigr)\|S_0\|_F.
\end{aligned}
\end{equation*}
Taking the \(L^2\)-norm gives
\begin{equation}
\label{eq:bound_subspace}
\begin{aligned}
\bigl(\mathbb{E}\|\Delta_{\mathrm{sub}}\|_F^2\bigr)^{1/2}
&\le
\bigl(\mathbb{E}\|S_0\|_F^2\bigr)^{1/2}
\bigl(
\sin\Theta(U^*,U_0)+\sin\Theta(V^*,V_0)
\bigr) \\
&\le
\sqrt{R}\,C_S
\bigl(
\sin\Theta(U^*,U_0)+\sin\Theta(V^*,V_0)
\bigr),
\end{aligned}
\end{equation}
where we used Assumption~\ref{assumption:shared subspace}(ii): $\mathbb E\|S_0\|_F^2\le R C_S^2.$


Next, since \(U^*\) and \(V^*\) have orthonormal columns,
\begin{equation*}
\mathbb{E}\|\Delta_{\mathrm{core}}\|_F^2
=
\mathbb{E}\|U^*(S-\hat S)V^{*\top}\|_F^2
=
\mathbb{E}\|S-\hat S\|_F^2
=
W_2^2(p_s^*,\hat p_s).
\end{equation*}
By Proposition~\ref{prop:coreflow_fm_chi2} and \(\mathrm{KL}\le \chi^2\), we have
\[
\mathrm{KL}(p_s^*\|\hat p_s)\le C_F\varepsilon.
\]
Since \(\hat p_s=\hat T^{-1}_{\#}\mathcal{N}(0,I)\) and \(\hat T^{-1}\) is
\(e^{L_v}\)-Lipschitz by Gr\"onwall and
Assumption~\ref{asmp:fm_reg}(ii), \(\hat p_s\) satisfies Talagrand's
\(T_2(e^{2L_v})\) inequality~\citep{otto2000generalization,villani2009optimal}.
Therefore,
\begin{equation}
\label{eq:bound_flow}
W_2(p_s^*,\hat p_s)
\le
e^{L_v}\sqrt{2\,\mathrm{KL}(p_s^*\|\hat p_s)}
\le
e^{L_v}\sqrt{2C_F\varepsilon}.
\end{equation}


Finally, the noise term is bounded directly by Assumption~\ref{assumption:shared subspace}.
Since \(E\mapsto P_UEP_V\) is an orthogonal projection in Frobenius norm,
\[
\|E-P_UEP_V\|_F\le \|E\|_F.
\]
Hence
\begin{equation}
\label{eq:bound_noise}
\begin{aligned}
\bigl(\mathbb{E}\|\Delta_{\mathrm{noise}}\|_F^2\bigr)^{1/2}
&=
\bigl(\mathbb{E}\|E-P_UEP_V\|_F^2\bigr)^{1/2} \\
&\le
\bigl(\mathbb{E}\|E\|_F^2\bigr)^{1/2} \\
&\leq 
\sqrt{\min(m_1,m_2)}\cdot
\bigl(\mathbb{E}\|E\|_2^2\bigr)^{1/2} \leq
\sqrt{\min(m_1,m_2)}\;\tau .
\end{aligned}
\end{equation}
Substituting \eqref{eq:bound_subspace}, \eqref{eq:bound_flow}, and
\eqref{eq:bound_noise} into \eqref{eq:minkowski} gives
\[
W_2(\hat p_M,p_M)
\le
\sqrt{R}C_S
\bigl(
\sin\Theta(U^*,U_0)+\sin\Theta(V^*,V_0)
\bigr)
+
e^{L_v}\sqrt{2C_F\varepsilon}
+
\sqrt{\min(m_1,m_2)}\,\tau .
\]
This proves the theorem.
\end{proof}

\begin{remark}[Error metric alignment]
\label{remark:metric_alignment}
Theorem~\ref{thm:end_to_end} unifies errors across 
disparate spaces: ambient Frobenius norm (subspace mismatch 
and data noise) and density-space $\chi^2$-divergence (core 
generation). We bridge these via standard optimal transport 
inequalities, bounding the $W_2$ cost through 
$\mathrm{KL} \le \chi^2$ and Talagrand's $T_2$ inequality. 
Because the inverse flow $\hat{T}^{-1}$ is 
$e^{L_v}$-Lipschitz (by the Gr\"onwall inequality), the 
classical Gaussian $T_2(1)$ 
property~\citep{otto2000generalization} transfers to the 
generated measure $\hat{p}_s$ with constant 
$e^{2L_v}$~\citep{villani2009optimal}. Finally, the 
orthonormality of $(U^*, V^*)$ allows this 
core-space $W_2$ coupling to lift isometrically to the 
ambient Frobenius distance, thus unifying all terms.
\end{remark}

\section{Stage I: Learn $(U,V)$ from Incomplete Matrices}
\label{appendix:stage 1 incomplete}

The Stage I training procedure for incomplete matrices is provided in Algorithm~\ref{alg:stage1-missing}.

\begin{algorithm}[h!]
\caption{Stage I for \textsc{CoreFlow}: Learn $(U,V)$ from Incomplete Matrices}
\label{alg:stage1-missing}
\begin{algorithmic}[1]
\REQUIRE Observations $\{(M_i^{\mathrm{obs}},\Omega_i)\}_{i=1}^N$;
rank $R$; outer iterative steps $E$; inner iterative steps $T_{\mathrm{sub}}$; learning rates $\eta_U,\eta_V$.
\STATE Initialize $U\in\mathrm{St}(m_1,R)$, $V\in\mathrm{St}(m_2,R)$ via spectral initialization.
\STATE Initialize filled-in states $\widetilde M_i \gets \mathcal{P}_{\Omega_i}(M_i^{\mathrm{obs}})$ for all $i$.
\FOR{$e=1$ to $E$}
  \STATE \textbf{(a) Subspace update with masked loss}
  \FOR{$t=1$ to $T_{\mathrm{sub}}$}
    \STATE Sample mini-batch $\mathcal{B}\subset\{1,\dots,N\}$.
    \STATE $\mathcal{L}_{\mathrm{miss}} \gets \frac{1}{|\mathcal{B}|}\sum_{i\in\mathcal{B}}
    \left\|\mathcal{P}_{\Omega_i}\!\left(M_i^{\mathrm{obs}}-U U^\top \widetilde M_i V V^\top\right)\right\|_F^2$.
    \STATE Compute $G_U\gets\nabla_U\mathcal{L}_{\mathrm{miss}}$, $G_V\gets\nabla_V\mathcal{L}_{\mathrm{miss}}$.
    \STATE $U \gets \textsc{StiefelStep}(U,G_U,\eta_U)$;\quad
           $V \gets \textsc{StiefelStep}(V,G_V,\eta_V)$.
  \ENDFOR

  \STATE \textbf{(b) Completion update: keep observed entries, impute missing ones}
  \FOR{$i=1$ to $N$}
    \STATE $\widehat M_i \gets U\,U^\top \widetilde M_i V\,V^\top$.
    \STATE $\widetilde M_i \gets \mathcal{P}_{\Omega_i}(M_i^{\mathrm{obs}})+\mathcal{P}_{\Omega_i^c}(\widehat M_i)$.
  \ENDFOR
\ENDFOR
\STATE \textbf{return} $U^*\gets U,\;V^*\gets V,\; \{\widetilde M_i\}_{i=1}^N$.
\end{algorithmic}
\end{algorithm}

\section{Experimental Details}

\subsection{Metrics}
\label{appendix:metrics}

Let $\{M^{\text{true}}_{b}\}_{b=1}^{B_{\text{true}}}$ denote the set of ground-truth matrices and $\{M^{\text{gen}}_{b}\}_{b=1}^{B_{\text{gen}}}$ denote the set of generated matrices, where each $M \in \mathbb{R}^{m_1 \times m_2}$. We calculate the metrics below that compare the empirical distributions induced by these two sets.

\paragraph{Entrywise moment discrepancy (mean).}
For each entry $j \in \{1,\dots,m_1 m_2\}$ (after flattening the matrix), we define the per-entry empirical means as:
\begin{equation*}
\mu^{\text{true}}_{j} \;=\; \frac{1}{B_{\text{true}}}\sum_{b=1}^{B_{\text{true}}} M^{\text{true}}_{b,j},
\qquad
\mu^{\text{gen}}_{j} \;=\; \frac{1}{B_{\text{gen}}}\sum_{b=1}^{B_{\text{gen}}} M^{\text{gen}}_{b,j},
\end{equation*}
and the absolute per-entry mean difference is $\Delta^{\mu}_{j} = \lvert \mu^{\text{gen}}_{j} - \mu^{\text{true}}_{j} \rvert$.
We report the average of $\{\Delta^\mu_j\}_{j=1}^{m_1m_2}$ across all entries:
\begin{equation*}
\mathrm{AbsEntryMeanDiff} \;=\; \frac{1}{m_1m_2}\sum_{j=1}^{m_1m_2} \Delta^{\mu}_{j}.
\end{equation*}

\paragraph{Entrywise moment discrepancy (standard deviation).}
Analogously, we define the per-entry empirical standard deviations as:
\begin{equation*}
\sigma^{\text{true}}_{j} \;=\; \mathrm{Std}\!\left(\{M^{\text{true}}_{b,j}\}_{b=1}^{B_{\text{true}}}\right),
\qquad
\sigma^{\text{gen}}_{j} \;=\; \mathrm{Std}\!\left(\{M^{\text{gen}}_{b,j}\}_{b=1}^{B_{\text{gen}}}\right),
\end{equation*}
and the absolute per-entry standard-deviation difference $\Delta^{\sigma}_{j} = \lvert \sigma^{\text{gen}}_{j} - \sigma^{\text{true}}_{j} \rvert$.
We report the average of it across all entries:
\begin{equation*}
\mathrm{AbsEntryStdDiff} \;=\; \frac{1}{m_1m_2}\sum_{j=1}^{m_1m_2} \Delta^{\sigma}_{j}.
\end{equation*}
These entrywise metrics quantify how well the generator matches the first two moments \emph{at each spatial location}.

\paragraph{Frobenius-norm distribution shift.}
For each matrix, we compute its Frobenius norm:
\begin{equation*}
r^{\text{true}}_{b} \;=\; \lVert M^{\text{true}}_{b} \rVert_F,
\qquad
r^{\text{gen}}_{b} \;=\; \lVert M^{\text{gen}}_{b} \rVert_F.
\end{equation*}
Let $\bar r^{\text{true}}$ and $\bar r^{\text{gen}}$ denote the empirical means of $\{r^{\text{true}}_{b}\}_{b=1}^{B_{\text{true}}}$ and $\{r^{\text{gen}}_{b}\}_{b=1}^{B_{\text{gen}}}$, and we let $s^{\text{true}}$ and $s^{\text{gen}}$ denote the empirical standard deviations. We report absolute differences
\begin{equation*}
\mathrm{FrobMeanDiff} \;=\; |\bar r^{\text{gen}} - \bar r^{\text{true}}|,
\qquad
\mathrm{FrobStdDiff} \;=\;| s^{\text{gen}} - s^{\text{true}}|.
\end{equation*}
These statistics summarize distributional shifts in overall matrix energy and its variability.

\paragraph{Average singular-value spectrum distance.}
Let $\sigma_i(M)$ denote the $i$-th singular value of $M$, for $i=1,\dots,k$ with $k=\min(m_1,m_2)$.
We define the average singular-value spectra as:
\begin{equation*}
\bar{\boldsymbol{\sigma}}^{\text{true}} \;=\; \frac{1}{B_{\text{true}}}\sum_{b=1}^{B_{\text{true}}} 
\bigl(\sigma_1(M^{\text{true}}_b),\dots,\sigma_k(M^{\text{true}}_b)\bigr),
\qquad
\bar{\boldsymbol{\sigma}}^{\text{gen}} \;=\; \frac{1}{B_{\text{gen}}}\sum_{b=1}^{B_{\text{gen}}} 
\bigl(\sigma_1(M^{\text{gen}}_b),\dots,\sigma_k(M^{\text{gen}}_b)\bigr).
\end{equation*}
We then compute the relative $\ell_2$ discrepancy as:
\begin{equation*}
\mathrm{SVRelL2} \;=\; 
\frac{\left\lVert \bar{\boldsymbol{\sigma}}^{\text{true}} - \bar{\boldsymbol{\sigma}}^{\text{gen}} \right\rVert_2}
{\left\lVert \bar{\boldsymbol{\sigma}}^{\text{true}} \right\rVert_2 + \varepsilon},
\qquad \varepsilon = 10^{-8}.
\end{equation*}
This metric compares the global spectral structure of the two sets of matrices.

We also evaluate how good the estimates are about $U$, $V$, as well as generated low-rank matrix $M$.

\paragraph{Principal Angles.} For simulation cases where $U_{\mathrm{true}}$ and $V_{\mathrm{true}}$ are available, we also measure subspace recovery for both \(U\) and \(V\). Given the ground-truth and estimated factors \(U_{\text{true}},U^*\in\mathbb{R}^{m\times R}\) (analogously \(V_{\text{true}},V^*\)), we first orthonormalize the columns via reduced QR:
\begin{equation*}
Q_{\text{true}}=\mathrm{qr}(U_{\text{true}}),\qquad Q^*=\mathrm{qr}(U^*),
\end{equation*}
and form the overlap matrix \(M=Q_{\text{true}}^\top Q^*\in\mathbb{R}^{R\times R}\). As in our implementation, we compute singular values \(\{\sigma_r\}_{r=1}^R=\mathrm{svdvals}(M)\), clamp them to \([-1,1]\), and define principal angles by
\begin{equation*}
\theta_r=\arccos(\sigma_r)\in\Big[0,\tfrac{\pi}{2}\Big]
\quad(\text{equivalently } \theta_r\in[0,90^\circ]).
\end{equation*}
We report the mean principal angle (in degrees) for \(U\) and \(V\) separately:
\begin{equation*}
\overline{\theta}_U=\frac1R\sum_{r=1}^R \theta_r^{(U)},\qquad
\overline{\theta}_V=\frac1R\sum_{r=1}^R \theta_r^{(V)}.
\end{equation*}

\paragraph{Maximum Mean Discrepancy (MMD).}
To compare the generated and true matrix distributions directly in ambient matrix space, we also report the maximum mean discrepancy (MMD) with an RBF kernel on Frobenius distance. Given true and generated samples $\{M_i^{\mathrm{true}}\}_{i=1}^n$ and $\{\widehat M_j\}_{j=1}^m$, we define
\begin{equation*}
k(M,\widetilde M)=\exp\!\left(-\frac{\|M-\widetilde M\|_F^2}{2\sigma^2}\right),
\end{equation*}
where the bandwidth is chosen by the pooled-sample median,
\begin{equation*}
\sigma^2=\operatorname{median}\Bigl\{\|Z_a-Z_b\|_F^2:\ a<b,\ Z\in\{M_i^{\mathrm{true}}\}_{i=1}^n\cup\{\widehat M_j\}_{j=1}^m\Bigr\}.
\end{equation*}
As in our implementation, we use the unbiased U-statistic estimator
\begin{equation*}
\widehat{\mathrm{MMD}}_u^2
=
\frac{1}{n(n-1)}\sum_{i\neq i'} k(M_i^{\mathrm{true}},M_{i'}^{\mathrm{true}})
+\frac{1}{m(m-1)}\sum_{j\neq j'} k(\widehat M_j,\widehat M_{j'})
-\frac{2}{nm}\sum_{i=1}^n\sum_{j=1}^m k(M_i^{\mathrm{true}},\widehat M_j),
\end{equation*}
and report $\mathrm{MMD}=\sqrt{\max(\widehat{\mathrm{MMD}}_u^2,0)}$, where smaller values indicate better distributional alignment.

\subsection{Patchification details and illustrations}
\label{appendix:patching}

Many real-world matrices are not globally low-rank, yet they often exhibit strong \emph{local} redundancy.
Motivated by patch-based computer vision methods that exploit self-similarity through patch extraction and grouping (e.g., \citep{buades2005non,dabov2007image,dosovitskiy2020image}), we can apply a simple \emph{patchification} operator before learning shared subspaces, which yields an effectively lower-rank representation even when the original matrix is not inherently low-rank.

Specifically, given a matrix $M\in\mathbb{R}^{m_1\times m_2}$, we choose a patch size $p_1\times p_2$ and extract $n_p$ local submatrices (patches), denoted by $\{P_k(M)\in\mathbb{R}^{p_1\times p_2}\}_{k=1}^{n_p}$.
We then form the \emph{patch matrix} by vectorizing each patch and stacking the resulting row vectors:
\begin{equation}
M_{\mathrm{patch}}\;\triangleq\;
\begin{bmatrix}
\mathrm{vec}(P_1(M))^\top\\
\vdots\\
\mathrm{vec}(P_{n_p}(M))^\top
\end{bmatrix}
\in\mathbb{R}^{n_p\times (p_1p_2)}.
\label{equation:patch}
\end{equation}
Because repeated local patterns induce highly correlated rows or columns in $M_{\mathrm{patch}}$, it typically has a much faster singular-value decay (i.e., a lower effective rank) than $M$. See Figures \ref{fig:solar_original_vs_patch} and \ref{fig:rank-drop} for an illustrative spectrum comparison demonstrating the rank reduction by patchification.

\begin{figure}[h!]
  \centering

  \begin{subfigure}[t]{0.98\linewidth}
    \centering
    \includegraphics[width=0.60\textwidth]{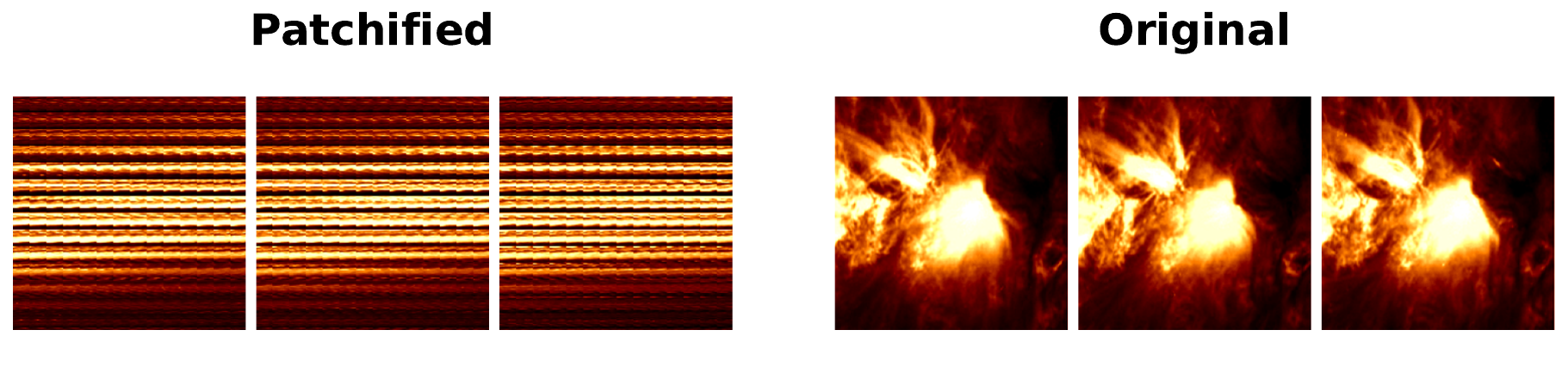}
    \label{fig:solar_original_vs_patch_a}
  \end{subfigure}

  \begin{subfigure}[t]{0.98\linewidth}
    \centering
    \includegraphics[width=0.60\textwidth]{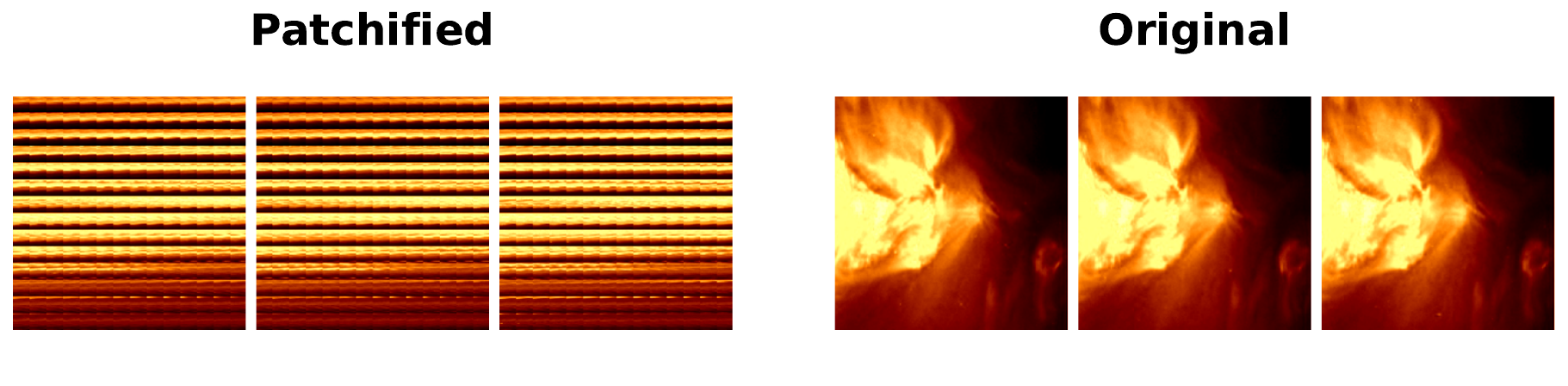}    \label{fig:solar_original_vs_patch_b}
  \end{subfigure}

  \caption{\textbf{Patchified vs.\ original matrices (Solar).} Patchification (Eq.~\eqref{equation:patch}) reshapes local patches into rows of a patch-matrix. In the patchified view (left), many rows/columns become visually near-repetitions due to recurring local textures, while the original matrices (right) remain globally diverse.}
  \label{fig:solar_original_vs_patch}
\end{figure}

\begin{figure}[h!]
  \centering
  \begin{subfigure}[t]{0.30\linewidth}
    \centering
    \includegraphics[width=\linewidth]{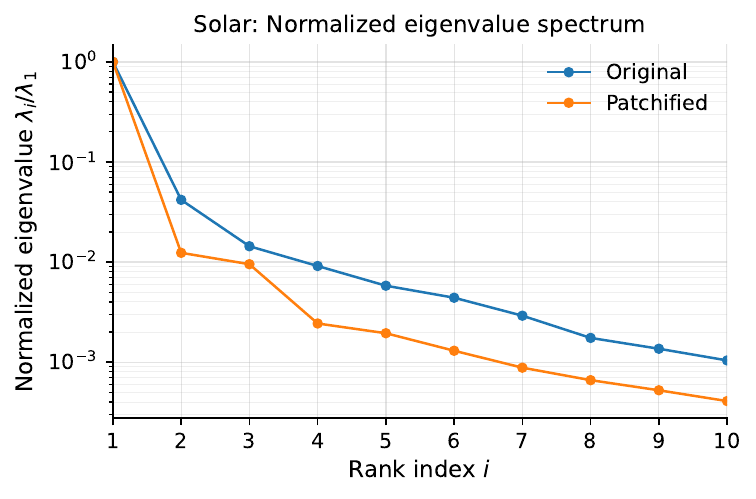}
    \caption{Solar.}
    \label{fig:solar_rank-norm}
  \end{subfigure}%
  \begin{subfigure}[t]{0.30\linewidth}
    \centering
    \includegraphics[width=\linewidth]{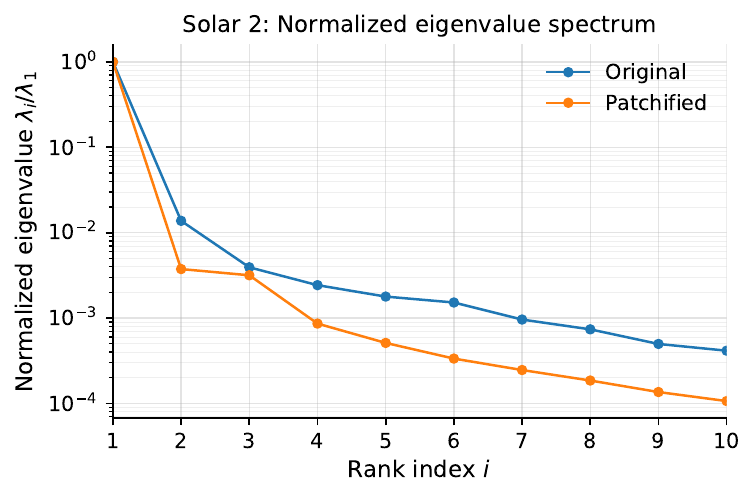}
    \caption{Solar 2.}
    \label{fig:solar2_rank-norm}
  \end{subfigure}
  \caption{\textbf{Consistent rank reduction via patchification.} Normalized spectra ($\lambda_i/\lambda_1$) show that the patchified representation decays faster: for $i\ge2$, the patchified curve lies consistently below the original, indicating that more energy concentrates in the leading components and the representation has a lower effective rank.}
  \label{fig:rank-drop}
\end{figure}

By reorganizing local patches into a patch-matrix via Eq.~\eqref{equation:patch}, patchification can convert matrices that are not \emph{globally} low-rank into an \emph{effectively} lower-rank representation by leveraging local redundancy. As shown in Fig.~\ref{fig:solar_original_vs_patch}, stacking vectorized patches makes recurring local structures (e.g., horizontal bands) appear as near-repeated rows/columns, which increases correlation and reduces the effective rank. Figure~\ref{fig:rank-drop} reports the normalized spectrum $\lambda_i/\lambda_1$ for $i=1,\ldots,10$. Across both Solar and Solar 2, the patchified curve is consistently below the original, indicating faster spectral decay and a lower effective rank.

\textbf{Patchification for Solar datasets.}
For Solar Flare experiments, we apply deterministic non-overlapping patchification before Stage~I. Given \(X_i\in\mathbb{R}^{H\times W}\), we set
\[
p=\operatorname{round}\!\big((HW)^{1/4}\big),\qquad
H_c=\lfloor H/p\rfloor p,\qquad
W_c=\lfloor W/p\rfloor p .
\]
Each matrix is cropped to \(H_c\times W_c\), divided into non-overlapping \(p\times p\) patches, and rearranged into a matrix of size
\[
m_1=\frac{H_cW_c}{p^2},\qquad m_2=p^2 .
\]
For the \(200\times200\) Solar/Solar 2 data, this gives \(p=14\), \(H_c=W_c=196\), and patched matrices of size \(196\times196\). Generated samples are mapped back to the cropped image domain by the inverse patch operation before visualization and metric computation. Patchification is not used for LSPF or synthetic datasets. In all reported comparisons where patchification is used, CoreFlow and all baselines use the same patch size, cropping rule, inverse mapping, and evaluation domain.

\subsection{Synthetic data generation}
\label{appendix:sim_data}

In all simulated data synthesis, we set $m_1 = m_2 = 200$ and rank $R=24$. We generate all synthetic benchmarks from a common form
\begin{equation*}
M \;=\; U S V^\top,\qquad U,V\in\mathbb{R}^{200\times 24},\; S\in\mathbb{R}^{24\times 24}
\end{equation*}
where $U$ and $V$ are fixed within each case and the core $S$ is sampled independently per matrix. We generate $N=1000$ matrices per case and save $(U,V)$ as ground truth for subspace recovery evaluation.

\paragraph{Discrete Cosine Transform (DCT) basis.}
When a case uses a DCT basis, we build an \emph{orthonormal} discrete cosine transform matrix $B\in\mathbb{R}^{m_1\times m_2}$ and take its first $R$ columns. Concretely, for indices $i\in\{0,\dots,m_1{-}1\}$ and $k\in\{0,\dots,m_2{-}1\}$,
\[
B_{k,i} \;=\; \alpha_k \cos\Bigl(\frac{\pi}{n}(i+\tfrac12)k\Bigr),
\qquad
\alpha_0=\sqrt{\tfrac1n},\;\; \alpha_k=\sqrt{\tfrac{2}{n}}\ \ (k\ge 1),
\]
 using the corresponding $n=m_1$ or $n=m_2$. And we set:
 \[
 U = B_{[:,\,0{:}R]},\ V = B_{[:,\,0{:}R]}.
 \]
The DCT-II columns correspond to cosine modes with increasing spatial frequency. Also, restricting to the first $R$ modes biases $M$ toward smooth and low-frequency structure.

\begin{figure}[h!]
    \centering
    \begin{minipage}[t]{0.48\textwidth}
        \centering
        \includegraphics[width=\textwidth]{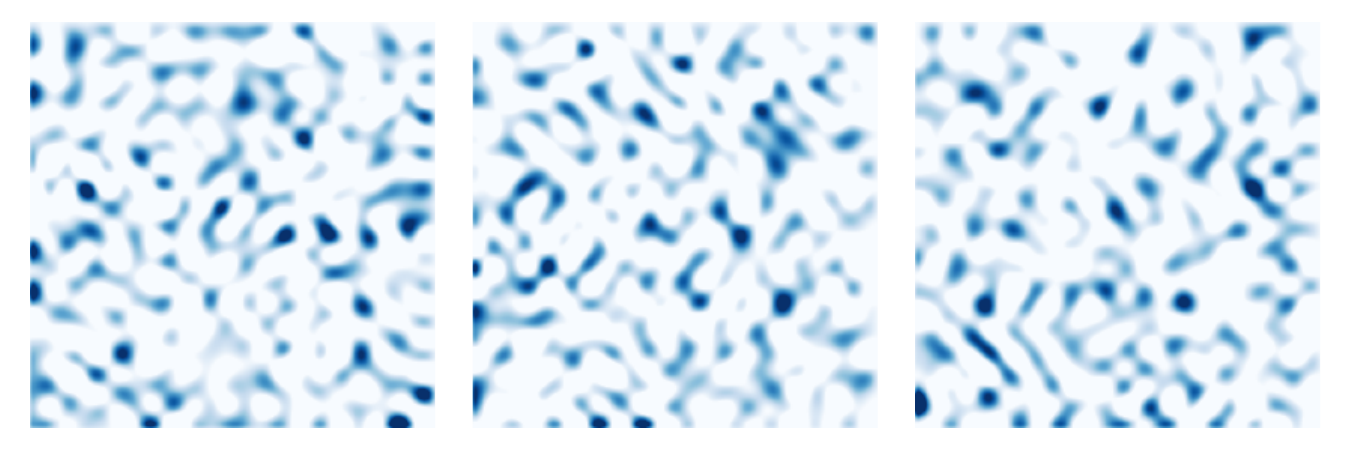}
        \caption{Case Blobs}
        \label{fig:case blobs}
    \end{minipage}
    \hfill
    \begin{minipage}[t]{0.48\textwidth}
        \centering
        \includegraphics[width=\textwidth]{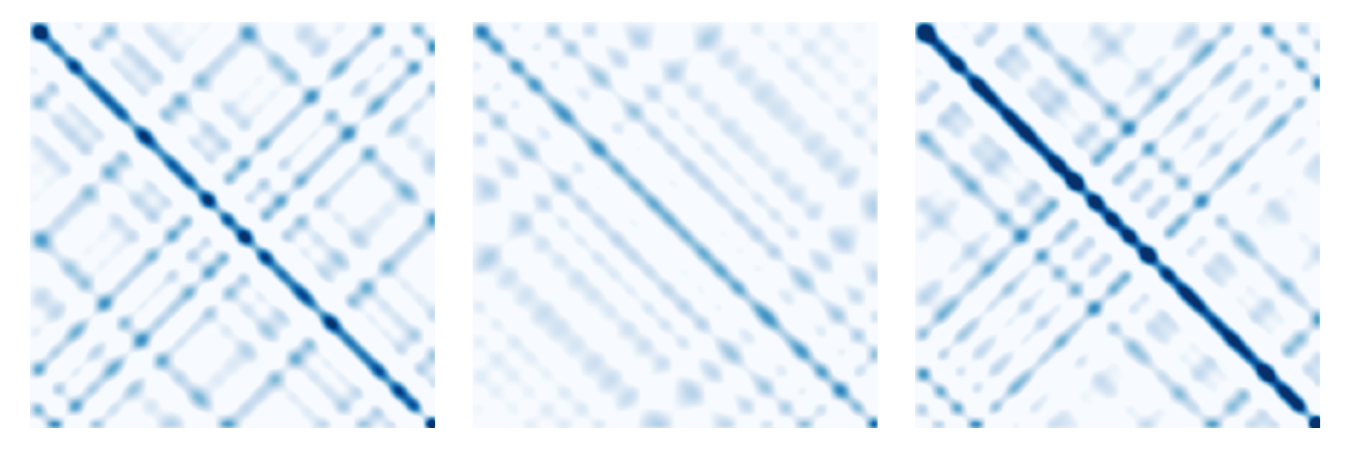}
        \caption{Case Bands}
        \label{fig:case bands}
    \end{minipage}
\end{figure}


\paragraph{Case Blobs.}
We use DCT bases for $U,V$ and sample an i.i.d.\ Gaussian core
\[
S \;=\; 1.5\,Z,\quad Z\overset{\text{i.i.d.}}{\sim}\mathcal{N}(0,I).
\]
Because energy is expressed in low-frequency DCT modes, the resulting matrices appear as smooth blob-like fields. See Figure \ref{fig:case blobs}.


\paragraph{Case Bands.}
We use DCT bases for $U,V$ and sample a \emph{diagonal} core with random mode activation, resulting in structured band patterns and varying effective rank. For each sample, we draw strengths
\[
s_r \sim 1.5\,\mathcal{N}(0,1)+3.0,\qquad r=1,\dots,R,
\]
and independent activations $a_r\sim\mathrm{Bern}(p_r)$ with $p_r$ increasing linearly from $0.2$ to $0.9$ across modes. We then set
\[
S=\mathrm{diag}(s_1a_1,\dots,s_Ra_R).
\]
This yields coherent banded and diagonal structure. Also, the number of active modes (i.e., effective rank) varies across samples. See Figure \ref{fig:case bands}.


\paragraph{Case Waves.}
We use DCT bases for $U,V$ but sample a structured core that concentrates energy on a few low-frequency interactions and adds a mild ``band blur'' in core space. Specifically, let $k_{\max}=\max(4,\lfloor R/3\rfloor)$ and $n_{\mathrm{comp}}=4$. For each sample we initialize $S=0$, then:
(i) draw $(k_j,\ell_j)$ uniformly from $\{0,\dots,k_{\max}{-}1\}^2$ for $j=1,\dots,n_{\mathrm{comp}}$ and add amplitudes $A_j\sim \mathcal{N}(0,1.2^2)$ via $S_{k_j\ell_j}\mathrel{+}=A_j$;
(ii) for band width $b=2$, add Gaussian noise to the first two super- and sub-diagonals:
\[
S_{p,p+d}\mathrel{+}=0.15\,\xi_{p,d},\quad
S_{p+d,p}\mathrel{+}=0.15\,\tilde{\xi}_{p,d},
\qquad d=1,2,
\]
where $\xi_{p,d},\tilde{\xi}_{p,d}\overset{\text{i.i.d.}}{\sim}\mathcal{N}(0,1)$. This produces smooth wave-like patterns in the generated matrices. See Figure \ref{fig:case waves}.

\begin{figure}[h!]
    \centering
    \begin{minipage}[t]{0.48\textwidth}
        \centering
        \includegraphics[width=\textwidth]{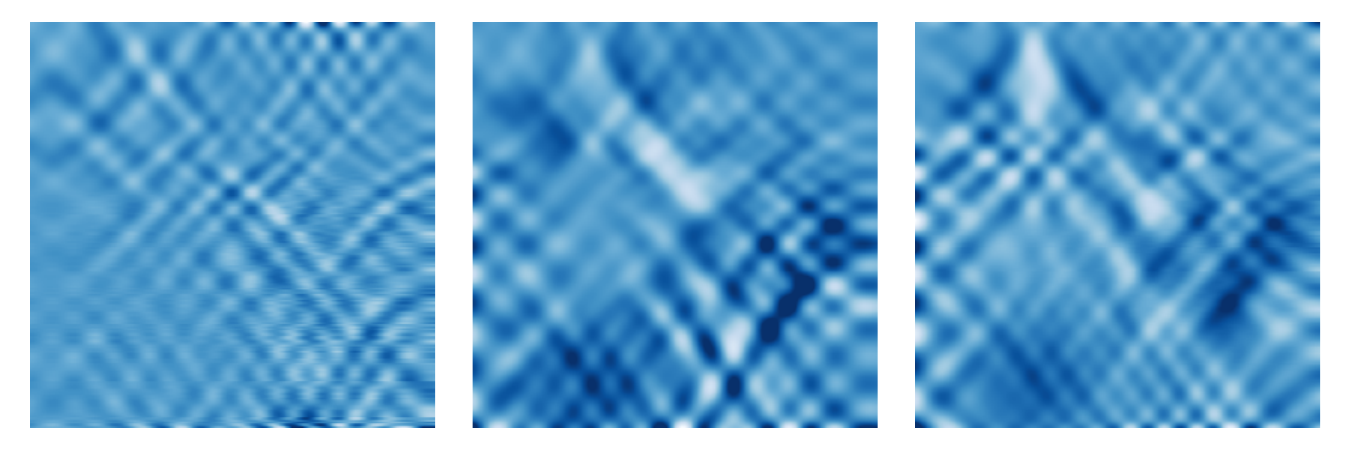}
        \caption{Case Waves}
        \label{fig:case waves}
    \end{minipage}
    \hfill
    \begin{minipage}[t]{0.48\textwidth}
        \centering
        \includegraphics[width=\textwidth]{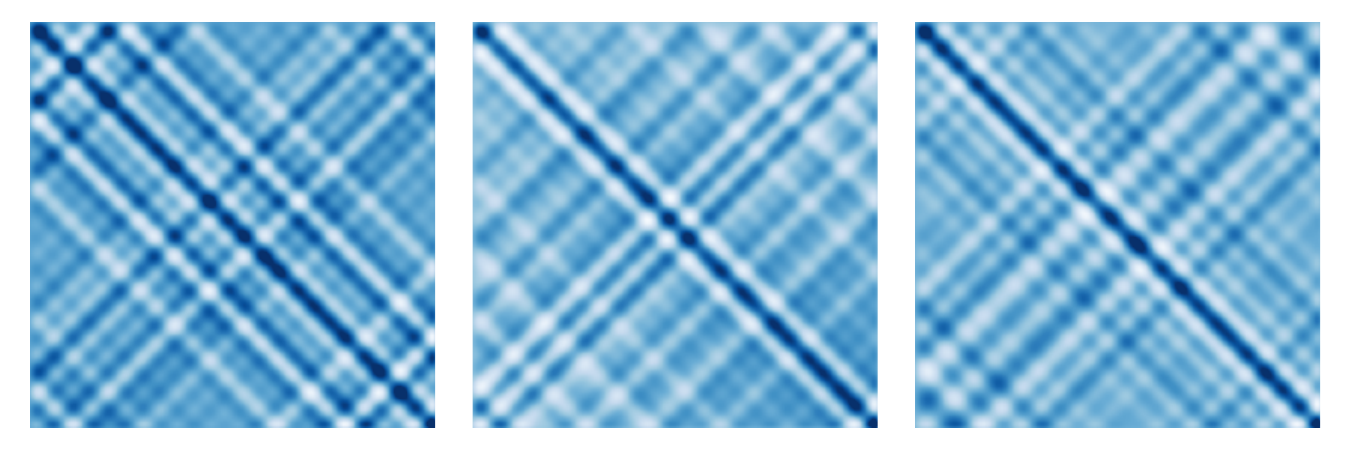}
        \caption{Case crosshatch}
        \label{fig:case crosshatch}
    \end{minipage}
\end{figure}


\paragraph{Case Crosshatch.}
We use DCT bases for $U,V$ and sample diagonal cores with stochastic activation (similar to Case Bands, but with different parameters), yielding strong crosshatched textures and varying effective rank. Specifically, we draw
\[
s_r \sim \mathcal{N}(2.5,1.1^2),\qquad a_r\sim\mathrm{Bern}(p_r),\qquad
p_r \text{ increases linearly from } 0.15 \text{ to } 0.85,
\]
and set $S=\mathrm{diag}(s_1a_1,\dots,s_Ra_R)$. See Figure \ref{fig:case crosshatch}.

\subsection{Neural Network Architecture and Computing Resources}
\label{appendix:NN details}

\begin{table}[h!]
\centering
\caption{Neural Networks parameter counts across different datasets and rank settings. Notably, all models compared share the exact same network structure.}
\renewcommand{\arraystretch}{1.15}
\setlength{\tabcolsep}{8pt}
\begin{tabular}{lcc}
\toprule
\textbf{Setting} & \textbf{Rank} & \textbf{Parameters} \\
\midrule
Case A (Blobs) & 24 & 195,169 \\
Case B (Bands) & 24 & 195,169 \\
Case C (Waves) & 24 & 195,169 \\
Case E (Crosshatch) & 24 & 195,169 \\
LSPF & 56 & 564,481 \\
LSPF & 80 & 614,657 \\
Solar / Solar~2 & 60 & 564,481 \\
Solar / Solar~2 & 120 & 564,481 \\
Solar / Solar~2 & 200 & 614,657 \\
\bottomrule
\end{tabular}
\label{tab:param_count}
\end{table}

Our generative model in Stage II parameterizes the continuous-time velocity field
$v_\theta(x,t)$ on the low-dimensional core space using a lightweight time-conditioned U-Net. 
Stage I does not use a neural network: the shared row and column subspaces $U \in \mathrm{St}(m_1,R)$ and $V \in \mathrm{St}(m_2,R)$ are optimized directly on the Stiefel manifold, and each matrix is represented in the core space by
$s=\mathrm{vec}(U^\top M V) \in \mathbb{R}^{R^2}$. Therefore, only Stage II velocity field is implemented using a Neural Network.

\textbf{Velocity network.}
Given a core vector $x \in \mathbb{R}^{d}$ with $d=R^2$ and time $t \in [0,1]$, we reshape $x$ into a single-channel square feature map and predict the velocity field $v_\theta(x,t) \in \mathbb{R}^{d}$ using a compact U-Net. Time is encoded by a two-layer MLP and injected into every residual block through an additive bias. Each residual block uses
GroupNorm, SiLU activations, two $3\times 3$ convolutions, and a residual skip connection. When the input and output channel dimensions differ, the skip path uses a $1\times 1$ convolution. Downsampling is implemented by average pooling followed by a $3\times 3$ convolution, and upsampling uses nearest-neighbor interpolation followed by a $3\times 3$ convolution. The network includes encoder skip connections, two bottleneck residual blocks, and a final GroupNorm--SiLU--$3\times 3$ output layer. If the reshaped core image size is not divisible by the downsampling factor, we pad symmetrically before the U-Net and crop back to the original size at the output.

\textbf{Architecture hyperparameters.}
The architecture is determined by the following quantities:
\begin{itemize}
    \item \textit{Core rank} $R$, which sets the core dimension $d=R^2$.
    \item \textit{U-Net base width}: $\texttt{base\_channels}=32$.
    \item \textit{Channel multipliers}: either $\texttt{[1,1]}$ for smaller core grids or $\texttt{[1,1,2]}$ for larger ones.
    \item \textit{Residual blocks per resolution}: $\texttt{num\_res\_blocks}=1$.
    \item \textit{Time embedding width}: $\texttt{time\_emb\_dim}=128$.
\end{itemize}

\textbf{Sampling.}
To generate samples, we solve the learned ODE $\dot{x}(t)=v_\theta(x,t)$ from a Gaussian initial state in the core space using an RK4 solver with 101 time steps, and then map the generated core back to the matrix space through the learned subspaces.

\textbf{Baseline architectures.} To ensure a fully comparable evaluation, all baselines except SMG-Core use exactly the same neural network architecture and shared hyperparameters in each experiment. The corresponding parameter counts are reported in Table~\ref{tab:param_count}.


\textbf{Compute.}
All experiments were conducted on a single NVIDIA V100 GPU.
Because the velocity network is shared across methods, Figure~3 in the main text uses the learned-dimension ratio only as a proxy for \emph{Stage~II CNF training cost}.
However, CoreFlow is a two-stage method, so its full training cost also includes Stage~I subspace learning.
Table~\ref{tab:compute_full_pipeline} reports end-to-end training times.

\begin{table*}[h!]
\centering
\caption{
Measured total training time on a single NVIDIA V100 GPU.
For CoreFlow, the total includes both Stage~I subspace training and Stage~II core-space CNF training.
For MissDiff, MissFlow, and CSDM, we report the actual generative model's training time until validation-loss convergence.
}
\label{tab:compute_full_pipeline}
\small
\resizebox{0.97\textwidth}{!}{%
\begin{tabular}{llccccc}
\toprule
Dataset & Method & \(\rho\) & \(p_{\mathrm{miss}}\) & Stage I (min) & Stage II to convergence (min) & Total (min) \\
\midrule
\multirow{9}{*}{Solar / Solar 2}
& CoreFlow & 9\%  & 0\%  & 4.42 & 4.05  & 8.47  \\
& CoreFlow & 9\%  & 20\% & 8.85 & 4.05  & \textbf{12.90} \\
& CoreFlow & 9\%  & 40\% & 8.85 & 4.05  & \textbf{12.90} \\
& CoreFlow & 36\% & 0\%  & 3.17 & 13.23 & \textbf{16.40} \\
& CoreFlow & 36\% & 20\% & 4.76 & 13.23 & \textbf{17.99} \\
& CoreFlow & 36\% & 40\% & 6.35 & 13.23 & \textbf{19.58} \\
\cmidrule(lr){2-7}
& MissDiff / MissFlow & 100\% & all & -- & 37.70 & 37.70 \\
& CSDM & 9\%  & all & -- & 8.29  & \textbf{8.29}  \\
& CSDM & 36\% & all & -- & 23.81 & 23.81 \\
\midrule
\multirow{4}{*}{LSPF}
& CoreFlow & 49\% & 0\%  & 11.36 & 30.20 & \textbf{41.56} \\
& CoreFlow & 49\% & 20\% & 22.73 & 30.20 & \textbf{52.93} \\
& CoreFlow & 49\% & 40\% & 22.73 & 30.20 & \textbf{52.93} \\
\cmidrule(lr){2-7}
& MissDiff / MissFlow & 100\% & all & -- & 83.20 & 83.20 \\
\bottomrule
\end{tabular}%
}
\end{table*}

On Solar and Solar~2, the measured Stage-I training speed is 10.5 epochs/s for \(R=120\) (\(\rho=36\%\)) and 11.3 epochs/s for \(R=60\) (\(\rho=9\%\)). Based on the actual number of Stage-I epochs required under each compression and missingness setting, this corresponds to Stage-I costs of 3.17/4.76/6.35 minutes at \(p_{\mathrm{miss}}=0/20/40\%\) for \(\rho=36\%\), and 4.42/8.85/8.85 minutes for \(\rho=9\%\). Adding the corresponding measured Stage-II times then gives the total CoreFlow training time.

On LSPF, Stage~I runs at 2.5 epochs/s on average. Based on the actual number of Stage-I epochs required under each missingness setting, this corresponds to 10.0/20.0/20.0 minutes for \(p_{\mathrm{miss}}=0/20/40\%\), respectively. Adding the corresponding measured Stage-II time then gives the total CoreFlow training time. For the other baselines, Table~\ref{tab:compute_full_pipeline} reports their actual Stage-II wall-clock time to convergence directly. For the simulations, Stage~I is negligible at \(p_{\mathrm{miss}}=0\%\) because Tucker spectral initialization is used directly, and costs about 8.62 minutes under missingness (1500 epochs at 2.9 epochs/s).

\subsection{SMG-Core implementation details}
\label{sec:baseline_gaussiancore_bayes}

\paragraph{Why an adaptation is needed.}
The original BayeSMG framework is designed for \emph{matrix completion}: it performs conditional inference for a partially observed matrix under an SMG prior, using the observed entries to infer posterior uncertainty about the unobserved ones. Our task here is different: we study \emph{unconditional generation} of new matrices from a training distribution. Therefore, the original BayeSMG procedure is not directly applicable as a generative baseline, since at generation time there is no partially observed test matrix to condition on. To obtain a controlled low-dimensional baseline for our setting, we fix the shared subspaces learned in Stage~I and use the BayeSMG Gaussian variance-prior structure only to model the core distribution. This yields SMG-Core, which isolates the effect of replacing CoreFlow's nonlinear core generator by a Gaussian core model under the same shared geometry.

\textit{Normal--Inverse--Wishart prior.}
We place a conjugate Normal--Inverse--Wishart (NIW) prior on the unknown mean $\mu\in\mathbb{R}^d$
and covariance $\Sigma\in\mathbb{R}^{d\times d}$:
\begin{equation*}
\Sigma \sim \mathrm{InvWishart}(\nu_0, \Psi_0),
\qquad
\mu \mid \Sigma \sim \mathcal{N}\!\left(\mu_0,\; \frac{1}{\kappa_0}\Sigma\right),
\end{equation*}
where $\kappa_0>0$, $\nu_0>d-1$, and $\Psi_0\succ 0$.

\textit{Posterior.}
Given data $\{x_i\}_{i=1}^N$, define the empirical mean $\bar{x}=\frac{1}{N}\sum_i x_i$ and scatter matrix
\begin{equation*}
S \;=\; \sum_{i=1}^N (x_i-\bar{x})(x_i-\bar{x})^\top.
\end{equation*}
The NIW posterior is again NIW with parameters
\begin{align*}
\kappa_N &= \kappa_0 + N,
&
\mu_N &= \frac{\kappa_0 \mu_0 + N\bar{x}}{\kappa_N},
\\
\nu_N &= \nu_0 + N,
&
\Psi_N &= \Psi_0 + S + \frac{\kappa_0 N}{\kappa_N}(\bar{x}-\mu_0)(\bar{x}-\mu_0)^\top.
\end{align*}

\textit{Posterior predictive sampling.}
A Bayesian posterior predictive sample can be drawn in either of the following equivalent ways:

(i) Two-stage sampling:
\begin{equation*}
\Sigma \sim \mathrm{InvWishart}(\nu_N,\Psi_N),
\qquad
\mu \sim \mathcal{N}\!\left(\mu_N,\frac{1}{\kappa_N}\Sigma\right),
\qquad
x_{\mathrm{new}} \sim \mathcal{N}(\mu,\Sigma).
\end{equation*}

(ii) Collapsed predictive distribution:
Integrating out $(\mu,\Sigma)$ yields a multivariate Student-$t$ distribution:
\begin{equation*}
x_{\mathrm{new}} \sim t_{\nu_N-d+1}\!\left(\mu_N,\; \frac{\kappa_N+1}{\kappa_N(\nu_N-d+1)}\Psi_N\right).
\end{equation*}

We then reshape $x_{\mathrm{new}}$ into a core matrix and decode:
\begin{equation*}
T_{\mathrm{new}} = \mathrm{unvec}(x_{\mathrm{new}}),
\qquad
M_{\mathrm{new}} = U\, T_{\mathrm{new}}\, V^\top.
\end{equation*}

\section{Additional Results}
This appendix provides additional qualitative and quantitative results that complement the main paper.


\subsection{Qualitative cross-method sample comparisons}
\label{appendix:additional results}

\begin{figure}[h!]
    \centering
    \includegraphics[width=0.60\textwidth]{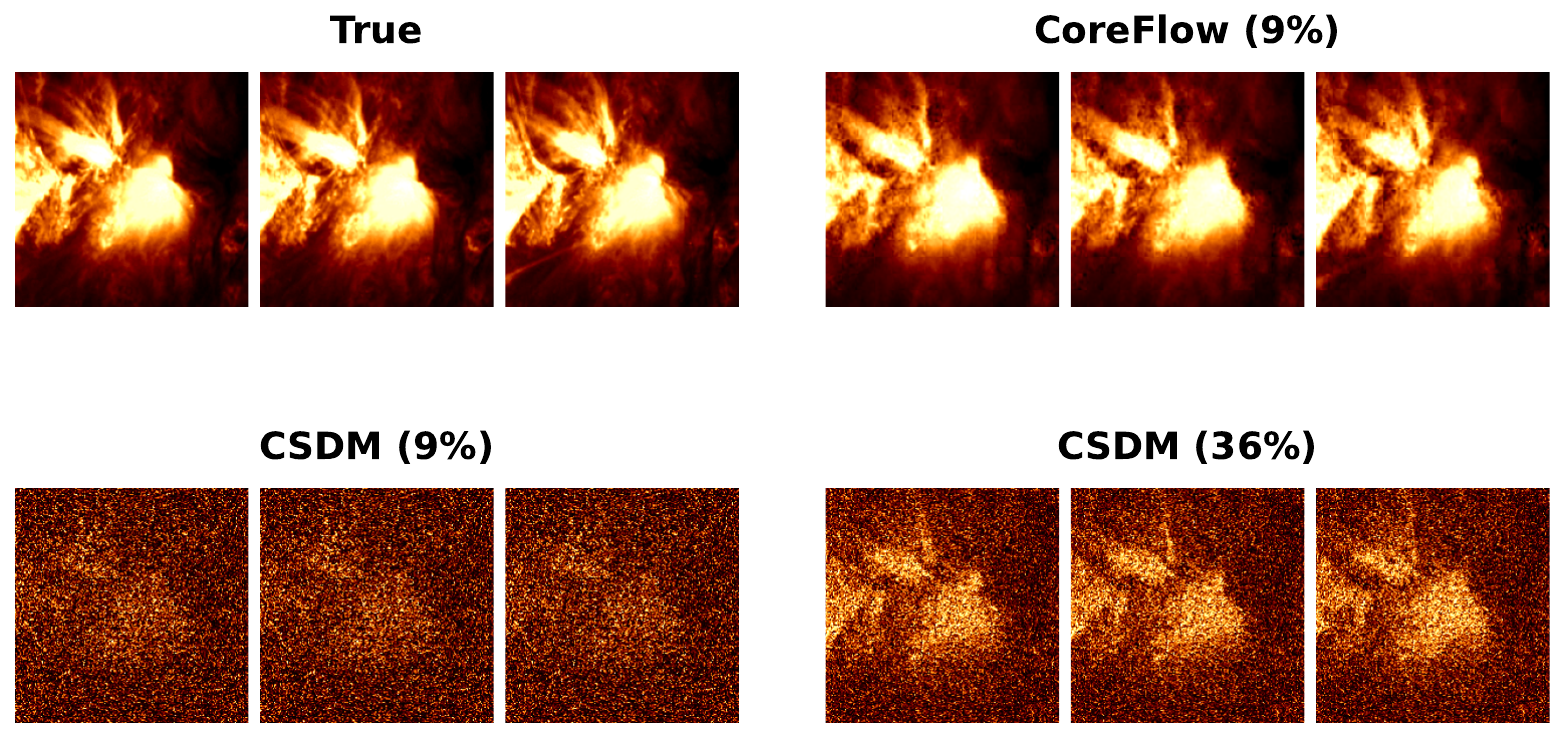}
    \caption{Comparisons of generated samples on Solar Flare with $p_{\mathrm{miss}}=0\%$ (training on complete matrices). CoreFlow is trained at $\mathbf{\rho=9\%}$ of the original dimension, and is compared against CSDM trained at $\rho\in\{9\%,36\%\}$.}
    \label{fig:cross method solar no missing}
\end{figure}



\begin{figure}[h!]
    \centering
    \begin{minipage}[t]{0.48\textwidth}
        \centering
        \includegraphics[width=\textwidth]{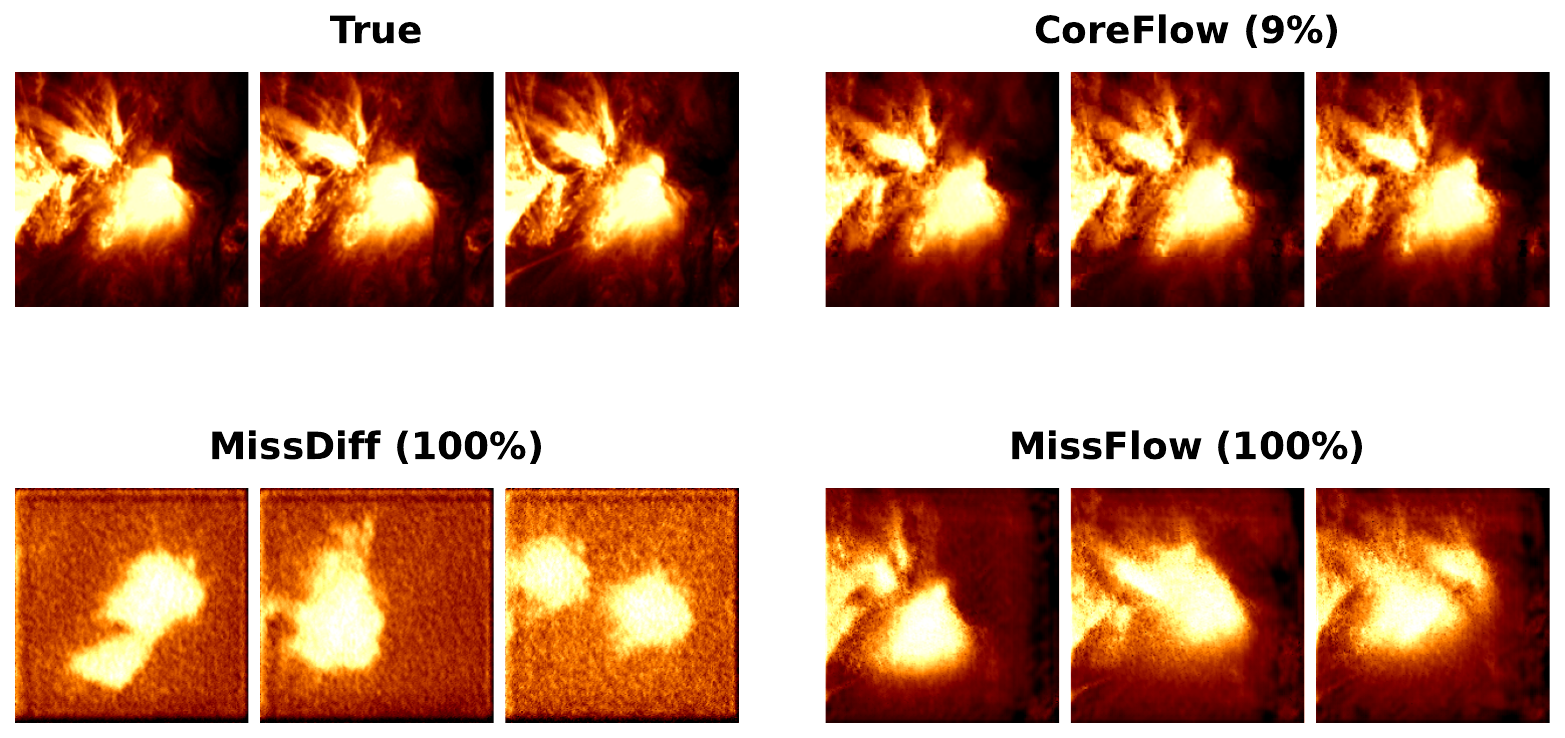}
    \end{minipage}
    \hfill
    \begin{minipage}[t]{0.48\textwidth}
        \centering
        \includegraphics[width=\textwidth]{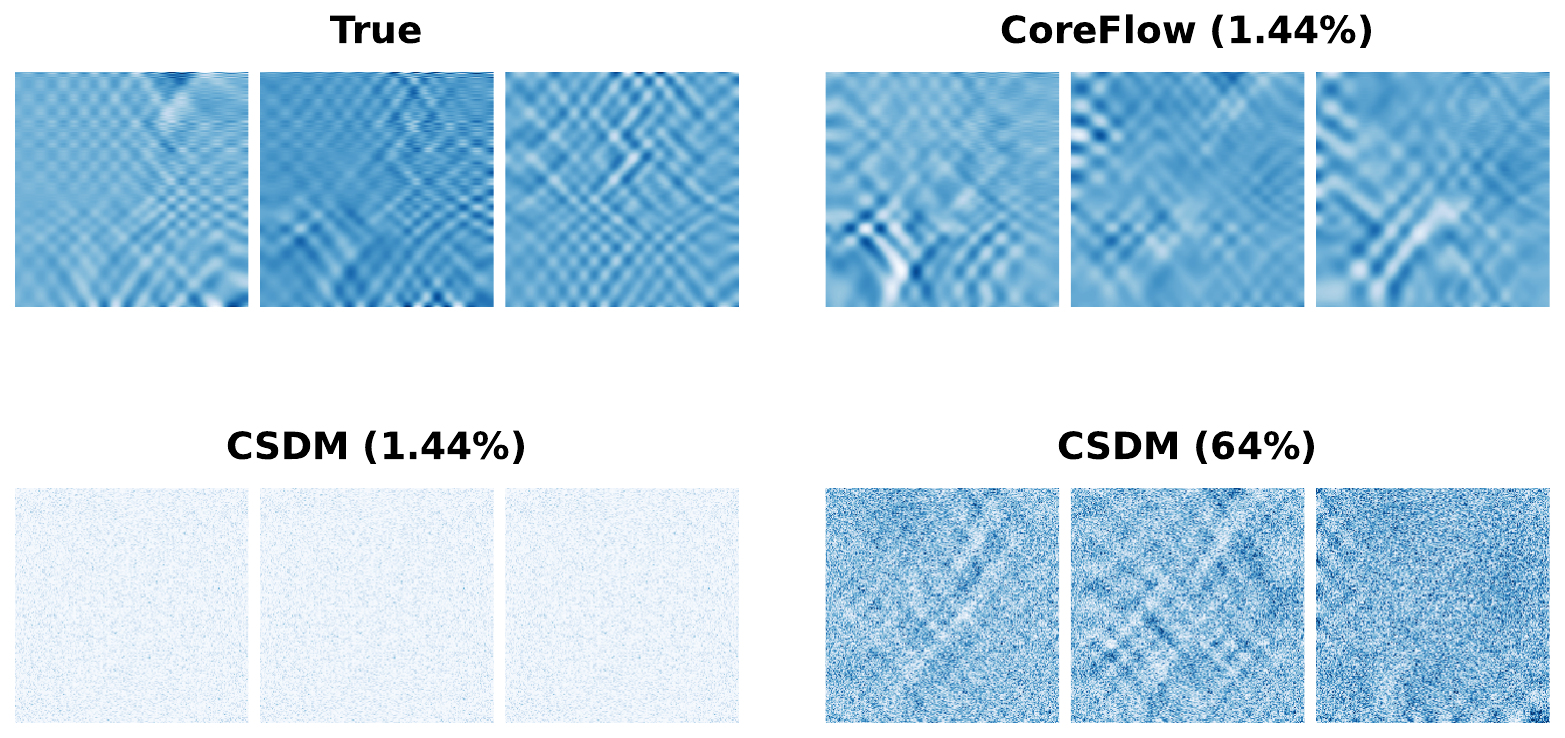}
    \end{minipage}
    
    \caption{
    Comparison of generated samples across methods.
    \textit{Left:} Solar Flare with $p_{\mathrm{miss}}=40\%$. CoreFlow is trained at $\rho=9\%$ of the original dimension, and is compared against MissDiff and MissFlow trained at $\rho=100\%$.
    \textit{Right:} Case Waves with $p_{\mathrm{miss}}=0\%$ (complete training matrices). CoreFlow is trained at $\rho=1.44\%$ of the original dimension, and is compared against CSDM trained at $\rho\in\{1.44\%,64\%\}$.
    }
    \label{fig:cross_method_comparison}
\end{figure}

Figures~\ref{fig:cross method solar no missing} and \ref{fig:cross_method_comparison} visualize representative generated matrices under both complete training ($p_{\mathrm{miss}}=0\%$) and missing-entry training.
On Solar Flare with complete training (Figure~\ref{fig:cross method solar no missing}), CoreFlow at $\rho=9\%$ reproduces the salient flare morphology, matching the visual statistics of real samples.
In contrast, CSDM at the same compression ($\rho=9\%$) loses the global structure.
This gap is primarily due to information loss induced by CSDM's compression operator: its compressed measurements may not preserve spatial locality and can discard geometry information at aggressive $\rho$.

Under missing entries (Figure~\ref{fig:cross_method_comparison} \textit{Left}), CoreFlow maintains coherent spatial patterns and realistic intensity variation despite operating in a reduced core space, while full-space baselines (MissDiff and MissFlow) fail to learn the matrix distribution in the $200\times200$ ambient dimension with limited samples.
Finally, in the Waves simulation (Figure~\ref{fig:cross_method_comparison}, \textit{right}), CoreFlow preserves the characteristic smooth oscillatory textures even at extreme compression ($\rho=1.44\%$).

\subsection{Real data: robustness to missing entries}

\begin{table}[h!]
\centering
\caption{Comparisons to baselines with \textit{missingness handling} on real data (Frobenius-norm distribution shift: FrobMeanDiff and FrobStdDiff; lower is better). Column headers denote missing-entry rate. Parentheses report each method's learned-dimension ratio (CoreFlow: $\rho{=}36\%$ for Solar and $\rho{=}49\%$ for LSPF; all ambient-space baselines use $\rho{=}100\%$). Solar is a few-sample regime (300 samples) and LSPF a standard regime (8760 samples).}
\renewcommand{\arraystretch}{1.2}
\setlength{\tabcolsep}{4pt}

\scalebox{0.55}{
\begin{tabular}{lcccccc}
\toprule
& \multicolumn{3}{c}{\textbf{Solar}}
& \multicolumn{3}{c}{\textbf{LSPF}} \\
\cmidrule(lr){2-4}\cmidrule(lr){5-7}

$p_{\mathrm{miss}}$
& 0\% & 20\% & 40\%
& 0\% & 20\% & 40\% \\
\midrule

MissDiff ($\rho=100\%$)      & $(0.85_{\pm .13}, 0.62_{\pm .11})$ & $(37.4_{\pm .19}, 0.72_{\pm .24})$ & $(71.2_{\pm .30}, 0.91_{\pm .26})$ & $(\mathbf{0.21_{\pm .11}}, 0.18_{\pm .08})$ & $(13.5_{\pm .17}, 0.61_{\pm .16})$ & $(16.7_{\pm .21}, 0.82_{\pm .19})$ \\
MissFlow ($\rho=100\%$)      & $(\mathbf{0.13_{\pm .10}}, 0.53_{\pm .10})$ & $(21.2_{\pm .13}, 0.32_{\pm .11})$ & $(54.3_{\pm .14}, 0.41_{\pm .13})$ & $(\mathbf{0.20_{\pm .12}}, \mathbf{0.19_{\pm .09}})$ & $(11.9_{\pm .15}, 0.44_{\pm .16})$ & $(15.9_{\pm .11}, 0.72_{\pm .12})$ \\
\rowcolor{coreflowbg}
\textbf{CoreFlow} ($\rho=36\%/49\%$) & $(0.56_{\pm .28}, \mathbf{0.39_{\pm .21}})$ & $(\mathbf{0.30_{\pm .14}}, \mathbf{0.45_{\pm .09}})$ & $(\mathbf{0.84_{\pm .15}}, \mathbf{0.40_{\pm .08}})$ & $(0.24_{\pm .09}, 0.12_{\pm .06})$ & $(\mathbf{0.67_{\pm .09}}, \mathbf{0.20_{\pm .06}})$ & $(\mathbf{0.98_{\pm .08}}, \mathbf{0.44_{\pm .05}})$ \\
\bottomrule
\end{tabular}}
\label{tab:real frob 1}
\end{table}

\begin{table}[h!]
\centering
\caption{Comparisons to baselines with \textit{missingness handling} on real data (entry-wise moment discrepancies: AbsEntryMeanDiff and AbsEntryStdDiff; lower is better). Column headers denote missing-entry rate. Parentheses report each method's learned-dimension ratio (CoreFlow: $\rho{=}36\%$ for Solar and $\rho{=}49\%$ for LSPF; all ambient-space baselines use $\rho{=}100\%$). Solar is a few-sample regime (300 samples) and LSPF a standard regime (8760 samples). All values are in $\times 10^{-2}$.}
\renewcommand{\arraystretch}{1.2}
\setlength{\tabcolsep}{4pt}

\scalebox{0.55}{
\begin{tabular}{lcccccc}
\toprule
& \multicolumn{3}{c}{\textbf{Solar}}
& \multicolumn{3}{c}{\textbf{LSPF}} \\
\cmidrule(lr){2-4}\cmidrule(lr){5-7}

$p_{\mathrm{miss}}$
& 0\% & 20\% & 40\%
& 0\% & 20\% & 40\% \\
\midrule

MissDiff ($\rho=100\%$)      & $(12.3_{\pm .28}, 5.91_{\pm .29})$ & $(29.3_{\pm .20}, 6.00_{\pm .24})$ & $(39.1_{\pm .25}, 6.96_{\pm .28})$ & $(\mathbf{1.69_{\pm .22}}, 1.74_{\pm .20})$ & $(15.6_{\pm .18}, 2.53_{\pm .20})$ & $(21.7_{\pm .14}, 3.07_{\pm .19})$ \\
MissFlow ($\rho=100\%$)      & $(7.16_{\pm .39}, 5.41_{\pm .38})$ & $(16.4_{\pm .18}, 5.69_{\pm .25})$ & $(33.4_{\pm .19}, 6.68_{\pm .27})$ & $(1.71_{\pm .35}, \mathbf{1.70_{\pm .23}})$ & $(14.8_{\pm .17}, 2.65_{\pm .28})$ & $(20.4_{\pm .12}, 3.23_{\pm .21})$ \\
\rowcolor{coreflowbg}
\textbf{CoreFlow} ($\rho=36\%/49\%$) 
& $(\mathbf{1.02_{\pm .17}}, \mathbf{0.92_{\pm .33}})$ 
& $(\mathbf{0.87_{\pm .08}}, \mathbf{1.19_{\pm .13}})$ 
& $(\mathbf{0.97_{\pm .03}}, \mathbf{1.32_{\pm .04}})$ 
& $(1.87_{\pm .11}, 1.89_{\pm .12})$ 
& $(\mathbf{1.69_{\pm .10}}, \mathbf{1.96_{\pm .09}})$ 
& $(\mathbf{1.62_{\pm .07}}, \mathbf{2.38_{\pm .05}})$ \\
\bottomrule
\end{tabular}}
\label{tab:real entrywise 1}
\end{table}

Tables~\ref{tab:real frob 1} and~\ref{tab:real entrywise 1} report distribution-shift metrics on three real datasets (Solar, Solar~2, and LSPF) under $p_{\mathrm{miss}}\in\{0,20,40\}\%$. On Solar and Solar~2 (few-data regime), full-space baselines can be competitive at $p_{\mathrm{miss}}=0\%$ but degrade sharply as missingness increases, with FrobMeanDiff growing by orders of magnitude at $20\%$. On LSPF (standard regime), all methods are closer at $p_{\mathrm{miss}}=0\%$, but CoreFlow achieves clearly smaller shifts once missingness is introduced ($20\%$/$40\%$).

\subsection{Simulations: distribution matching under controlled structure}





\begin{table}[h!]
\centering
\caption{Comparisons to \textit{low-dimensional} generative baselines on Blobs and Bands simulations (Frobenius-norm distribution shift: FrobMeanDiff and FrobStdDiff; lower is better). Column headers denote missing-entry rate. Parentheses report each method's learned-dimension ratio.}
\renewcommand{\arraystretch}{1.2}
\setlength{\tabcolsep}{4pt}

\scalebox{0.55}{
\begin{tabular}{lcccccc}
\toprule
& \multicolumn{3}{c}{\textbf{Blobs}}
& \multicolumn{3}{c}{\textbf{Bands}} \\
\cmidrule(lr){2-4}\cmidrule(lr){5-7}

$p_{\mathrm{miss}}$
& 0\% & 20\% & 40\%
& 0\% & 20\% & 40\% \\
\midrule

CSDM ($\rho=1.44\%$)
& $(130_{\pm 6.9}, 0.70_{\pm .38})$
& $(155_{\pm 5.1}, 0.76_{\pm .36})$
& $(178_{\pm 4.1}, 0.84_{\pm .31})$
& $(40.5_{\pm 1.3}, 1.61_{\pm .18})$
& $(48.7_{\pm 2.3}, 1.61_{\pm .13})$
& $(55.9_{\pm 1.8}, 1.62_{\pm .18})$ \\
CSDM ($\rho=36\%$)
& $(76.1_{\pm 5.5}, 0.69_{\pm .44})$
& $(93.4_{\pm 4.3}, 0.72_{\pm .34})$
& $(108_{\pm 3.9}, 0.76_{\pm .34})$
& $(23.4_{\pm 1.5}, 1.57_{\pm .33})$
& $(28.9_{\pm 1.8}, 1.56_{\pm .22})$
& $(33.7_{\pm 2.4}, 1.56_{\pm .29})$ \\
CSDM ($\rho=64\%$)
& $(50.6_{\pm 5.1}, 0.71_{\pm .51})$
& $(64.1_{\pm 4.4}, 0.74_{\pm .25})$
& $(75.8_{\pm 5.3}, 0.77_{\pm .24})$
& $(15.6_{\pm 1.1}, 1.56_{\pm .29})$
& $(19.9_{\pm 1.3}, 1.55_{\pm .37})$
& $(23.6_{\pm 1.4}, 1.53_{\pm .34})$ \\
SMG-Core ($\rho=1.44\%$)
& $(70.2_{\pm 6.0}, 2.30_{\pm .94})$
& $(79.7_{\pm 6.3}, 2.15_{\pm .90})$
& $(80.2_{\pm 6.2}, 2.03_{\pm .88})$
& $(14.2_{\pm .80}, 2.28_{\pm .34})$
& $(16.3_{\pm 0.8}, 2.14_{\pm .37})$
& $(19.9_{\pm 1.0}, 2.37_{\pm .36})$ \\
\rowcolor{coreflowbg}
\textbf{CoreFlow} ($\rho=1.44\%$)
& $(\mathbf{0.09_{\pm .06}}, \mathbf{0.20_{\pm .05}})$
& $(\mathbf{0.19_{\pm .06}}, \mathbf{0.12_{\pm .06}})$
& $(\mathbf{0.44_{\pm .07}}, \mathbf{0.15_{\pm .05}})$
& $(\mathbf{0.08_{\pm .06}}, \mathbf{0.12_{\pm .06}})$
& $(\mathbf{0.85_{\pm .08}}, \mathbf{0.23_{\pm .06}})$
& $(\mathbf{0.98_{\pm .07}}, \mathbf{0.47_{\pm .04}})$ \\
\bottomrule
\end{tabular}}
\label{tab:simulation blobs bands frob}
\end{table}

\begin{table}[h!]
\centering
\caption{Comparisons to \textit{low-dimensional} generative baselines on Waves and Crosshatch simulations (Frobenius-norm distribution shift: FrobMeanDiff and FrobStdDiff; lower is better). Column headers denote missing-entry rate. Parentheses report learned-dimension ratio.}
\renewcommand{\arraystretch}{1.2}
\setlength{\tabcolsep}{4pt}

\scalebox{0.55}{
\begin{tabular}{lcccccc}
\toprule
& \multicolumn{3}{c}{\textbf{Waves}}
& \multicolumn{3}{c}{\textbf{Crosshatch}} \\
\cmidrule(lr){2-4}\cmidrule(lr){5-7}

$p_{\mathrm{miss}}$
& 0\% & 20\% & 40\%
& 0\% & 20\% & 40\% \\
\midrule

CSDM ($\rho=1.44\%$)
& $(83.5_{\pm 5.6}, 9.84_{\pm 1.3})$
& $(85.4_{\pm 5.0}, 10.1_{\pm 1.3})$
& $(87.4_{\pm 4.3}, 10.3_{\pm 1.2})$
& $(62.4_{\pm 3.4}, 7.67_{\pm 1.2})$
& $(63.8_{\pm 3.0}, 7.82_{\pm 1.1})$
& $(65.31_{\pm 2.5}, 8.00_{\pm 1.2})$ \\
CSDM ($\rho=36\%$)
& $(34.8_{\pm 2.2}, 4.23_{\pm .96})$
& $(41.8_{\pm 2.9}, 5.03_{\pm .85})$
& $(49.6_{\pm 2.7}, 5.94_{\pm .72})$
& $(25.6_{\pm 1.3}, 3.32_{\pm .64})$
& $(30.9_{\pm 1.2}, 3.94_{\pm .66})$
& $(36.8_{\pm 1.2}, 4.63_{\pm .56})$ \\
CSDM ($\rho=64\%$)
& $(17.8_{\pm 2.7}, 2.22_{\pm .32})$
& $(26.5_{\pm 3.4}, 3.23_{\pm .51})$
& $(36.4_{\pm 2.1}, 4.38_{\pm .72})$
& $(12.88_{\pm 1.6}, 1.79_{\pm .45})$
& $(19.46_{\pm 1.5}, 2.56_{\pm .43})$
& $(26.9_{\pm 1.6}, 3.45_{\pm .51})$ \\
SMG-Core ($\rho=1.44\%$)
& $(11.6_{\pm 1.3}, 4.10_{\pm .47})$
& $(14.8_{\pm 1.9}, 3.79_{\pm .29})$
& $(15.7_{\pm 2.0}, 4.97_{\pm .87})$
& $(5.94_{\pm .90}, 2.14_{\pm .40})$
& $(7.28_{\pm 1.1}, 2.31_{\pm .44})$
& $(9.75_{\pm 1.0}, 2.76_{\pm .48})$ \\
\rowcolor{coreflowbg}
\textbf{CoreFlow} ($\rho=1.44\%$)
& $(\mathbf{0.55_{\pm .38}}, \mathbf{0.53_{\pm .32}})$
& $(\mathbf{0.47_{\pm .38}}, \mathbf{2.87_{\pm .37}})$
& $(\mathbf{3.53_{\pm .63}}, \mathbf{0.87_{\pm .39}})$
& $(\mathbf{0.96_{\pm .31}}, \mathbf{0.75_{\pm .30}})$
& $(\mathbf{1.07_{\pm .39}}, \mathbf{0.95_{\pm .38}})$
& $(\mathbf{1.60_{\pm .48}}, \mathbf{0.96_{\pm .36}})$ \\
\bottomrule
\end{tabular}}
\label{tab:simulation waves crosshatch frob}
\end{table}

Tables~\ref{tab:simulation blobs bands frob} and~\ref{tab:simulation waves crosshatch frob} report Frobenius-norm distribution shift on synthetic families; Tables~\ref{tab:simulation blobs bands entrywise} and~\ref{tab:simulation waves crosshatch entrywise} report entry-wise moment discrepancies. Across Blobs/Bands/Waves/Crosshatch, CoreFlow at $\rho=1.44\%$ achieves near-zero discrepancies and remains small as $p_{\mathrm{miss}}$ increases.
In contrast, low-dimensional generative baselines exhibit substantially larger shifts.


\begin{table}[h!]
\centering
\caption{Comparisons to \textit{low-dimensional} generative baselines on Blobs and Bands simulations (entry-wise moment discrepancies: AbsEntryMeanDiff and AbsEntryStdDiff; lower is better). Column headers denote missing-entry rate. Parentheses report each method's learned-dimension ratio. All values are in $\times 10^{-2}$.}
\renewcommand{\arraystretch}{1.2}
\setlength{\tabcolsep}{4pt}

\scalebox{0.55}{
\begin{tabular}{lcccccc}
\toprule
& \multicolumn{3}{c}{\textbf{Blobs}}
& \multicolumn{3}{c}{\textbf{Bands}} \\
\cmidrule(lr){2-4}\cmidrule(lr){5-7}

$p_{\mathrm{miss}}$
& 0\% & 20\% & 40\%
& 0\% & 20\% & 40\% \\
\midrule

CSDM ($\rho=1.44\%$)
& $(63.6_{\pm 2.0}, 34.1_{\pm 1.8})$
& $(84.8_{\pm 2.4}, 24.3_{\pm 1.3})$
& $(106_{\pm 2.2}, 15.5_{\pm 1.2})$
& $(20.5_{\pm 1.2}, 12.2_{\pm .14})$
& $(27.2_{\pm 1.1}, 9.10_{\pm .09})$
& $(34.0_{\pm 1.2}, 3.67_{\pm .12})$ \\
CSDM ($\rho=36\%$)
& $(37.5_{\pm 1.9}, 17.2_{\pm 1.1})$
& $(50.0_{\pm 1.7}, 12.3_{\pm 1.3})$
& $(62.6_{\pm 1.5}, 6.25_{\pm 1.6})$
& $(12.3_{\pm 1.2}, 6.37_{\pm .18})$
& $(16.2_{\pm 1.1}, 4.70_{\pm .19})$
& $(20.2_{\pm 1.2}, 1.53_{\pm .20})$ \\
CSDM ($\rho=64\%$)
& $(26.9_{\pm .75}, 11.0_{\pm 1.4})$
& $(35.8_{\pm .81}, 7.87_{\pm 1.3})$
& $(106_{\pm 1.2}, 15.5_{\pm .20})$
& $(8.98_{\pm .91}, 4.21_{\pm .08})$
& $(11.7_{\pm .04}, 3.12_{\pm .13})$
& $(14.5_{\pm 1.3}, 0.87_{\pm .05})$ \\
SMG-Core ($\rho=1.44\%$)
& $(34.9_{\pm 1.2}, 3.92_{\pm .82})$
& $(36.5_{\pm 1.2}, 3.84_{\pm .86})$
& $(41.3_{\pm 1.6}, 4.02_{\pm 1.0})$
& $(8.26_{\pm .57}, 1.02_{\pm .07})$
& $(10.3_{\pm .63}, 1.74_{\pm .11})$
& $(12.9_{\pm .89}, 1.55_{\pm .12})$ \\
\rowcolor{coreflowbg}
\textbf{CoreFlow} ($\rho=1.44\%$)
& $(\mathbf{0.98_{\pm .04}}, \mathbf{0.74_{\pm .02}})$
& $(\mathbf{0.99_{\pm .03}}, \mathbf{0.72_{\pm .01}})$
& $(\mathbf{1.01_{\pm .04}}, \mathbf{0.72_{\pm .02}})$
& $(\mathbf{0.29_{\pm .03}}, \mathbf{0.19_{\pm .01}})$
& $(\mathbf{0.45_{\pm .03}}, \mathbf{0.67_{\pm .03}})$
& $(\mathbf{0.52_{\pm .02}}, \mathbf{0.86_{\pm .02}})$ \\
\bottomrule
\end{tabular}}
\label{tab:simulation blobs bands entrywise}
\end{table}

\begin{table}[h!]
\centering
\caption{Comparisons to \textit{low-dimensional} generative baselines on Waves and Crosshatch simulations (entry-wise moment discrepancies: AbsEntryMeanDiff and AbsEntryStdDiff; lower is better). Column headers denote missing-entry rate. Parentheses report each method's learned-dimension ratio. All values are in $\times 10^{-2}$.}
\renewcommand{\arraystretch}{1.2}
\setlength{\tabcolsep}{4pt}

\scalebox{0.55}{
\begin{tabular}{lcccccc}
\toprule
& \multicolumn{3}{c}{\textbf{Waves}}
& \multicolumn{3}{c}{\textbf{Crosshatch}} \\
\cmidrule(lr){2-4}\cmidrule(lr){5-7}

$p_{\mathrm{miss}}$
& 0\% & 20\% & 40\%
& 0\% & 20\% & 40\% \\
\midrule

CSDM ($\rho=1.44\%$)
& $(44.1_{\pm 1.8}, 8.22_{\pm .75})$
& $(45.2_{\pm 1.4}, 7.14_{\pm .54})$
& $(46.2_{\pm 1.0}, 6.91_{\pm .45})$
& $(31.9_{\pm 1.1}, 7.97_{\pm .17})$
& $(32.7_{\pm 1.0}, 7.21_{\pm .15})$
& $(33.5_{\pm 1.5}, 7.00_{\pm .16})$ \\
CSDM ($\rho=36\%$)
& $(26.0_{\pm .86}, 3.78_{\pm .68})$
& $(29.2_{\pm 1.1}, 6.40_{\pm .75})$
& $(33.7_{\pm .99}, 7.96_{\pm .81})$
& $(18.7_{\pm 1.1}, 10.9_{\pm .36})$
& $(20.9_{\pm 1.2}, 4.72_{\pm .11})$
& $(24.2_{\pm 1.2}, 5.67_{\pm .15})$ \\
CSDM ($\rho=64\%$)
& $(18.5_{\pm .79}, 2.36_{\pm .53})$
& $(21.9_{\pm 1.1}, 8.06_{\pm .92})$
& $(27.8_{\pm 1.1}, 10.6_{\pm .95})$
& $(13.2_{\pm 1.4}, 2.23_{\pm .08})$
& $(15.7_{\pm 1.4}, 5.27_{\pm .13})$
& $(20.0_{\pm 1.4}, 6.84_{\pm .15})$ \\
SMG-Core ($\rho=1.44\%$)
& $(13.1_{\pm .40}, 1.70_{\pm .21})$
& $(17.6_{\pm .47}, 1.99_{\pm .28})$
& $(20.6_{\pm .57}, 2.04_{\pm .39})$
& $(12.7_{\pm .85}, 1.53_{\pm .05})$
& $(14.3_{\pm .94}, 1.87_{\pm .11})$
& $(17.3_{\pm 1.2}, 1.84_{\pm .19})$ \\
\rowcolor{coreflowbg}
\textbf{CoreFlow} ($\rho=1.44\%$)
& $(\mathbf{0.97_{\pm .09}}, \mathbf{1.11_{\pm .13}})$
& $(\mathbf{0.82_{\pm .09}}, \mathbf{2.65_{\pm .13}})$
& $(\mathbf{1.59_{\pm .28}}, \mathbf{1.37_{\pm .14}})$
& $(\mathbf{0.93_{\pm .22}}, \mathbf{0.74_{\pm .07}})$
& $(\mathbf{0.85_{\pm .23}}, \mathbf{0.96_{\pm .09}})$
& $(\mathbf{0.89_{\pm .14}}, \mathbf{0.97_{\pm .07}})$ \\
\bottomrule
\end{tabular}}
\label{tab:simulation waves crosshatch entrywise}
\end{table}

\subsection{Compression-matched comparison on Solar Flare}

\begin{table}[h!]
\centering
\caption{Comparisons to \textit{low-dimensional} baselines on Solar (Frobenius-norm distribution shift: FrobMeanDiff and FrobStdDiff; lower is better). Column headers denote missing-entry rate. Parentheses report each method's learned-dimension ratio.}
\renewcommand{\arraystretch}{1.2}
\setlength{\tabcolsep}{4pt}

\scalebox{0.55}{
\begin{tabular}{lcccccc}
\toprule
& \multicolumn{3}{c}{\textbf{Solar}}
& \multicolumn{3}{c}{\textbf{Solar 2}} \\
\cmidrule(lr){2-4}\cmidrule(lr){5-7}

$p_{\mathrm{miss}}$
& 0\% & 20\% & 40\%
& 0\% & 20\% & 40\% \\
\midrule

CSDM ($\rho=9\%$)        & $(52.9_{\pm 1.8}, 0.98_{\pm .30})$ & $(56.4_{\pm 1.6}, 1.03_{\pm .40})$ & $(60.3_{\pm 1.5}, 1.08_{\pm .62})$ & $(65.9_{\pm 3.8}, 1.81_{\pm 1.0})$ & $(70.2_{\pm 3.3}, 1.92_{\pm 1.1})$ & $(75.1_{\pm 3.1}, 2.04_{\pm 1.1})$ \\
CSDM ($\rho=36\%$)       & $(26.6_{\pm 3.5}, 0.52_{\pm .24})$ & $(32.8_{\pm 3.3}, 0.62_{\pm .25})$ & $(39.9_{\pm 3.0}, 0.72_{\pm .29})$ & $(33.7_{\pm 6.6}, 1.06_{\pm .48})$ & $(41.4_{\pm 6.1}, 1.25_{\pm .47})$ & $(50.2_{\pm 5.5}, 1.43_{\pm .58})$ \\
SMG-Core ($\rho=9\%$)    & $(2.15_{\pm .87}, 0.20_{\pm .06})$ & $(3.02_{\pm .90}, 0.28_{\pm .08})$ & $(3.79_{\pm 1.0}, 0.36_{\pm .18})$ & $(1.03_{\pm .31}, 1.09_{\pm .32})$ & $(1.92_{\pm .34}, 1.30_{\pm .37})$ & $(2.03_{\pm 45}, 1.54_{\pm .58})$ \\
\rowcolor{coreflowbg}
\textbf{CoreFlow} ($\rho=9\%$) & $(\mathbf{0.33_{\pm .19}}, \mathbf{0.14_{\pm .11}})$ & $(\mathbf{0.53_{\pm .08}}, \mathbf{0.05_{\pm .04}})$ & $(\mathbf{0.65_{\pm .10}}, \mathbf{0.11_{\pm .06}})$ & $(\mathbf{0.44_{\pm .29}}, \mathbf{0.93_{\pm .22}})$ & $(\mathbf{0.61_{\pm 13}}, \mathbf{0.40_{\pm .11}})$ & $(\mathbf{0.70_{\pm .13}}, \mathbf{0.74_{\pm .12}})$ \\
\bottomrule
\end{tabular}}
\label{tab:solar frob}
\end{table}

\begin{table}[h!]
\centering
\caption{Comparisons to \textit{low-dimensional} baselines on Solar (entry-wise moment discrepancies: AbsEntryMeanDiff and AbsEntryStdDiff; lower is better). Column headers denote missing-entry rate. Parentheses report each method's learned-dimension ratio. All values are in $\times 10^{-2}$.}
\renewcommand{\arraystretch}{1.2}
\setlength{\tabcolsep}{4pt}

\scalebox{0.55}{
\begin{tabular}{lcccccc}
\toprule
& \multicolumn{3}{c}{\textbf{Solar}}
& \multicolumn{3}{c}{\textbf{Solar 2}} \\
\cmidrule(lr){2-4}\cmidrule(lr){5-7}

$p_{\mathrm{miss}}$
& 0\% & 20\% & 40\%
& 0\% & 20\% & 40\% \\
\midrule

CSDM ($\rho=9\%$)        & $(26.8_{\pm 1.3}, 2.91_{\pm .40})$ & $(28.1_{\pm 1.4}, 4.18_{\pm .22})$ & $(29.6_{\pm 1.4}, 4.82_{\pm .29})$ & $(34.9_{\pm 1.7}, 2.21_{\pm .38})$ & $(36.6_{\pm 1.9}, 4.64_{\pm .45})$ & $(38.6_{\pm 1.6}, 5.74_{\pm .57})$ \\
CSDM ($\rho=36\%$)       & $(19.6_{\pm 1.4}, 2.30_{\pm .31})$ & $(21.4_{\pm 1.5}, 6.53_{\pm .35})$ & $(23.9_{\pm 1.6}, 8.41_{\pm .47})$ & $(25.3_{\pm 1.6}, 1.76_{\pm .50})$ & $(27.9_{\pm 1.1}, 8.55_{\pm .71})$ & $(31.4_{\pm 1.1}, 11.4_{\pm .92})$ \\
SMG-Core ($\rho=9\%$)    & $(1.54_{\pm .49}, 1.44_{\pm .26})$ & $(1.76_{\pm .46}, 1.47_{\pm .21})$ & $(1.81_{\pm .51}, 1.36_{\pm .34})$ & $(1.30_{\pm .15}, 1.27_{\pm .19})$ & $(1.34_{\pm 16}, 1.32_{\pm .19})$ & $(1.48_{\pm .19}, 1.38_{\pm .20})$ \\
\rowcolor{coreflowbg}
\textbf{CoreFlow} ($\rho=9\%$) & $(\mathbf{0.77_{\pm .21}}, \mathbf{0.83_{\pm .10}})$ & $(\mathbf{0.80_{\pm .08}}, \mathbf{0.86_{\pm .03}})$ & $(\mathbf{0.91_{\pm .10}}, \mathbf{0.93_{\pm .15}})$ & $(\mathbf{0.86_{\pm .05}}, \mathbf{0.71_{\pm .06}})$ & $(\mathbf{0.98_{\pm .07}}, \mathbf{0.69_{\pm .04}})$ & $(\mathbf{1.05_{\pm .09}}, \mathbf{0.78_{\pm .04}})$ \\
\bottomrule
\end{tabular}}
\label{tab:solar entrywise}
\end{table}

Tables~\ref{tab:solar frob} and~\ref{tab:solar entrywise} directly compare CoreFlow and low-dimensional baselines at the \emph{same} compression ratio $\rho=9\%$ on Solar and Solar~2, isolating the contribution of the modeling pipeline beyond dimensionality reduction.
At matched $\rho$, CoreFlow dramatically reduces both Frobenius-norm distribution shift (Table~\ref{tab:solar frob}) and entry-wise moment discrepancies (Table~\ref{tab:solar entrywise}) across all missing rates.

\subsection{Stage-I subspace recovery diagnostic via principal angles}
Finally, Table~\ref{tab:simulation all UV} shows that CoreFlow recovers the ground-truth row/column subspaces accurately, with small mean and maximum principal angles even at $p_{\mathrm{miss}}=40\%$.

\begin{table}[h!]
\centering
\caption{Simulation: \textbf{CoreFlow} principal-angle discrepancy in degrees. Each entry reports
$(\overline{\theta}_U/\theta_{U,\max},\ \overline{\theta}_V/\theta_{V,\max})$.
All values lie in $[0,90]$; lower is better. Columns denote the missing-entry rate $p_{\mathrm{miss}}$.}
\renewcommand{\arraystretch}{1.15}
\setlength{\tabcolsep}{6pt}
\scalebox{0.60}{
\begin{tabular}{l:ccc}
\toprule
& \multicolumn{3}{c}{\textbf{$p_{\mathrm{miss}}$}} \\
\cmidrule(lr){2-4}
\textbf{Case} & 0\% & 20\% & 40\% \\
\midrule
\textbf{Blobs} $(\rho=1.44\%)$
& $(0.00/0.00,\ 0.00/0.00)$
& $(0.01/0.07,\ 0.04/0.13)$
& $(0.04/0.14,\ 0.03/0.15)$ \\
\textbf{Bands} $(\rho=1.44\%)$
& $(0.00/0.00,\ 0.00/0.00)$
& $(0.28/0.74,\ 0.27/0.74)$
& $(0.58/1.98,\ 0.58/1.97)$ \\
\textbf{Waves} $(\rho=1.44\%)$
& $(0.00/0.00,\ 0.00/0.00)$
& $(0.04/0.33,\ 0.05/0.89)$
& $(0.05/0.40,\ 0.08/0.89)$ \\
\textbf{Crosshatch} $(\rho=1.44\%)$
& $(0.00/0.00,\ 0.00/0.00)$
& $(0.20/4.63,\ 0.21/4.62)$
& $(0.24/4.63,\ 0.22/4.63)$ \\
\bottomrule
\end{tabular}}
\label{tab:simulation all UV}
\end{table}

\subsection{Maximum Mean Discrepancy (MMD) metrics}

\begin{table}[h!]
\centering
\caption{
\small Comparison with baselines that explicitly handle missing data on real datasets, measured by MMD (lower is better).
Columns report the missing-entry rate $p_{\mathrm{miss}}$.
Entries report MMD values.
CoreFlow uses a learned-dimension ratio of $\rho=36\%$ for Solar and Solar 2 and $\rho=49\%$ for LSPF, while all ambient-space baselines use $\rho=100\%$.
Solar and Solar 2 correspond to few-sample settings ($n=300$), whereas LSPF is a standard-sample setting ($n=8760$).
Empty entries indicate results not yet collected.
}
\label{tab:real_mmd}
\renewcommand{\arraystretch}{1.12}
\setlength{\tabcolsep}{4pt}

\scalebox{0.66}{
\begin{tabular}{l ccc ccc ccc}
\toprule
& \multicolumn{3}{c}{\textbf{Solar}}
& \multicolumn{3}{c}{\textbf{Solar 2}}
& \multicolumn{3}{c}{\textbf{LSPF}} \\
\cmidrule(lr){2-4} \cmidrule(lr){5-7} \cmidrule(lr){8-10}
$\boldsymbol{p_{\mathrm{miss}}}$
& \textbf{0\%} & \textbf{20\%} & \textbf{40\%}
& \textbf{0\%} & \textbf{20\%} & \textbf{40\%}
& \textbf{0\%} & \textbf{20\%} & \textbf{40\%} \\
\midrule
MissDiff ($\rho=100\%$)
& $0.893$ & $0.917$ & $1.044$
& $0.593$ & $0.915$ & $0.976$
& $\mathbf{0.116}$ & $0.685$ & $0.811$ \\

MissFlow ($\rho=100\%$)
& $0.882$ & $0.786$ & $0.866$
& $0.517$ & $0.932$ & $0.923$
& $\mathbf{0.118}$ & $0.661$ & $0.814$ \\

\rowcolor{coreflowbg}
\textbf{CoreFlow} ($\rho=36\%/49\%$)
& $\mathbf{0.177}$ & $\mathbf{0.193}$ & $\mathbf{0.249}$
& $\mathbf{0.187}$ & $\mathbf{0.267}$ & $\mathbf{0.360}$
& $0.122$ & $\mathbf{0.115}$ & $\mathbf{0.120}$ \\
\bottomrule
\end{tabular}
}
\end{table}

\begin{table}[h!]
\centering
\caption{
\small Comparison with low-dimensional baselines on the Solar datasets, measured by MMD (lower is better).
Columns report the missing-entry rate $p_{\mathrm{miss}}$.
Entries report MMD values.
Parentheses indicate the learned-dimension ratio $\rho$.
Empty entries indicate results not yet collected.
}
\label{tab:solar_mmd}
\renewcommand{\arraystretch}{1.12}
\setlength{\tabcolsep}{5pt}

\scalebox{0.66}{
\begin{tabular}{l ccc ccc}
\toprule
& \multicolumn{3}{c}{\textbf{Solar}}
& \multicolumn{3}{c}{\textbf{Solar 2}} \\
\cmidrule(lr){2-4} \cmidrule(lr){5-7}
$\boldsymbol{p_{\mathrm{miss}}}$
& \textbf{0\%} & \textbf{20\%} & \textbf{40\%}
& \textbf{0\%} & \textbf{20\%} & \textbf{40\%} \\
\midrule
CSDM ($\rho=9\%$)
& $1.197$ & $1.314$ & $1.367$
& $1.206$ & $1.351$ & $1.373$ \\

CSDM ($\rho=36\%$)
& $0.842$ & $0.873$ & $0.905$
& $0.857$ & $0.913$ & $0.970$ \\

SMG-Core ($\rho=36\%$)
& $0.224$ & $0.265$ & $0.293$
& $0.219$ & $0.285$ & $0.426$ \\

\rowcolor{coreflowbg}
\textbf{CoreFlow} ($\rho=36\%$)
& $\mathbf{0.177}$ & $\mathbf{0.193}$ & $\mathbf{0.249}$
& $\mathbf{0.187}$ & $\mathbf{0.267}$ & $\mathbf{0.360}$ \\
\bottomrule
\end{tabular}
}
\end{table}

\begin{table}[h!]
\centering
\caption{
\small Comparison with low-dimensional generative baselines on simulation datasets, measured by MMD (lower is better).
Columns report the missing-entry rate $p_{\mathrm{miss}}$.
Entries report MMD values.
Parentheses indicate the learned-dimension ratio $\rho$.
Empty entries indicate results not yet collected.
}
\label{tab:simulation_mmd}
\renewcommand{\arraystretch}{1.08}
\setlength{\tabcolsep}{3pt}

\scalebox{0.58}{
\begin{tabular}{l ccc ccc ccc ccc}
\toprule
& \multicolumn{3}{c}{\textbf{Blobs}}
& \multicolumn{3}{c}{\textbf{Bands}}
& \multicolumn{3}{c}{\textbf{Waves}}
& \multicolumn{3}{c}{\textbf{Crosshatch}} \\
\cmidrule(lr){2-4} \cmidrule(lr){5-7} \cmidrule(lr){8-10} \cmidrule(lr){11-13}
$\boldsymbol{p_{\mathrm{miss}}}$
& \textbf{0\%} & \textbf{20\%} & \textbf{40\%}
& \textbf{0\%} & \textbf{20\%} & \textbf{40\%}
& \textbf{0\%} & \textbf{20\%} & \textbf{40\%}
& \textbf{0\%} & \textbf{20\%} & \textbf{40\%} \\
\midrule
CSDM ($\rho=1.44\%$)
& $1.206$ & $1.389$ & $1.402$
& $1.071$ & $1.293$ & $1.335$
& $0.913$ & $1.136$ & $1.252$
& $0.988$ & $0.993$ & $1.102$ \\

CSDM ($\rho=64\%$)
& $0.836$ & $0.897$ & $0.943$
& $0.851$ & $0.906$ & $0.988$
& $0.495$ & $0.561$ & $0.607$
& $0.503$ & $0.599$ & $0.684$ \\

SMG-Core ($\rho=1.44\%$)
& $0.814$ & $0.970$ & $0.995$
& $0.702$ & $0.747$ & $0.804$
& $0.276$ & $0.314$ & $0.380$
& $0.220$ & $0.289$ & $0.319$ \\

SMG-Core ($\rho=64\%$)
& $0.312$ & $0.364$ & $0.411$
& $0.323$ & $0.324$ & $0.365$
& $0.127$ & $0.249$ & $0.299$
& $0.100$ & $0.192$ & $0.237$ \\

\rowcolor{coreflowbg}
\textbf{CoreFlow} ($\rho=1.44\%$)
& $\mathbf{0.004}$ & $\mathbf{0.010}$ & $\mathbf{0.017}$
& $\mathbf{0.006}$ & $\mathbf{0.117}$ & $\mathbf{0.212}$
& $\mathbf{0.084}$ & $\mathbf{0.158}$ & $\mathbf{0.101}$
& $\mathbf{0.073}$ & $\mathbf{0.082}$ & $\mathbf{0.095}$ \\
\bottomrule
\end{tabular}
}
\end{table}

\subsection{Ablation Studies: Stage-I Representation with PCA-Flow}
\label{appendix:ablation PCA}

To isolate the effect of our matrix-structured Stage-I representation, we further consider a
PCA-Flow ablation that follows the same two-stage low-dimensional flow idea as CoreFlow, but
replaces the learned row/column subspaces with a standard PCA subspace on flattened matrices.
Given complete training matrices \(M_i\in\mathbb{R}^{m_1\times m_2}\), let
\(x_i=\operatorname{vec}(M_i)\in\mathbb{R}^{D}\), where \(D=m_1m_2\), and define
\[
\mu=\frac{1}{N}\sum_{i=1}^N x_i,\qquad
X_c =
\begin{bmatrix}
(x_1-\mu)^\top\\
\cdots\\
(x_N-\mu)^\top
\end{bmatrix}.
\]
We compute the economy SVD \(X_c=A\Sigma V^\top\) and retain the leading
\(d_{\rm PCA}\) right singular vectors
\[
W_{\rm PCA}=[v_1,\ldots,v_{d_{\rm PCA}}]\in\mathbb{R}^{D\times d_{\rm PCA}},
\qquad
d_{\rm PCA}\le \min\{R^2,N-1\},
\]
with an additional rounding to the nearest perfect square when using the U-Net velocity field.
Each matrix is then encoded as a PCA score
\[
z_i=W_{\rm PCA}^\top(x_i-\mu)\in\mathbb{R}^{d_{\rm PCA}},
\]
normalized as \(\tilde z_i=(z_i-\bar z)/\sigma_z\), and the same CNF architecture is trained in
this PCA-score space by flow matching:
\[
\epsilon\sim\mathcal{N}(0,I),\qquad
t\sim{\rm Unif}(0,1),\qquad
x_t=(1-t)\epsilon+t\tilde z_i,
\]
\[
\mathcal{L}_{\rm PCA\text{-}Flow}(\theta)
=
\mathbb{E}_{i,t,\epsilon}
\left[
\left\|
v_\theta(x_t,t)-(\tilde z_i-\epsilon)
\right\|_2^2
\right].
\]
At generation time, we sample \(\epsilon\sim\mathcal{N}(0,I)\), solve the learned ODE from
\(t=0\) to \(t=1\) to obtain a generated normalized PCA score \(\hat{\tilde z}\), de-normalize it by
\(\hat z=\sigma_z\hat{\tilde z}+\bar z\), and reconstruct the matrix by
\[
\hat M
=
\operatorname{mat}\left(\mu+W_{\rm PCA}\hat z\right).
\]

\begin{table}[h!]
\centering
\caption{
\small Ablation comparison on LSPF between CoreFlow and PCA-Flow using three metrics:
SVRelL2 ($\times 10^{-2}$), FrobMeanDiff, and AbsEntryMeanDiff ($\times 10^{-2}$). This table is for the complete-data case
($p_{\mathrm{miss}}=0$).
CoreFlow uses the matrix-structured representation $M\approx USV^\top$ with $\rho=49\%$,
while PCA-Flow replaces Stage I with flattened PCA and trains the same CNF on PCA scores.
}
\label{tab:lspf_metrics}
\renewcommand{\arraystretch}{1.12}
\setlength{\tabcolsep}{6pt}

\scalebox{0.82}{
\begin{tabular}{l ccc}
\toprule
& \multicolumn{3}{c}{\textbf{LSPF}} \\
\cmidrule(lr){2-4}
\textbf{Method}
& \textbf{SVRelL2}
& \textbf{FrobMeanDiff}
& \textbf{AbsEntryMeanDiff} \\
\midrule

PCA-Flow ($\rho=49\%$)
& $\mathbf{2.54_{\pm 0.23}}$
& $0.34_{\pm 0.15}$
& $2.14_{\pm .08}$ \\

\rowcolor{coreflowbg}
\textbf{CoreFlow} ($\rho=49\%$)
& $2.88_{\pm 0.19}$
& $\mathbf{0.24_{\pm .09}}$
& $\mathbf{1.87_{\pm .11}}$ \\

\bottomrule
\end{tabular}
}
\end{table}

\begin{table}[h!]
\centering
\caption{
\small Ablation comparison on simulation datasets between CoreFlow and PCA-Flow using three metrics:
SVRelL2 ($\times 10^{-2}$), FrobMeanDiff, and AbsEntryMeanDiff ($\times 10^{-2}$).
This table is for the complete-data case
($p_{\mathrm{miss}}=0$).
CoreFlow uses the matrix-structured representation $M\approx USV^\top$ with $\rho=1.44\%$,
while PCA-Flow replaces Stage I with flattened PCA and trains the same CNF on PCA scores.
}
\label{tab:simulation_pcaflow_ablation}
\renewcommand{\arraystretch}{1.12}
\setlength{\tabcolsep}{3.5pt}

\scalebox{0.56}{
\begin{tabular}{l ccc ccc ccc ccc}
\toprule
& \multicolumn{3}{c}{\textbf{Blobs}}
& \multicolumn{3}{c}{\textbf{Bands}}
& \multicolumn{3}{c}{\textbf{Waves}}
& \multicolumn{3}{c}{\textbf{Crosshatch}} \\
\cmidrule(lr){2-4} \cmidrule(lr){5-7} \cmidrule(lr){8-10} \cmidrule(lr){11-13}
\textbf{Method}
& {\tiny \textbf{SVRelL2}} & {\tiny \textbf{FrobMeanDiff}} & {\tiny \textbf{AbsEntryMeanDiff}}
& {\tiny \textbf{SVRelL2}} & {\tiny \textbf{FrobMeanDiff}} & {\tiny \textbf{AbsEntryMeanDiff}}
& {\tiny \textbf{SVRelL2}} & {\tiny \textbf{FrobMeanDiff}} & {\tiny \textbf{AbsEntryMeanDiff}}
& {\tiny \textbf{SVRelL2}} & {\tiny \textbf{FrobMeanDiff}} & {\tiny \textbf{AbsEntryMeanDiff}} \\
\midrule

PCA-Flow ($\rho=1.44\%$)
& $1.77_{\pm 0.60}$ & $0.55_{\pm 0.25}$ & $2.94_{\pm 0.08}$
& $24.45_{\pm 1.26}$ & $0.33_{\pm 0.21}$ & $0.71_{\pm 0.12}$
& $7.74_{\pm 0.98}$ & $2.39_{\pm 2.16}$ & $1.76_{\pm 0.76}$
& $8.07_{\pm 0.92}$ & $1.71_{\pm 1.29}$ & $1.56_{\pm 0.37}$ \\

\rowcolor{coreflowbg}
\textbf{CoreFlow} ($\rho=1.44\%$)
& $\mathbf{0.43_{\pm 0.12}}$ & $\mathbf{0.09_{\pm .06}}$ & $\mathbf{0.98_{\pm .04}}$
& $\mathbf{4.45_{\pm 0.74}}$ & $\mathbf{0.08_{\pm .06}}$ & $\mathbf{0.29_{\pm .03}}$
& $\mathbf{3.47_{\pm 0.17}}$ & $\mathbf{0.55_{\pm .38}}$ & $\mathbf{0.97_{\pm .09}}$
& $\mathbf{3.23_{\pm 0.50}}$ & $\mathbf{0.96_{\pm .31}}$ & $\mathbf{0.93_{\pm .22}}$ \\

\bottomrule
\end{tabular}
}
\end{table}

\textbf{Key findings.} PCA-Flow is a useful ablation because it follows the same high-level principle as CoreFlow. In complete-data settings, PCA-Flow can obtain similar,
though slightly weaker generation quality, suggesting that low-dimensional flow modeling itself is
important. However, PCA-Flow is a flattened and geometry-agnostic representation: it treats each
matrix as a vector and reconstructs only through a global linear PCA subspace. Consequently, it does
not preserve the row/column factorization structure \(USV^\top\) and does not explicitly separate shared
row and column subspaces from sample-specific core variation. Besides, it cannot naturally handle incomplete training matrices because PCA requires complete vectors before the projection
\(W_{\rm PCA}^\top(x-\mu)\) is well-defined. By contrast, CoreFlow learns a matrix-structure-aware
representation \(M\approx USV^\top\), trains the CNF on the induced core \(S=U^\top M V\), and
naturally supports missing entries through the masked Stage-I objective and alternating completion.
Overall, the ablation study PCA-Flow confirms the value of low-dimensional flow modeling, while the stronger CoreFlow
architecture is essential for geometry-preserving generation and robustness under missingness.

\end{document}